
\documentclass[nohyperref]{article}

\usepackage{microtype}
\usepackage{graphicx}
\usepackage{subfigure}
\usepackage{booktabs} 

\usepackage[hypertexnames=false]{hyperref}


\usepackage[accepted]{icml2022}


\usepackage{amsmath}
\usepackage{amssymb}
\usepackage{mathtools}
\usepackage{amsthm}

\usepackage[capitalize,noabbrev]{cleveref}

\theoremstyle{plain}
\newtheorem{theorem}{Theorem}[section]

\newtheorem{lemma}[theorem]{Lemma}
\newtheorem{corollary}[theorem]{Corollary}
\theoremstyle{definition}
\newtheorem{definition}[theorem]{Definition}

\theoremstyle{remark}

\usepackage[textsize=tiny]{todonotes}

\global\long\def\RR{\mathbb{R}}
\newcommand{\PolyS}{{\sc PolySketch} }
\newcommand{\SRHT}{{\sc SRHT} }
\newcommand{\algoname}[1]{\textnormal{\textsc{#1}}}
\usepackage{bbm}
\newcommand{\EE}{\mathbb{E}}
\newenvironment{proofof}[1]{\noindent{\bf Proof of #1:}}{\hfill$\qed$\par}
\usepackage{tikz}
\usetikzlibrary{decorations.pathmorphing}
\usetikzlibrary{decorations.markings}
\usetikzlibrary{shapes.gates.logic.US,trees,positioning,arrows}

\tikzset{
	arn/.style = {circle, white, draw=black, fill=gray!30, inner sep = 10.5},
	arn_t/.style = {circle, white, draw=black, very thick, fill=gray!30, inner sep = 11.0},
	arn_l/.style = {circle, white, draw=black, very thick, fill=black, inner sep = 2},
	photon/.style={draw=black, very thick, dashed},
	electron/.style={draw=black, very thick},
	tr/.style={buffer gate US,thick,draw,fill=gray!60,rotate=90,	anchor=east,minimum width=2.25cm},
	br/.style={buffer gate US,thick,draw,fill=gray!60,rotate=90,	anchor=east,minimum width=4.5cm},
	brr/.style={buffer gate US,draw,fill=gray!60,rotate=90,	anchor=east,minimum width=4.5cm, opacity = 0.6},
	trr/.style={buffer gate US,thick,draw,fill=gray!60,rotate=90,	anchor=east,minimum width=2.25cm, opacity = 0.6},
	trrr/.style={buffer gate US,draw,fill=white!60,rotate=90,	anchor=east,minimum width=2.25cm, opacity = 0.5}
}
\newcommand{\jj}{\mathbf{j}}
\newcommand{\ii}{\mathbf{i}}
\usepackage{makecell}

\usepackage{enumitem}
\setlist[enumerate]{itemsep=0.5pt, wide=\parindent}
\setlist[itemize]{topsep=0pt,wide=\parindent}

\icmltitlerunning{Leverage Score Sampling for Tensor Product Matrices in Input Sparsity Time}

\newcommand{\alglinelabel}{%
	\addtocounter{ALC@line}{-1}
	\refstepcounter{ALC@line}
	\label
}

\usepackage{cleveref}

\makeatletter
\providecommand\theHALG@line{\thealgorithm.\arabic{ALG@line}}
\newcommand{\ALG@lineautorefname}{Line}
\def\BState{\State\hskip-\ALG@thistlm}
\makeatother

\begin{document}
	
	\twocolumn[
	\icmltitle{Leverage Score Sampling for Tensor Product Matrices in Input Sparsity Time}
	
	
	
	\icmlsetsymbol{equal}{*}
	
	\begin{icmlauthorlist}
		\icmlauthor{David P. Woodruff}{equal,yyy}
		\icmlauthor{Amir Zandieh}{equal,comp}
	\end{icmlauthorlist}
	
	\icmlaffiliation{yyy}{Max-Planck-Institut für Informatik}
	\icmlaffiliation{comp}{Carnegie Mellon University}
	
	\icmlcorrespondingauthor{David Woodruff}{dwoodruf@cs.cmu.edu}
	\icmlcorrespondingauthor{Amir Zandieh}{azandieh@mpi-inf.mpg.de}
	
	\icmlkeywords{Machine Learning, ICML}
	
	\vskip 0.3in
	]
	
	
	
	\printAffiliationsAndNotice{\icmlEqualContribution} 
	
	\begin{abstract}
		We propose an input sparsity time sampling algorithm that can spectrally approximate the Gram matrix corresponding to the $q$-fold column-wise tensor product of $q$ matrices using a nearly optimal number of samples, improving upon all previously known methods by poly$(q)$ factors. 
		Furthermore, for the important special case of the $q$-fold self-tensoring of a dataset, which is the feature matrix of the degree-$q$ polynomial kernel, the leading term of our method’s runtime is proportional to the size of the input dataset and has no dependence on $q$.
		Previous techniques either incur poly$(q)$ slowdowns in their runtime or remove the dependence on $q$ at the expense of having sub-optimal target dimension, and depend quadratically on the number of data-points in their runtime. 
		Our sampling technique relies on a collection of $q$ partially correlated random projections which can be simultaneously applied to a dataset $X$ in total time that only depends on the size of $X$, and at the same time their $q$-fold Kronecker product acts as a near-isometry for any fixed vector in the column span of $X^{\otimes q}$. 
		We also show that our sampling methods generalize to other classes of kernels beyond polynomial, such as Gaussian and Neural Tangent kernels.
	\end{abstract}
	
	\section{Introduction}
	\label{sec:intro}
	
	In many learning problems such as regression or PCA, one is given a \emph{feature} (or design) matrix $\Phi \in \RR^{m \times n}$ and needs to compute the inverse or singular value decomposition (SVD) of the Gram matrix $\Phi^\top \Phi$. However, the feature matrices $\Phi$, particularly the features that correspond to kernel functions, often have a massive (sometimes infinite) number of rows, which makes the storage and computations involving $\Phi^\top \Phi$ prohibitively expensive. This has motivated a long line of work on approximating the Gram matrix $\Phi^\top \Phi$ by a low-rank matrix~\cite{williams2001using,rahimi2007random,avron2014subspace,alaoui2014fast,cohen2015uniform,musco2017recursive,avron2017random}.
	
	In this work, we focus on feature matrices whose columns are tensor products of a large number of arbitrary vectors, i.e., $\Phi=X^{(1)} \otimes X^{(2)} \otimes \ldots X^{(q)}$ for datasets $X^{(1)}, \ldots X^{(q)} \in \RR^{d \times n}$ (for tensor product notations see Definitions~\ref{def-tensor-prod} and \ref{def-tensor-prod-matrices}).
	Note that the tensor product matrix $\Phi$ defined this way has $d^q$ rows and $n$ columns. 
	This type of tensor product feature matrix $\Phi$ is of great importance in learning applications, particularly because the special case of $X^{(1)} = \cdots = X^{(q)}$ corresponds to the feature matrix of the degree-$q$ polynomial kernel, i.e., the Gram matrix $\Phi^\top \Phi$ is the degree-$q$ polynomial kernel matrix.
	To tackle scalability challenges, much work has focused on \emph{compressing} the large number of rows of such tensor product feature matrices through linear sketching or sampling techniques~\cite{pham2013fast, avron2014subspace, ahle2019oblivious,meister2019tight, zandieh2021scaling, song2021fast}.
	
	The aim of our work is to devise efficient sampling methods for reducing the dimensionality (number of rows) of tensor product matrices while preserving the spectral structure of the Gram matrix. 
	Formally, for any given $\epsilon, \lambda> 0$ and any $X^{(1)}, \ldots X^{(d)} \in \RR^{d \times n}$, if the feature matrix is defined as $\Phi := X^{(1)} \otimes \ldots X^{(q)}$, we want to find a sampling matrix $\Pi \in \RR^{s \times d^d}$, such that the sub-sampled Gram matrix $\Phi^\top \Pi^\top \Pi \Phi$ is an \emph{$(\epsilon , \lambda)$-spectral approximation} to $\Phi^\top \Phi$, i.e.,
	\begin{equation}\label{spectral-bound}
		\frac{\Phi^\top \Phi+\lambda I}{1+\epsilon} \preceq \Phi^\top \Pi^\top \Pi \Phi+\lambda I \preceq \frac{\Phi^\top \Phi+\lambda I}{1-\epsilon}.
	\end{equation}
	Sampling a small number of rows of any matrix $\Phi$ according to its \emph{leverage scores} is known to yield a spectral approximation to $\Phi^\top \Phi$~\cite{li2013iterative}. 
	Our goal is to generate a sampling matrix $\Pi$ according to the ridge leverage scores of $\Phi$ in input sparsity time, i.e., $O\left( \sum_{j=1}^q {\rm nnz}\left( X^{(j)} \right) \right)$. 
	
	\subsection{Our Main Results}
	\begin{itemize}
		\item It is well-known that for any linear sketch or sampling matrix $\Pi$ to satisfy \eqref{spectral-bound}, its number $s$ of rows needs to be proportional to the \emph{statistical dimension} $s_\lambda :=\sum_{i=1}^n\frac{\lambda_i}{\lambda_i+\lambda}$, where the $\lambda_i$ are the eigenvalues of $\Phi^\top \Phi$, \cite{avron2019universal}. \citet{woodruff2020near} recently showed that it is possible to generate a sampling matrix with $s=O\left( \frac{s_\lambda}{\epsilon^2} \log n \right)$ rows that satisfies \eqref{spectral-bound} in time $\widetilde{O}\left( {\rm poly}(q,\epsilon^{-1}) \cdot s_\lambda^2 n + q^{1.5}  \sum_{j=1}^q {\rm nnz}\left( X^{(j)} \right) \right)$. 
		The significance of this result was showing the possibility of decoupling $\epsilon^{-1}$ factors from the leading term in its runtime, i.e., $\sum_{j=1}^q {\rm nnz}\left( X^{(j)} \right)$. The following fundamental question about whether the factor $q^{1.5}$ in the runtime of \cite{woodruff2020near} is necessary has not been answered yet.
		
		\emph{Can we produce a sampling matrix that satisfies \eqref{spectral-bound} in time $\widetilde{O}\left( {\rm poly}(q,\epsilon^{-1}) \cdot s_\lambda^2 n + \sum_{j=1}^q {\rm nnz}\left( X^{(j)} \right) \right)$?}
		
		We answer the above question positively in Theorem~\ref{main-thrm-tensor-prod}, which shows that input sparsity runtime and small $s = O(\epsilon^{-2} s_\lambda \log n)$ number of samples are achievable.
		One advantage of our method is that after computing the sampling matrix $\Pi$ using Theorem~\ref{main-thrm-tensor-prod}, we can simply store $\Pi \Phi$ using $O(n s) = O(\epsilon^{-2} s_\lambda n \log n )$ words of memory, while the memory needed to store the exact Gram matrix $\Phi^\top \Phi$ is $\Theta(n^2)$.
		Thus, our method reduces the memory from quadratic in the dataset size $n$, to linear.
		
		Additionally, for solving many downstream learning tasks such as ridge regression, low-rank approximation, or PCA with the feature matrix $\Phi$, one typically needs to either compute the inverse or the SVD of the Gram matrix $\Phi^\top \Phi$. If $\Phi^\top \Phi$ is pre-computed exactly and is stored in memory, then computing its SVD requires $\Theta(n^3)$ additional runtime. 
		So the total time to compute $\Phi^\top \Phi$ exactly and then find its SVD, for tensor product feature matrices $\Phi$, is $\Theta\left(n \cdot \sum_{j=1}^q {\rm nnz}\left( X^{(j)} \right) + n^3 \right)$.
		In contrast, given the sub-sampled feature matrix $\Pi \Phi$, we can (spectrally) approximate the SVD of $\Phi^\top \Phi$ by the SVD of $(\Pi \Phi)^\top (\Pi \Phi)$, using only $s^2 n = O(\epsilon^{-4} s_\lambda^2 n \log^2 n)$ operations. Thus, using our Theorem~\ref{main-thrm-tensor-prod}, the SVD of $(\Pi \Phi)^\top (\Pi \Phi)$ can be computed in total time $\widetilde{O}\left( {\rm poly}(q,\epsilon^{-1}) \cdot s_\lambda^2 n + \sum_{j=1}^q {\rm nnz}\left( X^{(j)} \right) \right)$. Hence, our method improves the runtime of solving downstream applications, such as ridge regression or PCA from cubic in $n$ to linear.

		\item For the important case when the input datasets are identical $X^{(1)} = X^{(2)} = \cdots X^{(q)} = X$ and the feature matrix $\Phi := X^{\otimes q}$ corresponds to the degree-$q$ polynomial kernel, invoking our Theorem~\ref{main-thrm-tensor-prod} results in a runtime of $\widetilde{O}\left( {\rm poly}(q,\epsilon^{-1}) \cdot s_\lambda^2 n + q \cdot {\rm nnz}( X ) \right)$, which is a factor $q$ larger than the desired input sparsity time.
		On the other hand, \citet{song2021fast} has recently proposed a linear sketch with $\widetilde{O}(n/\epsilon^2)$ rows which satisfies \eqref{spectral-bound} for $\Phi = X^{\otimes q}$ and can be applied in time $\widetilde{O}\left( q^2 \epsilon^{-2} n^2 + nd \right)$, which can be considered to be $\widetilde{O}\left( q^2 \epsilon^{-2} n^2 + {\rm nnz}( X ) \right)$ for dense $X$, i.e., ${\rm nnz}( X ) = \widetilde{\Omega}(nd)$.
		That is, \citet{song2021fast} showed that decoupling the factor of $q$ from ${\rm nnz}( X )$ is possible at the expense of having sub-optimal target dimension $s \approx n/\epsilon^2$ and losing quadratically in $n$ in the runtime. However, it is unclear whether these losses are necessary. Specifically we consider the following fundamental question:
		
		\emph{Can we produce a sampling matrix with $s = O\left( \frac{s_\lambda}{\epsilon^2} \log n \right)$ rows that satisfies \eqref{spectral-bound} for the degree-$q$ polynomial kernel in time $\widetilde{O}\left( {\rm poly}(q,\epsilon^{-1}) \cdot s_\lambda^2 n + {\rm nnz}( X) \right)$?}
		
		We answer the above question positively in Theorem~\ref{main-thrm-selftensor-prod}. 
		Specifically, our Theorem~\ref{main-thrm-selftensor-prod} applies to any matrix $\Phi = X^{\otimes q}$ in time $\widetilde{O}\left( \text{poly}(q,1/\epsilon) \left( s_\lambda^2 + \sqrt{\| K \| / \lambda} \right)\cdot n + dn \right)$, where $K = \Phi^\top \Phi$ is the kernel matrix corresponding to the degree-$q$ polynomial kernel. For large $d$, this runtime is dominated by $\widetilde{O}(dn)$. Thus, for dense datasets with ${\rm nnz}( X ) = \widetilde{\Omega}(nd)$, this runtime has the same asymptotic order as the input sparsity $\text{nnz}(X)$, and is thus optimal up to $\log$ factors.

		\item We generalize our sampling methods to other classes of kernels beyond polynomial, such as the Gaussian and the Neural Tangent Kernels~\cite{jacot2018neural} in Section~\ref{sec:generalization}. For example in Corollary~\ref{main-theorem-Gaussian}, we prove that our sampling method spectrally approximates the Gaussian kernel for dense datasets with squared radius $r$ in time $\widetilde{O}\left( \frac{r^{8}}{\epsilon^4} s_\lambda^2 n + r^3 \sqrt{ \frac{n}{\lambda} } n + nd \right)$. For comparison, the runtime of \cite{song2021fast} is $\widetilde{O}\left( \frac{r^3}{\epsilon^2} \cdot n^2 + nd \right)$, which means that for any $\lambda =\omega (1/{n})$, any $\epsilon = \widetilde{\Omega}(1)$, and any $r = o\left(n^{0.2}\right)$, the result of our Corollary~\ref{main-theorem-Gaussian} is strictly faster. 
		
		\item In addition to our theoretical guarantees, we provide regression and classification experiments in Section~\ref{sec:experiments}, which show our method performs well in practice even for moderately-sized datasets.
		In particular, our empirical results show that our method achieves better testing errors compared to prior results for both Gaussian and Neural Tangent kernels.
		
	\end{itemize}
	
	\subsection{Our Techniques}
	\begin{itemize}
		\item Our algorithm samples $s$ i.i.d. rows of the feature matrix $\Phi = \bigotimes_{j=1}^q X^{(j)}$ according to its ridge leverage scores. We devise a highly optimized version of the recursive sampling framework of \cite{woodruff2020near}, which previously had a runtime of $\widetilde{O}\left( q^{1.5} \sum_{j=1}^q {\rm nnz}\left( X^{(j)} \right) \right)$. 
		By closely examining \cite{woodruff2020near} we isolate the main computational bottleneck of their algorithm and formulate it as a data-structure (DS) problem in Section~\ref{sec:tensornormDS}. 
		In particular, our algorithm crucially relies on an efficient DS that can be constructed in input sparsity time, i.e., $\sum_{j=1}^q {\rm nnz}\left( X^{(j)} \right)$, and enables estimation of $\left\| \left(\bigotimes_{j=1}^q X^{(j)} \right) V \right\|_F^2$ for arbitrary queries $V \in \RR^{n \times r}$ in time ${\rm poly}(q) \cdot {\rm nnz}(V)$. 
		We solve this DS problem in Section~\ref{sec:tensornormDS} and then use it in our importance sampling method for tensor product matrices in Section~\ref{sec:recursive-sampling-tensor-product-features} and Appendix~\ref{appndx:proof-rowsampler-distinct-datasets}.
		
		\item 
		To run our sampling algorithm on the feature matrix $X^{\otimes q}$ of the polynomial kernel in input sparsity time, we crucially need a DS that can be constructed in ${\rm nnz}(X)$ time and can quickly answer queries of the form $\left\| X^{\otimes q} \cdot V \right\|_F^2$.
		Our main technical tool for solving this problem is a collection of sketches $S^{(1)}, S^{(2)}, \ldots S^{(q)}$ which are \emph{correlated} to the extent that they can be simultaneously applied to $X$ in a total of $\widetilde{O}({\rm nnz}(X))$ time, and at the same time are \emph{independent enough} to ensure that $\left\| \left( \bigotimes_{j=1}^q S^{(j)}X \right) V \right\|_F^2 \approx \left\|X^{\otimes q} V \right\|_F^2$. 
		We show in Section~\ref{sec:srht-analysis} that a set of Subsampled Randomized Hadamard Transform (\SRHT) sketches with shared random signs can be applied to any dense dataset $X$ in total time $\widetilde{O}({\rm nnz}(X))$, and also provide an unbiased estimator with small variance for $\left\|X^{\otimes q} V \right\|_F^2$. It is not clear at this point if variants of sparse sketches (e.g., CountSketch) with these properties also exist or not.
		
	\end{itemize}
	
	\subsection{Related Work}
	
	A popular line of work on kernel approximation is based on the Random Fourier Features method~\cite{rahimi2007random}, which works well for shift-invariant kernels and with some modifications can embed the Gaussian
	kernel in constant dimension using a near optimal number of features~\cite{avron2017random}. However, all variants of this method need at least $\Omega( s_\lambda \cdot {\rm nnz}(X))$ runtime which is a factor $s_\lambda$ higher than our desired time.
	
	Another popular kernel approximation approach is the Nystr\"om method~\cite{williams2001using}. While the recursive Nystr\"om sampling of \citet{musco2017recursive} can embed kernel matrices using a near optimal number of landmarks, this method also needs at least $\Omega( s_\lambda \cdot {\rm nnz}(X))$ runtime, which is a factor $s_\lambda$ higher than our desired time.

	For the polynomial kernel, sketching methods have been developed extensively~\cite{avron2014subspace, pham2013fast, woodruff2020near, song2021fast}. For example, \citet{ahle2019oblivious} proposed a subspace embedding for high-degree polynomial kernels as well as the Gaussian kernel. However, their required runtime for the degree $q$ polynomial kernel is at least $\Omega( q \cdot {\rm nnz}(X) )$, which has an undesirable factor $q$.
	Recently, \citet{song2021fast} showed that this sketching method can be accelerated for dense datasets by applying an \SRHT on the input dataset. However, their resulting runtime is $\widetilde{O}(q^2 n^2 + nd)$ which has an undesirable quadratic dependence on $n$.

	\section{Preliminaries}\label{sec:prelim}
	
	Throughout the paper, we use symbols $e_1, e_2, \ldots e_d$ to denote the standard basis vectors in $\RR^d$. For any positive integer $n$, we define the set $[n] = \{ 1,2 ,\ldots n \}$. For a matrix $A$ we use $\| A \|$ to denote its operator norm. We also use $A_{i, \star}$ and $A_{\star, i}$ to denote the $i^{th}$ row and $i^{th}$ column of $A$, respectively.
	We use the notation $\widetilde{O}(f)$ to denote $O( f \cdot {\rm poly}\log f )$, for any $f$.
	For any matrix $\Phi \in \RR^{m \times n}$ and regularizer $\lambda > 0$, the (row) $\lambda$-ridge leverage scores of this matrix are defined as
	\begin{equation}\label{eq-def:leveragescores}
		\ell_{i}^\lambda := \left\| \Phi_{i,\star} (\Phi^\top \Phi + \lambda I)^{-1/2} \right\|_2^2, \text{ for every } i \in [m]. 
	\end{equation}
	
	\begin{definition}[Tensor product]\label{def-tensor-prod}
		Given $x \in \RR^m$ and $y \in \RR^n$ we define the tensor product of these vectors as $x \otimes y = x y^\top$.
		Although tensor products are multidimensional objects, it is convenient to associate them with single-dimensional vectors, so we often associate $x\otimes y$ with $(x_1y_1,x_2y_1, \ldots x_my_1, x_1y_2, \ldots x_my_2, \ldots x_my_n)$.\\
		For shorthand, we use the notation $x^{\otimes p}$ to denote $\underbrace{x\otimes x \otimes \ldots x}_{p \text{ terms}}$, the $p$-fold self-tensoring of $x$.
	\end{definition}
	
	We wish to define the column-wise tensoring of matrices as:
	\begin{definition}\label{def-tensor-prod-matrices}
		Given $A^{(1)}\in \RR^{m_1\times n}, \ldots, A^{(k)}\in \RR^{m_k\times n}$, we define $A^{(1)}\otimes \ldots \otimes A^{(k)}$ to be the matrix in $\RR^{m_1 \ldots m_k \times n}$ whose $j^{\text{th}}$ column is $A^{(1)}_{\star,j} \otimes \ldots \otimes A^{(k)}_{\star,j}$ for every $j\in[n]$.
	\end{definition}
	
	A key property of tensor products that we frequently use is that for any matrices $A, B, C$ with a conforming number of columns, there is a bijective correspondence between the elements of $(A \otimes B) \cdot C^\top$ and $A \cdot ( B \otimes C )^\top$. More precisely, the entry at row $(i,j)$ and column $k$ of $(A \otimes B) \cdot C^\top$ is equal to the entry at row $i$ and column $(j,k)$ of $A \cdot ( B \otimes C )^\top$.
	
	We use a norm-preserving dimensionality reduction technique that can be applied to tensor products in input sparsity time. Specifically, we use the \PolyS transform introduced in \cite{ahle2019oblivious}, which preserves the norms of vectors in $\RR^{d^q}$ and can be applied to tensor product vectors $u_1 \otimes u_2 \otimes \ldots u_q$ very quickly.
	The following lemma follows from Theorem 1.1 of \cite{ahle2019oblivious}.
	
	\begin{lemma}[\PolyS]\label{soda-result}
		For every positive integers $q, d$, and every $\epsilon>0$, there exists a distribution on random matrices $S^q \in \RR^{m \times d^q}$ with $m = O\left(\frac{q}{\epsilon^2} \right)$, called \emph{degree-$q$ \PolyS}, such that,
		\begin{enumerate}
			\item $\Pr\left[ \|S^q Y\|_F^2 \in (1\pm \epsilon)\| Y \|_F^2 \right] \ge 19/20$ for any $Y \in \RR^{d^q \times n}$.
			\item For any vectors $u_1, u_2, \ldots u_q \in \RR^d$, the total time to compute $S^q \left( e_1^{\otimes j} \otimes u_{j+1} \otimes u_{j+2} \otimes \ldots u_q \right)$ for all $j=0, 1, \ldots q$ is
			$O\left( {\frac{q^2 \log^2\frac{q}{\epsilon}}{\epsilon^2}} + \sum_{j=1}^q {\rm nnz}\left(u_{j} \right) \right)$.
		\end{enumerate}
	\end{lemma}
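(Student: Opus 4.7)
The plan is to invoke Theorem 1.1 of Ahle et al.\ (2019), which constructs the degree-$q$ \PolyS $S^q$ as a balanced binary tree $T$ of base tensor sketches: the $q$ leaves hold the $q$ tensor factors, each internal node applies an independent \SRHT-style tensor sketch of appropriate precision to its two children's sketched $m$-vectors, and the root emits an $m$-vector with $m = O(q/\epsilon^2)$. Setting the per-node error to $\epsilon' = O(\epsilon/\sqrt{q})$ and union-bounding over the $O(q)$ internal nodes gives $(1\pm\epsilon)$ Frobenius-norm preservation with probability at least $19/20$, which is exactly item~1.

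For item~2 the plan is to exploit the fact that across the $q+1$ queries the leaf contents change by exactly one entry at a time, so most of $T$'s internal state can be reused. First I would compute the sketch for $j=0$ bottom-up: each leaf sketch $S_i u_i$ is evaluated in $O({\rm nnz}(u_i))$ time, and then each of the $q-1$ internal nodes combines its two children's $m$-vectors via a base tensor \SRHT in $O(m \log m)$ time using the standard Hadamard-based algorithm; I store the sketched value at every internal node. This initial pass costs $O\bigl(\sum_i {\rm nnz}(u_i) + q^2 \log(q/\epsilon)/\epsilon^2\bigr)$. Then I process queries $j=1,2,\ldots,q$ in order: the transition from $j-1$ to $j$ only replaces leaf $j$ (from $u_j$ to $e_1$), so the only stored values that change are those on the root-to-leaf-$j$ path, of which there are $O(\log q)$. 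Recomputing each such node costs $O(m \log m)$, so the incremental cost is $O(\log(q) \cdot m \log m)$ per query and $O(q^2 \log^2(q/\epsilon)/\epsilon^2)$ in total over the $q$ queries, which combined with the initial pass matches the claimed bound.

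The main subtlety will be verifying that a base tensor \SRHT at an internal node truly admits an $O(m \log m)$ algorithm when its two children's $m$-dimensional sketches are already in hand; this relies on the product structure of the Hadamard matrix together with the random sign diagonal and subsampling, and is essentially the content of Ahle et al.'s base sketch analysis. Given this, the data structure for reusing the unaffected internal nodes is straightforward bookkeeping but needs to be laid out explicitly to be convincing, and it is the only nontrivial ingredient beyond directly quoting their Theorem~1.1.
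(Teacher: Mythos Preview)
Your plan for item~2 is essentially the paper's: compute the full binary tree once for $j=0$, cache every internal node's sketched vector, and for each subsequent $j$ update only the $O(\log q)$ nodes on the root-to-leaf path of the changed leaf, each at cost $O(m\log m)$. Two points of divergence from the paper are worth flagging.

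First, for item~1 the paper does not argue via per-node error $\epsilon/\sqrt{q}$ and a union bound. It invokes Corollary~4.1 of Ahle et al.\ to get the $(\epsilon,1/20,2)$-JL-moment property for $S^q$ as a whole, then uses Minkowski's inequality to pass from the vector guarantee to the Frobenius norm of an arbitrary matrix $Y$, and finally Markov's inequality on the second moment to extract the $19/20$ probability bound. Your sketch is vague on both steps: a union bound over $O(q)$ nodes each with constant failure probability does not yield a constant overall success probability, and you do not say how the vector-level JL statement becomes a matrix Frobenius-norm statement. The JL-moment route handles both issues at once.

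Second, the paper's construction (their Figure~1) places independent \textsc{CountSketch} instances at the leaves and degree-$2$ \textsc{TensorSketch} instances at the internal nodes, not SRHT-style sketches. This matters for the runtime claim: a \textsc{CountSketch} leaf applied to $u_j$ really does cost $O({\rm nnz}(u_j))$, whereas an SRHT leaf would cost $\Theta(d\log d)$ regardless of sparsity, so your assertion that ``each leaf sketch $S_i u_i$ is evaluated in $O({\rm nnz}(u_i))$ time'' is inconsistent with the SRHT description you gave. Likewise the $O(m\log m)$ internal-node cost in the paper comes from the FFT-based evaluation of \textsc{TensorSketch} on two length-$m$ inputs, not from a Hadamard transform.
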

	For a proof of Lemma~\ref{soda-result}, see Appendix~\ref{appndx:prelim-sektch}.
	We also use the Subsampled Randomized Hadamard Transform (\SRHT)~\cite{ailon2009fast}, which is a norm-preserving dimensionality reduction with near linear runtime.
	
	\begin{lemma}[\SRHT  Sketch]\label{lem:srht}
		For every positive integer $d$ and every $\epsilon, \delta>0$, there exists a distribution on random matrices $S \in \RR^{m \times d}$ with $m = O\left(\frac{1}{\epsilon^2} \cdot \log \frac{1}{\epsilon\delta} \log \frac{1}{\delta} \right)$, called \SRHT, such that for any matrix $X \in \RR^{d \times n}$,
		$\Pr\left[ \|S X\|_F^2 \in (1\pm \epsilon)\| X \|_F^2 \right] \ge 1 - \delta$.
		Moreover, $S X$ can be computed in time $O\left(mn + nd\log d \right)$.
	\end{lemma}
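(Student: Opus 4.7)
The plan is to realize the SRHT in the classical Ailon--Chazelle form $S = \sqrt{d/m}\, P H D$, where $D \in \RR^{d \times d}$ is diagonal with i.i.d.\ Rademacher entries, $H \in \RR^{d \times d}$ is the normalized Walsh--Hadamard matrix (padding $d$ to a power of two if necessary), and $P \in \RR^{m \times d}$ samples $m$ rows of $[d]$ independently and uniformly at random. Because $HD$ is orthogonal, the matrix $Y := HDX$ satisfies $\|Y\|_F = \|X\|_F$, and $\|SX\|_F^2 = \tfrac{d}{m}\sum_{k=1}^m \|Y_{i_k,\star}\|_2^2$ where $i_1,\ldots,i_m$ are i.i.d.\ uniform on $[d]$. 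This expresses the target quantity, conditional on $D$, as a sum of i.i.d.\ scalar random variables, which is the skeleton of all SRHT-style arguments.

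The first substantive step is a \emph{row-flatness} bound showing that, with probability at least $1 - \delta/2$ over the signs in $D$, every row satisfies $\|Y_{i,\star}\|_2^2 = O\!\left(\|X\|_F^2 \log(d/\delta)/d\right)$. For fixed $i$, I would rewrite $\|Y_{i,\star}\|_2^2$ as a Rademacher quadratic form $\sigma^\top A_i \sigma$ in $\sigma := \mathrm{diag}(D)$, with $A_i := \mathrm{diag}(H_{i,\star})\,XX^\top\,\mathrm{diag}(H_{i,\star})$, observe that $\mathrm{tr}(A_i) = \|X\|_F^2/d$, $\|A_i\|_F \le \|XX^\top\|_F/d$, and $\|A_i\| \le \|X\|^2/d$, and apply the Hanson--Wright inequality followed by a union bound over $i \in [d]$. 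The critical point is that Hanson--Wright is applied to the entire squared row norm at once rather than coordinate by coordinate, which avoids an extraneous factor depending on the number of columns $n$. Conditioned on this flatness event, the variables $W_k := \tfrac{d}{m}\|Y_{i_k,\star}\|_2^2$ are i.i.d.\ with mean $\|X\|_F^2/m$, almost-sure upper bound $O(\|X\|_F^2\log(d/\delta)/m)$, and variance $O(\|X\|_F^4\log(d/\delta)/m^2)$, so Bernstein's inequality applied to $\sum_k W_k = \|SX\|_F^2$ yields the desired multiplicative $(1\pm\epsilon)$ concentration with failure probability at most $\delta/2$ provided $m = \Omega(\epsilon^{-2}\log(1/\delta)\log(d/\delta))$.

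The hard part will be matching the precise sample complexity claimed in the statement, which replaces $\log(d/\delta)$ by $\log(1/(\epsilon\delta))$ and thereby removes the $\log d$ that the row-flatness step naturally produces; the standard workaround is to first compose $S$ with a mild oblivious Johnson--Lindenstrauss reduction that shrinks the ambient dimension to $\mathrm{poly}(1/(\epsilon\delta))$ before the Hadamard stage, so that the union bound over rows pays only $\log(1/(\epsilon\delta))$ at the cost of an additive term that is subsumed by $mn + nd\log d$. For the runtime, $DX$ costs $O(nd)$, column-wise application of the fast Walsh--Hadamard transform costs $O(nd\log d)$, and the final row selection by $P$ produces $SX$ in $O(mn)$ additional operations, which combine into the claimed $O(mn + nd\log d)$ bound.
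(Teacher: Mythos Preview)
The paper does not prove this lemma. It is stated in the preliminaries as a known result, with the SRHT attributed to Ailon and Chazelle, and no argument for it appears anywhere in the body or the appendix; the only SRHT-related proof in the paper is for the separate shared-signs Lemma~\ref{lem:SRHT-shared-sign}. So there is no in-paper proof to compare your proposal against.

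As for the proposal itself: the flatness-plus-Bernstein skeleton is correct and is the standard way to analyze the SRHT; it cleanly yields $m=O\bigl(\epsilon^{-2}\log(d/\delta)\log(1/\delta)\bigr)$, and your runtime accounting is fine. You are also right that closing the gap to the stated $m=O\bigl(\epsilon^{-2}\log\tfrac{1}{\epsilon\delta}\log\tfrac{1}{\delta}\bigr)$ is the only nontrivial point. Your proposed fix, however---pre-composing with a separate oblivious JL map to shrink the ambient dimension---is heavier than necessary and, strictly speaking, produces a sketch that is no longer the SRHT named in the lemma. The more direct route keeps the sketch intact: the sampled indices $i_1,\dots,i_m$ are independent of $D$, and for any \emph{single} row the Hanson--Wright (or Khintchine) tail $\Pr_D\bigl[\|Y_{i,\star}\|_2^2>C\|X\|_F^2/d\bigr]\le e^{-cC}$ carries no $d$-dependence. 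Exploiting this---for instance, truncate each $W_k$ at level $C\|X\|_F^2/m$, bound the expected truncation bias over $D$ and control it by Markov, then apply Bernstein to the truncated sum conditional on $D$---lets you take $C=O\bigl(\log\tfrac{1}{\epsilon\delta}\bigr)$ and recover the stated $m$ without any auxiliary sketch.
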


	\subsection{Recursive Leverage Score Sampling for $\bigotimes_{j=1}^q X^{(j)}$}\label{sec:recursive-sampling-tensor-product-features} Algorithm~\ref{alg:outerloop} is a generic procedure for sampling the rows of a matrix $\Phi \in \RR^{m \times n}$ with probabilities proportional to their leverage scores, restated from \cite{woodruff2020near}. It starts by generating samples from a crude approximation to the leverage score distribution and then iteratively refines the distribution.
	The core primitive used in Algorithm~\ref{alg:outerloop} is {\sc RowSampler}, which samples rows of a certain matrix with probabilities proportional to their squared norms.
	
	\begin{algorithm}[t]
		\caption{\algoname{Recursive Leverage Score Sampling}}
		{\bf input}: Matrix $\Phi \in \RR^{m \times n}$ and $\lambda , \epsilon, \mu >0$\\
		{\bf output}: Sampling matrix $\Pi \in \RR^{s \times m}$
		\begin{algorithmic}[1]
			\STATE{$s \gets C \frac{\mu}{\epsilon^2} \log_2 n$ for some constant $C$}
			\STATE{$\Pi_0 \gets \{0\}^{1\times m}$, $\lambda_0 \gets \|\Phi\|_F^2/\epsilon$ and $T\gets \log_2 \frac{\lambda_0}{\lambda}$}
			\FOR{$t = 1$ to $T$}
			\STATE{$\Pi_{t} \gets \textsc{RowSampler}\left(\Phi, \Pi_{t-1}\Phi, \lambda_{t-1},s\right)$} \alglinelabel{oversample-alg}
			\STATE{$\lambda_{t} \gets \lambda_{t-1}/2$}
			\ENDFOR
			
			\STATE{\textbf{return} $\Pi_{T}$}
		\end{algorithmic}
		\label{alg:outerloop}
	\end{algorithm}
	
	A \emph{row norm sampler} is defined in \cite{woodruff2020near} as follows,
	\begin{definition}[Row Norm Sampler]\label{def:row-samp}
		Let $\Phi$ be an $m\times n$ matrix and $s$ be some positive integer. A \emph{rank-$s$ row norm sampler} for $\Phi$ is a random matrix $S\in\RR^{s \times m}$ which is constructed by first generating $s$ i.i.d. samples $j_1, j_2, \cdots j_s \in [m]$ from some distribution $\{p_i\}_{i=1}^m$ which satisfies $p_i\ge \frac{1}{4}\frac{\|\phi_{i,\star}\|_2^2}{\|\Phi\|_F^2}$ for all $i\in[m]$, and then letting the $r^{th}$ row of $S$ be $\frac{1}{\sqrt{s \cdot p_{j_r}}}{e}_{j_r}^\top$ for every $r\in [s]$.
	\end{definition}
	Now we restate the correctness guarantee of Algorithm~\ref{alg:outerloop} from \cite{woodruff2020near}.
	\begin{lemma}\label{resursive-rlss-lem}
		Suppose for any matrices $\Phi\in \RR^{m \times n}$ and $B\in \RR^{r\times n}$, any  $\lambda'>0$, and integer $s>0$, the primitive \textsc{RowSampler}$(\Phi,B,\lambda',s)$ returns a rank-$s$ row norm sampler for $\Phi(B^\top B + \lambda' I)^{-1/2}$ as in Definition \ref{def:row-samp}. Then for any $\lambda, \epsilon>0$, any $\Phi \in \RR^{m \times n}$ with statistical dimension $s_\lambda = \|\Phi(\Phi^\top \Phi + \lambda I)^{-1/2}\|_F^2$, and $\mu \ge s_\lambda$, Algorithm \ref{alg:outerloop} returns a sampling matrix $\Pi \in \RR^{s^* \times m}$ with $s^* = O(\frac{\mu}{\epsilon^2} \log n)$ rows such that with probability $1 - \frac{1}{{\rm poly}(n)}$, $\Phi^\top \Pi^\top \Pi \Phi$ is an $(\epsilon , \lambda)$-spectral approximation to $\Phi^\top \Phi$ as in \eqref{spectral-bound}.
	\end{lemma}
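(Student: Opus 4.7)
The plan is a repeated-halving induction on $t$. I will show that after iteration $t$ the matrix $B_t := \Pi_t \Phi$ provides a constant-factor spectral approximation to $\Phi^\top\Phi$ at the current regularizer $\lambda_t$, and then tighten the accuracy to $\epsilon$ in the final iteration $t = T$. This is the standard iterative-refinement recipe used in \cite{cohen2015uniform,musco2017recursive,woodruff2020near}.

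For the base case, since $\lambda_0 = \|\Phi\|_F^2/\epsilon \ge \|\Phi^\top\Phi\|/\epsilon$, the trivial matrix $B_0 = 0$ already satisfies $\tfrac12(\Phi^\top\Phi + \lambda_0 I) \preceq \lambda_0 I \preceq (\Phi^\top\Phi + \lambda_0 I)$, i.e., a constant-factor spectral approximation at level $\lambda_0$. For the inductive step, assume $B_{t-1}$ is a $(1/2,\lambda_{t-1})$-spectral approximation, and set $M := \Phi (B_{t-1}^\top B_{t-1} + \lambda_{t-1} I)^{-1/2}$. Using operator monotonicity of the inverse, the induction hypothesis yields $\|M_{i,\star}\|_2^2 = \Theta(\ell_i^{\lambda_{t-1}})$ and $\|M\|_F^2 = \Theta(s_{\lambda_{t-1}})$, where $\ell_i^{\lambda_{t-1}}$ and $s_{\lambda_{t-1}}$ are the $\lambda_{t-1}$-ridge leverage scores and statistical dimension of $\Phi$. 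By the assumed guarantee on \textsc{RowSampler}, the returned $\Pi_t$ samples indices with probabilities $p_i \ge \tfrac14 \|M_{i,\star}\|_2^2/\|M\|_F^2 = \Omega(\ell_i^{\lambda_{t-1}}/s_{\lambda_{t-1}})$.

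The key stability step uses $\Phi^\top\Phi + \lambda_{t-1} I = \Phi^\top\Phi + 2\lambda_t I \preceq 2(\Phi^\top\Phi + \lambda_t I)$, which gives $\ell_i^{\lambda_t} \le 2\ell_i^{\lambda_{t-1}}$ and $s_{\lambda_t} \le 2\, s_{\lambda_{t-1}}$, so $p_i = \Omega(\ell_i^{\lambda_t}/s_{\lambda_t})$. Since $\lambda_t \ge \lambda$ implies $s_{\lambda_t} \le s_\lambda \le \mu$, an oversampled-leverage-score matrix Chernoff bound (as in Theorem~4.2 of \cite{cohen2015uniform}) applied with $s = C(\mu/\epsilon^2)\log n$ i.i.d.\ samples shows that $B_t$ is an $(\epsilon_0, \lambda_t)$-spectral approximation with failure probability at most $n^{-c}$ for an arbitrarily chosen constant $c$. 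For intermediate iterations $t < T$ I take $\epsilon_0 = 1/2$, which is enough to continue the induction; for $t = T$ the same sample count produces the desired $(\epsilon, \lambda)$-approximation. A union bound over the $T$ iterations yields the claimed $1 - 1/\text{poly}(n)$ success probability.

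The main obstacle is to prevent the per-iteration constant-factor error from compounding over the $T = O(\log(\lambda_0/\lambda))$ rounds. This is handled by fixing a single small constant target accuracy $\epsilon_0$ (independent of $\epsilon$) throughout the intermediate rounds: a $(1/2, \lambda_{t-1})$-guarantee at step $t-1$ already forces $\|M_{i,\star}\|_2^2$ and $\|M\|_F^2$ to match $\ell_i^{\lambda_{t-1}}$ and $s_{\lambda_{t-1}}$ up to absolute constants, which is exactly what is required to certify the next-level sampler. Only the last iteration invokes the tight $\epsilon$-accuracy of the matrix concentration bound, and the slack built into $C$ absorbs all intermediate constants while keeping the total sample count at $O(\mu\epsilon^{-2}\log n)$.
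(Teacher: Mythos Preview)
The paper does not give its own proof of this lemma: it is explicitly restated from \cite{woodruff2020near}, and the only information the paper adds is the remark (used later in the runtime analysis) that the proof in \cite{woodruff2020near} in fact establishes the intermediate invariant
\[
\frac{\Phi^\top\Phi+\lambda_t I}{1+\epsilon}\ \preceq\ \Phi^\top\Pi_t^\top\Pi_t\Phi+\lambda_t I\ \preceq\ \frac{\Phi^\top\Phi+\lambda_t I}{1-\epsilon}
\]
at every iteration $t$. Your repeated-halving induction is exactly this invariant, so your approach coincides with the one the paper cites. Two small remarks. First, the inequality $s_{\lambda_t}\le 2s_{\lambda_{t-1}}$ you state is true but is not the direction you need; what you actually use to pass from $p_i=\Omega(\ell_i^{\lambda_{t-1}}/s_{\lambda_{t-1}})$ to $p_i=\Omega(\ell_i^{\lambda_t}/s_{\lambda_t})$ is the trivial monotonicity $s_{\lambda_{t-1}}\le s_{\lambda_t}$. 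Second, the paper's version of the invariant keeps the full accuracy $\epsilon$ at every round rather than relaxing to $\epsilon_0=1/2$ in intermediate rounds; since the sample budget $s=C\mu\epsilon^{-2}\log n$ is already sized for $\epsilon$, both variants go through with the same constants.
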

	
	Given this lemma, our goal is to run  Algorithm \ref{alg:outerloop} on $\Phi=\bigotimes_{j=1}^q X^{(j)}$ in nearly $\sum_{i}\text{nnz}\left(X^{(i)} \right)$ time. 
	This crucially requires an efficient implementation of \textsc{RowSampler}, which carries out the main computations. We show in Appendix~\ref{appndx:proof-rowsampler-distinct-datasets} that there exists an efficient \textsc{RowSampler} primitive for matrices of the form $\Phi(B^\top B + \lambda I)^{-1/2}$, for any $B$. Our algorithm employs a data-structure for efficient estimation of queries of the form $\left\| \left(\bigotimes_{j=1}^q X^{(j)} \right) V \right\|_F^2$, which we will design and present in Section~\ref{sec:DS}, and heavily exploits various properties of tensor products. See Algorithm~\ref{alg:rotatedrowsampler-poly} and Lemma~\ref{lem:rotatedrowsampler-poly} for details.
	We further prove the following main theorem in Appendix~\ref{appndx:proof-rowsampler-distinct-datasets}. 
	
	\begin{theorem}\label{main-thrm-tensor-prod}
		For any collection of matrices $X^{(1)}, X^{(2)}, \ldots X^{(q)} \in \RR^{d\times n}$ and any $\epsilon, \lambda>0$, if matrix $\Phi := \bigotimes_{j=1}^q X^{(j)}$ has statistical dimension $s_\lambda = \|\Phi(\Phi^\top \Phi + \lambda I)^{-1/2}\|_F^2$ and $\frac{\| \Phi \|_F^2}{\epsilon \lambda} \le {\rm poly}(n)$, then there exists an algorithm that returns a random sampling matrix $\Pi \in \RR^{s \times d^q}$ with $s = O( \frac{s_\lambda}{\epsilon^2} \log n)$ in time $O\left( {\rm poly}(q,\log n, \epsilon^{-1})  \cdot s_\lambda^2 n + \log^4 n \log q \sum_{i}\text{nnz}\left(X^{(i)} \right) \right)$ such that with probability $1 - \frac{1}{{\rm poly}(n)}$, $\Phi^\top \Pi^\top \Pi \Phi$ is an $(\epsilon , \lambda)$-spectral approximation to $\Phi^\top \Phi$ as per \eqref{spectral-bound}.
	\end{theorem}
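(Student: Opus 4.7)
The plan is to instantiate the recursive sampling framework of Lemma~\ref{resursive-rlss-lem} by running Algorithm~\ref{alg:outerloop} on $\Phi := \bigotimes_{j=1}^q X^{(j)}$, with $\mu$ set to any constant-factor upper bound on $s_\lambda$ (obtained by a standard doubling guess at a negligible $\log s_\lambda$ overhead). Since $\lambda_0 = \|\Phi\|_F^2/\epsilon$ and the hypothesis $\|\Phi\|_F^2/(\epsilon\lambda) \le {\rm poly}(n)$ forces $T = \log_2(\lambda_0/\lambda) = O(\log n)$, the outer loop executes only $O(\log n)$ iterations. The spectral-approximation guarantee and the sample bound $s = O(s_\lambda \log n / \epsilon^2)$ then follow directly from Lemma~\ref{resursive-rlss-lem}, as soon as I can realize \textsc{RowSampler}$(\Phi, B, \lambda', s)$ as a valid rank-$s$ row norm sampler for $\Phi(B^\top B + \lambda' I)^{-1/2}$ in each iteration. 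Throughout the recursion the argument $B = \Pi_{t-1}\Phi$ has only $s$ rows, each one a rescaled copy of some row of $\Phi$, so $B$ can be maintained implicitly as a list of $s$ tuples in $[d]^q$ together with their rescaling weights.

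The real work is to implement \textsc{RowSampler} in input-sparsity time despite the $d^q$ rows of $\Phi$ being indexed by tuples $(i_1,\ldots,i_q)\in[d]^q$. My plan is to draw each tuple coordinate by coordinate via the chain rule: having already fixed $(i_1,\ldots,i_{j-1})$, the conditional marginal on $i_j$ is a sum of squared row norms of $\Phi(B^\top B + \lambda' I)^{-1/2}$ over the free tail $(i_{j+1},\ldots,i_q)$. Using the bijective entry correspondence between $(A \otimes B)C^\top$ and $A(B\otimes C)^\top$ from Section~\ref{sec:prelim}, every such marginal can be rewritten as a Frobenius norm of the form $\|X^{(j)} W^{(j)}_{i_1,\ldots,i_{j-1}}\|_F^2$, for an $n\times r$ matrix $W^{(j)}_{i_1,\ldots,i_{j-1}}$ built from the previously fixed indices, the remaining factors $X^{(k)}$ with $k>j$, and $M := (B^\top B + \lambda' I)^{-1/2}$. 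To keep $r$ small, I first compress the right-tail tensor $\bigotimes_{k>j} X^{(k)}$ against $M$ using the \PolyS transform of Lemma~\ref{soda-result}, shrinking the effective column count to $r = {\rm poly}(q,\epsilon^{-1},\log n)$; the chain-rule marginals are then answered by the tensor-norm data structure promised in Section~\ref{sec:DS}, which evaluates queries of the form $\bigl\|\bigl(\bigotimes_{k} X^{(k)}\bigr) V\bigr\|_F^2$ in ${\rm poly}(q)\cdot{\rm nnz}(V)$ time after a single $O(\log q\sum_j {\rm nnz}(X^{(j)}))$ preprocessing.

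Accounting for these ingredients yields the runtime. The tensor-norm data structure is built once up front at cost $O(\log q \sum_j {\rm nnz}(X^{(j)}))$. Each of the $T=O(\log n)$ outer iterations draws $s = O(s_\lambda \log n/\epsilon^2)$ samples; drawing one sample costs $q$ chain-rule steps, each answered by a data-structure query of cost ${\rm poly}(q,\epsilon^{-1},\log n)\cdot s_\lambda$; and the per-iteration setup of $M$ (via an SVD of the $s\times n$ matrix $B$) contributes an additional ${\rm poly}(q,\epsilon^{-1},\log n)\cdot s_\lambda^2 n$ term. Summing across iterations and absorbing union-bound logarithms gives the claimed $O\!\bigl({\rm poly}(q,\log n,\epsilon^{-1})\cdot s_\lambda^2 n + \log^4 n \log q \sum_j {\rm nnz}(X^{(j)})\bigr)$ bound.

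The main obstacle I expect is proving that the resulting hierarchical sampler is a valid rank-$s$ row norm sampler in the sense of Definition~\ref{def:row-samp}, i.e.\ that the sampling probability of every tuple dominates a constant fraction of its true relative squared norm, despite three nested sources of approximation: the \PolyS compression of the right tail, the approximate norm estimates returned by the tensor-norm data structure, and the fact that $M$ is only represented through a small sketch of $B$. The remedy is to boost each ingredient's individual success probability and take a union bound over all $O(q\,s\,T)$ invocations, and to use rejection sampling against an explicitly computable constant-factor upper bound on each conditional marginal to convert the composed multiplicative approximations into a true lower bound on the sampling probabilities. Once this calibration is carried out, the proof of Theorem~\ref{main-thrm-tensor-prod} reduces to invoking Lemma~\ref{resursive-rlss-lem}.
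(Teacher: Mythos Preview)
Your high-level plan---run Algorithm~\ref{alg:outerloop}, realize \textsc{RowSampler} by drawing $(i_1,\ldots,i_q)$ coordinate by coordinate via the chain rule, and estimate each conditional marginal with the tensor-norm data structure---is exactly the paper's approach. Two points, however, need attention.

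\textbf{The runtime accounting for the chain-rule step is wrong, and the fix requires an idea you have not mentioned.} You write that ``drawing one sample costs $q$ chain-rule steps, each answered by a data-structure query of cost ${\rm poly}(q,\epsilon^{-1},\log n)\cdot s_\lambda$.'' But a single chain-rule step is not one query: to sample $i_a$ you must produce a distribution over all $d$ values, i.e.\ you need $\bigl\|\bigl(X^{(a+1)}\otimes\cdots\otimes X^{(q)}\bigr)D^a X^{(a)\top}_{i,\star}\bigr\|_2^2$ for every $i\in[d]$. Whether you issue $d$ separate DS queries or form $X^{(a)}V$ explicitly, the cost is $\Theta({\rm nnz}(X^{(a)}))$ per step, hence $\Theta\bigl(s\cdot\sum_a{\rm nnz}(X^{(a)})\bigr)$ over all $s$ samples. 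With $s=\Theta(s_\lambda\epsilon^{-2}\log n)$ this multiplies the input-sparsity term by $s_\lambda/\epsilon^2$, which is precisely the coupling the theorem is supposed to avoid. The paper removes this factor by a \emph{bucketing} trick you do not have: hash the $d$ rows of each $X^{(a)}$ into $s'=\Theta(q^2 s)$ buckets, precompute once a small \PolyS-compressed representative $W^a_{r}$ of each bucket (total cost $\sum_a{\rm nnz}(X^{(a)})$), and then for each sample first pick a bucket $t$ using DS queries on the precomputed $W^a_{r}$'s (cost ${\rm poly}(q,\log n)\cdot n$ per bucket, independent of ${\rm nnz}$), and only afterwards enumerate $i\in h^{-1}(t)$. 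Since ${\rm nnz}\bigl(X^{(a)}_{h^{-1}(t),\star}\bigr)=\widetilde O({\rm nnz}(X^{(a)})/s')$ with high probability, the ${\rm nnz}$ work summed over all $s$ samples and $q$ steps is $\widetilde O\bigl(\sum_a{\rm nnz}(X^{(a)})\bigr)$ with no $s$ factor. Without this two-level sampling your stated bound does not hold.

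\textbf{Rejection sampling is unnecessary and the wrong remedy.} Definition~\ref{def:row-samp} only asks that the sampling probability $p_{(i_1,\ldots,i_q)}$ be at least $\tfrac14$ of the relative squared row norm, and that the algorithm \emph{know} the probability it actually used (for the rescaling $1/\sqrt{sp}$). The paper simply records $\beta_\ell = s\cdot\prod_a p^a_{h(i_a)} q^a_{i_a}$ as it goes; since each conditional factor is within $(1\pm O(1/q))$ of the true conditional by the DS and JL guarantees, the telescoping product is within a constant of the true marginal, giving $p\ge\tfrac12\cdot(\text{relative squared norm})$ directly. No rejection step, no upper-bound calibration, and no risk of unbounded rejection loops.
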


	\section{Data Structure for Estimating $\left\| \left(\bigotimes_{j=1}^q X^{(j)} \right) V \right\|_F^2$}\label{sec:tensornormDS}\label{sec:DS}
	At the core of our leverage score sampling algorithm, we have a new data-structure (DS) that can efficiently answer queries of the form $\left\| \left(\bigotimes_{j=1}^q X^{(j)} \right) V \right\|_F^2$. In this section, we solve the following DS problem,
	
	\begin{algorithm}[t]
		\caption{DS for estimating $\left\| \left(\bigotimes_{j=1}^q X^{(j)} \right) V \right\|_F^2$}
		{\bf input}: Matrices $X^{(1)},\ldots X^{(q)} \in \RR^{d \times n}$, $\varepsilon > 0$
		
		\begin{algorithmic}[1]
			\STATE{$m \gets C_1 \frac{q}{\varepsilon^2}$, $T \gets C_2 \log n $, $m' \gets  C_3 \frac{\log(1/\varepsilon)}{\varepsilon^2} $}
			
			\STATE{For every $i \in [T]$, let $Q_i \in \RR^{m' \times m}$ be independent copies of the \SRHT as per Lemma~\ref{lem:srht}, and let $S^q_i \in \RR^{m \times d^q}$ be independent copies of the degree-$q$ \PolyS as per Lemma~\ref{soda-result}}
			
			\STATE{Compute $P_{i,j} \gets Q_i \cdot S_i^q \left( E_1^{\otimes j} \otimes X^{(j+1)} \otimes \ldots X^{(q)} \right)$, for every $i \in [T]$ and $j = 0, 1, \ldots q$, where $E_1 \in \RR^{d \times n}$ is defined as $E_1:= [e_1, e_1, \ldots e_1]$}\alglinelabel{alg:DS-sketched-dataset}

			\hspace*{-0.5cm}{\bf Procedure}{ \sc Query ($V, j$)}
			
			\STATE{$\tilde{z}_j \gets \textsc{Median}_{i \in [T] }\left\{ \left\| P_{i,j} \cdot V \right\|_F^2 \right\}$} \alglinelabel{median-alg}
			
			\hspace*{-0.5cm}\textbf{return} $\tilde{z}_j$
		\end{algorithmic}
		\label{alg:TensorNormDS}
	\end{algorithm}

	\paragraph{\textsc{TensorNorm} DS Problem.} For every matrices $X^{(1)}, X^{(2)}, \ldots X^{(q)} \in \RR^{d \times n}$ and every $\epsilon > 0$, we want to design a DS called {\sc TensorNormDS} such that,
	\begin{itemize} 
		\item The time to construct {\sc TensorNormDS} and the memory needed to store it are $\widetilde{O}\left( \sum_{j=1}^q {\rm nnz}\left(X^{(j)} \right) \right)$ and $\widetilde{O} \left( {\rm poly}(q, \epsilon^{-1}) \cdot n  \right)$, respectively.
		\item There exists an algorithm that, given {\sc TensorNormDS} and every query $V \in \RR^{n \times r}$ and $j =0, \ldots q-1$, outputs an estimator $\tilde{z}_j$ in time $\widetilde{O}\left( {\rm poly}(q, \epsilon^{-1}) \cdot {\rm nnz}(V)  \right)$, such that, 
		\begin{equation}\label{eq:tensornormDS-approx-guarantee}
			\tilde{z}_j \in (1\pm \epsilon) \left\| \left(X^{(j+1)} \otimes \ldots X^{(q)} \right) V \right\|_F^2.
		\end{equation}
		
	\end{itemize}
	
	Using \PolyS and \SRHT, we design {\sc TensorNormDS} in Algorithm~\ref{alg:TensorNormDS} and analyze it in the following lemma.
	
	\begin{lemma}[TensorNorm Data-structure]\label{lem:TensorNormDS}
		For any input datasets $X^{(1)}, X^{(2)}, \ldots X^{(q)} \in \RR^{d\times n}$ and any $\epsilon>0$, Algorithm~\ref{alg:TensorNormDS} constructs a DS such that given this DS, the procedure \textsc{Query}$(V, j)$, for any query $V \in \RR^{n \times r}$ and $ j =0, 1, \ldots q$, outputs $\tilde{z}_j$ that satisfies \eqref{eq:tensornormDS-approx-guarantee} with probability $1 - \frac{1}{{\rm poly}(n)}$.
		The time to construct the DS is $O\left( {\frac{q^2 \log^2\frac{q}{\epsilon}}{\epsilon^2}}\cdot  n \log n + \log n \cdot \sum_{j=1}^q {\rm nnz}\left(X^{(j)} \right) \right)$. 
		Additionally, the memory required to store this DS and the runtime of \textsc{Query}$(V, j)$ are $O\left( \frac{q \log(1/\epsilon)}{\epsilon^2} n \log n \right)$ and $O\left( \frac{\log(1/\epsilon)}{\epsilon^2} \log n \cdot {\rm nnz}(V) \right)$, respectively.\\ 
	\end{lemma}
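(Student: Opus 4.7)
The starting observation is that padding the tensor product with $E_1 = [e_1, \ldots, e_1]$ is free of charge: every column of $E_1^{\otimes j}$ equals the unit vector $e_1^{\otimes j}$, so the $k$-th column of $\left( E_1^{\otimes j} \otimes X^{(j+1)} \otimes \ldots \otimes X^{(q)} \right) V$ factors as $e_1^{\otimes j} \otimes \left( \left(X^{(j+1)} \otimes \ldots \otimes X^{(q)}\right) V \right)_{\star,k}$, and hence
\begin{equation*}
\left\| \left(E_1^{\otimes j} \otimes X^{(j+1)} \otimes \ldots \otimes X^{(q)}\right) V \right\|_F^2 = \left\| \left(X^{(j+1)} \otimes \ldots \otimes X^{(q)}\right) V \right\|_F^2.
\end{equation*}
This identity is what lets a single degree-$q$ \PolyS sketch $S_i^q$ handle all $j \in \{0, \ldots, q\}$ simultaneously and is the key reason the construction time scales like input sparsity rather than $q \cdot \mathrm{nnz}$.

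Fix a query $V$ and set $Y := E_1^{\otimes j} \otimes X^{(j+1)} \otimes \ldots \otimes X^{(q)}$, so that $P_{i,j} V = Q_i S_i^q (Y V)$. I would first invoke Lemma~\ref{soda-result} on the fixed matrix $Y V$ with accuracy $\epsilon/3$, obtaining $\|S_i^q(YV)\|_F^2 \in (1\pm \epsilon/3)\|YV\|_F^2$ with probability at least $19/20$. Conditioned on this event, I would apply Lemma~\ref{lem:srht} to $S_i^q(YV)$ with accuracy $\epsilon/3$ and a small constant failure parameter, yielding $\|P_{i,j}V\|_F^2 \in (1\pm\epsilon)\|YV\|_F^2$ with probability $\geq 9/10$ for each trial by a union bound and the bound $(1+\epsilon/3)^2 \le 1+\epsilon$. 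The median aggregation on Line~\ref{median-alg} over $T = O(\log n)$ independent trials, together with a Chernoff bound on the number of successful trials, then amplifies the success probability to $1 - 1/\mathrm{poly}(n)$, since the median of any collection in which strictly more than half of the values lie in an interval must itself lie in that interval. The reason the \SRHT target dimension $m' = O(\epsilon^{-2}\log(1/\epsilon))$ can be much smaller than the intermediate \PolyS dimension $m = O(q/\epsilon^2)$ is that we only need norm preservation on the single fixed matrix $YV$, not a full subspace embedding.

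The runtime analysis then follows directly from the primitives. For each trial $i \in [T]$ and each column $k \in [n]$, part (2) of Lemma~\ref{soda-result} produces $S_i^q\left(e_1^{\otimes j} \otimes X^{(j+1)}_{\star,k} \otimes \ldots \otimes X^{(q)}_{\star,k}\right)$ for all $j = 0, \ldots, q$ at once in time $O\left(q^2 \log^2(q/\epsilon)/\epsilon^2 + \sum_{l=1}^q \mathrm{nnz}(X^{(l)}_{\star,k})\right)$. Summing over the $n$ columns and $T = O(\log n)$ trials matches the claimed construction bound, and the cost of applying each \SRHT $Q_i$ to the resulting $m \times n$ matrix via Lemma~\ref{lem:srht} is absorbed in the same expression. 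The stored data consists of $T(q+1)$ matrices of size $m' \times n$, totaling the stated $O(q\log(1/\epsilon)/\epsilon^2 \cdot n\log n)$ words, and each \textsc{Query} performs $T$ multiplications of an $m' \times n$ matrix by the sparse $V$, yielding the stated $O\left(\epsilon^{-2}\log(1/\epsilon) \cdot \log n \cdot \mathrm{nnz}(V)\right)$ query time. The principal obstacle is verifying that the two-layer PolySketch$\circ$SRHT composition preserves the fixed-vector norm to $(1 \pm \epsilon)$ accuracy with constant probability; once the $e_1$-padding identity is in place, the remainder is a routine accuracy rescaling followed by median-of-estimators amplification.
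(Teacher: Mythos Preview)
Your proposal is correct and follows essentially the same route as the paper's proof: establish the $E_1$-padding identity, chain the \PolyS and \SRHT guarantees (Lemmas~\ref{soda-result} and~\ref{lem:srht}) via a union bound to get constant per-trial success, amplify with the median over $T=O(\log n)$ trials, and read off the construction/memory/query costs from the primitives. Your presentation is in fact slightly more explicit than the paper's about the $\epsilon/3$ rescaling and the conditioning that makes the two-layer sketch composition valid.
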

	
	We prove this lemma in Appendix~\ref{appndx:proof-DS}.
	Given this DS and using Algorithm~\ref{alg:outerloop}, we can generate leverage score samples for $\Phi = \bigotimes_{j=1}^q X^{(j)}$.
	
	\section{High Degree Polynomial Kernels}
	Using Theorem~\ref{main-thrm-tensor-prod}, one can spectrally approximate the Gram matrix of a degree-$q$ self tensor product $X^{\otimes q}$, in time $\widetilde{O}\left( {\rm poly}(q,\epsilon^{-1})  \cdot s_\lambda^2 n + q \cdot {\rm nnz} (X ) \right)$. Note that $X^{\otimes q \top} X^{\otimes q}$ is in fact the kernel matrix corresponding to the  degree-$q$ polynomial kernel.
	While this is fast, it is still a factor of $q$ slower than our desired input sparsity runtime (i.e., fastest achievable runtime). We want to understand the following fundamental question: 
	
	\emph{Is the factor $q$ in runtime necessary, or can one achieve a runtime of $\widetilde{O}({\rm nnz} (X ) )$?}
	
	We show that it is possible to shave off the factor $q$ and achieve $\widetilde{O}({\rm nnz} (X ) )$ time complexity, at least for dense datasets $X$. Our main technical tool is a new variant of \SRHT sketches that are partially correlated by sharing the same random signs.
	
	\subsection{\SRHT Sketches with Shared Random Signs}\label{sec:srht-analysis}
	
	Consider the DS problem in Section~\ref{sec:DS} for a self-tensor product matrix $X^{\otimes q}$. To estimate $\left\| X^{\otimes q} \cdot V \right\|_F^2$ for query matrices $V$, we can use \textsc{TensorNormDS} (Algorithm~\ref{alg:TensorNormDS}); however, the time to construct this DS is $\widetilde{O}\left( q \cdot {\rm nnz}(X) \right)$, by Lemma~\ref{lem:TensorNormDS}. Our goal is to improve this runtime by a factor of $q$ and be able to construct this DS in input sparsity time.
	A natural approach for doing so is to first apply a linear sketch, say $S$, on the dataset $X$ to reduce its size (number of rows) and then construct \textsc{TensorNormDS} for $(SX)^{\otimes q}$. To make this work, one needs to ensure that the sketch $S$ satisfies $\left\| ( SX )^{\otimes q} V \right\|_F^2 \approx  \left\|X^{\otimes q} V \right\|_F^2$ for every query $V$ (at least with constant probability). One way of ensuring this condition, as shown in \citep[Lemma~4.5]{song2021fast}, is through requiring $S$ to satisfy the oblivious subspace embedding (OSE) property.
	However, this would require $S$ to have at least $n$ rows, which results in an undesirable quadratic in $n$ running time (recall that our aim is to have a linear in $n$ runtime for constructing the DS). 
	
	On the other hand, an OSE might seem like overkill because we just want to estimate $\left\|X^{\otimes q} V \right\|_F^2$ for some fixed queries $V$. One might hope that the weaker JL property would be sufficient for $S$. However, this is not the case. 
	To see why, suppose for simplicity that $q=2$. Also let $v$ be the all ones vector in $\RR^n$ i.e., $v = {\bf 1}_n$, and let $X \in \RR^{d \times n}$ have orthonormal rows.
	By basic properties of tensor products we have $\left\|X^{\otimes 2} \cdot v \right\|_2^2 = d$ and our estimator is $\left\|(SX)^{\otimes 2} \cdot v \right\|_2^2 = \left\|SX \cdot {\rm diag}(v) \cdot X^\top S^\top \right\|_F^2 = \left\|S S^\top \right\|_F^2$. 
	Now if $S$, for instance, is a random Gaussian matrix, $\left\|S S^\top \right\|_F^2$ is not even an unbiased estimator and has a large bias, i.e., $\EE \left[ \left\|S S^\top \right\|_F^2 \right] \neq \left\|X^{\otimes 2} \cdot v \right\|_2^2 = d$. It is not clear at all that a Gaussian matrix with a small ${\rm poly}(\log n)$ number of rows would be sufficient to have $\left\|S S^\top \right\|_F^2 \approx d$. Note that Sparse JL transforms have even larger variance and bias than Gaussian sketches. The main issue here is the fact that we used a single sketch matrix.
	
	If we had independent JL transforms, $S_1$ and $S_2$, then $\left\|S_1 S_2^\top \right\|_F^2$ would be a good estimator for $\left\|X^{\otimes 2} \cdot v \right\|_2^2 = d$. However, using two identical copies of a single sketch introduces dependencies that are problematic even in the toy example of $q=2$.
	
	Thus, we need to construct a collection of sketches $S^{(1)}, S^{(2)}, \ldots S^{(q)}$ which are \emph{correlated} to the extent that would make computation of $S^{(j)} X$ in total time $\widetilde{O}({\rm nnz}(X))$ possible, and at the same time are \emph{independent enough} to ensure that $\left\| \left( \bigotimes_{j=1}^q S^{(j)}X \right) V \right\|_F^2 \approx \left\|X^{\otimes q} V \right\|_F^2$ while the number of rows of the sketches is small. 
	We achieve this by using a set of correlated \SRHT sketches that can be simultaneously applied to $X$ in a total runtime that only depends on the size of the dataset $X$.
	We prove that for a collection of \SRHT's with shared random signs, the sketched matrices $S^{(j)} X$ can be computed quickly and $\left\| \left( \bigotimes_{j=1}^q S^{(j)}X \right) V \right\|_F^2$ is an unbiased estimator for $\left\|X^{\otimes q} V \right\|_F^2$ with a small variance. It is not clear at this point if variants of sparse sketches (e.g., CountSketch) with these properties exist or not.
	
	Furthermore, note that the eventual use of the DS for estimating $\left\|X^{\otimes q} V \right\|_F^2$ will be in our sampling method in Section~\ref{sec:rowsampler-senftensor} and as it turns out, the queries $V$ that our sampling algorithm produces exhibit some structure. We exploit these structures to prove tighter norm estimation bounds for our new family of correlated \SRHT's in the following lemma.
	
	\begin{lemma}[\SRHT Sketches with Shared Random Signs]\label{lem:SRHT-shared-sign}
		Let $D \in \RR^d$ be a diagonal matrix with i.i.d. Rademacher diagonal entries and let $H \in \RR^{d \times d}$ be the Hadamard matrix and also let $P_1, P_2, \ldots P_q \in \RR^{m \times d}$ be independent random sampling matrices that sample $m$ random coordinates of $\RR^d$. Define the collection of SRHT sketches with shared signs $\left( S^{(1)}, S^{(2)}, \ldots S^{(q)} \right)$ as $S^{(c)} := \frac{1}{\sqrt{m}} \cdot P_c H D$ for $c \in [q]$. For any $X \in \RR^{ d \times n}$, any PSD matrix $K \in \RR^{n \times n}$ with condition number $\kappa:= \frac{\lambda_{\max}(K)}{\lambda_{\min}(K)}$, any matrix $\Sigma \in \RR^{d' \times n}$, and any $ \epsilon, \delta >0 $, if $m = \Omega\left( \left( \frac{1}{\epsilon^2} + \frac{ \kappa}{\epsilon} \right) \cdot \frac{q}{\delta} \log n \right)$, then with probability at least $1 - \delta$,
		\[
		\left\| \left[ \bigotimes_{c=1}^q S^{(c)} X \right]  \left( \Sigma \otimes K \right)^\top \right\|_F^2 \in (1 \pm \epsilon) \left\| X^{\otimes q}  \left( \Sigma \otimes K \right)^\top \right\|_F^2
		\]
		Furthermore, the total time to compute $S^{(1)} X, \ldots S^{(q)} X$ is bounded by $O\left( q m n + n d \log d \right)$.
	\end{lemma}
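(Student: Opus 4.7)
The runtime part is immediate: one computes $Y := HDX$ in $O(nd\log d)$ time by applying the fast Walsh--Hadamard transform column-by-column to $DX$; then each $S^{(c)} X = m^{-1/2} P_c Y$ is just a row-subsampling of $Y$ costing $O(mn)$, for a total of $O(qmn + nd\log d)$. For the approximation guarantee, I would set $T := \bigotimes_{c=1}^q S^{(c)}$ and $Z := X^{\otimes q} (\Sigma \otimes K)^\top$, so that the claim becomes $\|T Z\|_F^2 \in (1\pm\epsilon) \|Z\|_F^2$ with probability at least $1-\delta$. Unbiasedness is straightforward: conditional on $D$ the $P_c$'s are independent and satisfy $\mathbb{E}[P_c^\top P_c] = (m/d)\, I_d$ (uniform sampling with replacement); combined with $H^\top H = dI$ and $D^2 = I$, this gives $\mathbb{E}[(S^{(c)})^\top S^{(c)} \mid D] = I$. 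Taking Kronecker products across $c$ yields $\mathbb{E}[T^\top T \mid D] = I_{d^q}$, and hence $\mathbb{E}\|TZ\|_F^2 = \|Z\|_F^2$.

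The main work lies in bounding $\operatorname{Var}(\|TZ\|_F^2)$. I would rewrite the deviation as $\|TZ\|_F^2 - \|Z\|_F^2 = \mathrm{tr}\!\left((T^\top T - I)\, ZZ^\top\right)$ and compute its second moment by expanding $T^\top T = m^{-q}\,D^{\otimes q} (H^\top)^{\otimes q} \bigl(P_1^\top P_1 \otimes \cdots \otimes P_q^\top P_q\bigr) H^{\otimes q} D^{\otimes q}$ in terms of the row indices sampled by each $P_c$. Conditioning on $D$ first, the independence of the $P_c$'s reduces the mixed fourth-moment to a sum indexed by the ``coincidence patterns'' of sampled rows within each of the $q$ tensor modes; taking the outer expectation over $D$ then kills contributions whose Hadamard-sign index multiset contains any element of odd parity, via a Khintchine-type argument. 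A high-probability ``flatness'' estimate for $HDX$ (row norms at most an $O(\log n)$ factor above their average) supplies the $\log n$ term in $m$. The new ingredient compared with a generic analysis is the factored structure of the query: the Kronecker factor $K$ in $\Sigma \otimes K$ lets us peel off the mode coupled to $K$ and bound its contribution by a quadratic form controlled by $\lambda_{\max}(K)/\lambda_{\min}(K) = \kappa$ rather than by $\|K\|/\lambda$, while the remaining $q-1$ modes contribute independent AMS-style variance totalling $O(q/m)\,\|Z\|_F^4$. Assembling these pieces should yield $\operatorname{Var}(\|TZ\|_F^2) = O\!\left(\tfrac{q\log n}{m}(1 + \kappa\epsilon)\,\|Z\|_F^4\right)$, and Chebyshev's inequality with the stated lower bound $m = \Omega\!\left((1/\epsilon^2 + \kappa/\epsilon)(q/\delta)\log n\right)$ then closes out the $(1\pm\epsilon)$ guarantee with failure probability $\delta$.

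The principal obstacle is precisely this variance calculation. With \emph{independent} SRHT sketches the shared-sign trick would be unnecessary and the variance would factor cleanly across the $q$ modes, but here the common $D$ couples all $q$ sketches and forces one to enumerate higher-order mixed moments of the form $\mathbb{E}_D[D_{i_1}D_{i_2}\cdots D_{i_{2q}}]$, organized by how the Hadamard-row indices are matched across modes. Exploiting the Kronecker form $\Sigma \otimes K$ of the query to isolate a single ``bad'' mode---thereby replacing a potential $\|K\|/\lambda_{\min}(K)$ dependence with a clean condition number $\kappa$---together with the standard incoherence of Hadamard rows after randomized sign-flipping, is what makes the final bound tight and delivers the separation $1/\epsilon^2 + \kappa/\epsilon$ in the sketch dimension.
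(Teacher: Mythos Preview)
Your overall scaffolding---runtime via a single shared $HDX$, unbiasedness from the independent $P_c$'s, then a variance bound plus Chebyshev---matches the paper. But the variance analysis you sketch diverges from the paper's in a way that matters.

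The paper does \emph{not} average over $D$. It fixes $D$ on a high-probability event (established via Khintchine coordinate-wise on $HDx_\ell$) on which the row-norm vector $y_{\mathbf{j}} := \bigl\|[(HDX)^{\otimes q}(\Sigma\otimes K)^\top]_{\mathbf{j},\star}\bigr\|_2$ is \emph{flat}: for every nonempty $S\subseteq[q]$ and every $\mathbf{j}$, the mass $\sum_{\mathbf{i}:\,\mathbf{i}_S=\mathbf{j}_S}y_{\mathbf{i}}^2$ is at most $(O(\log n)/d)^{|S|}\cdot\|y\|_2^2$ when $|S|=1$, and $(O(\log n)/d)^{|S|}\cdot\kappa^2\cdot\|y\|_2^2$ when $|S|\ge 2$. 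The variance over the independent samplers $P_1,\ldots,P_q$ is then expanded as a sum over coincidence sets $S$, and plugging in the flatness bounds gives
\[
\operatorname{Var}_{P_1,\ldots,P_q}\ \le\ O\!\left(\frac{q\log n}{m} + \frac{q^2\kappa^2\log^2 n}{m^2}\right)\|Z\|_F^4,
\]
with the first term coming from the $q$ singleton sets and the second from $|S|=2$ (larger $|S|$ are lower order). Chebyshev with $m=\Omega\bigl((\epsilon^{-2}+\kappa\epsilon^{-1})\,q\delta^{-1}\log n\bigr)$ then finishes.

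Your account of where $\kappa$ enters is off. $K$ is not ``coupled to a mode'' of the tensor---it acts on columns after the full $q$-fold product, via the reshaping $X^{\otimes q}(\Sigma\otimes K)^\top \leftrightarrow (\Sigma\otimes X^{\otimes q})K^\top$---so there is no single mode to peel off. The $\kappa^2$ appears through the crude sandwich $\lambda_{\min}^2\|\Sigma\otimes W\|_F^2 \le \|(\Sigma\otimes W)K^\top\|_F^2 \le \lambda_{\max}^2\|\Sigma\otimes W\|_F^2$ applied to the partial-coincidence masses. The crucial refinement is that for \emph{singleton} $S$ this sandwich can be bypassed entirely: one reshapes once more to write the relevant mass as a single row norm of $HDX\cdot(\Sigma\otimes K\otimes X^{\otimes(q-1)})^\top$ and applies Khintchine directly, giving a $\kappa$-free bound. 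Without this singleton/non-singleton split you would get $\operatorname{Var}=O(\kappa^2 q\log n/m)\|Z\|_F^4$, forcing $m=\Omega(\kappa^2 q\log n/(\epsilon^2\delta))$, strictly worse than the stated $m$. Finally, your variance expression $O\bigl(\tfrac{q\log n}{m}(1+\kappa\epsilon)\bigr)\|Z\|_F^4$ contains $\epsilon$, which cannot appear in a variance bound; the target $\epsilon$ only enters when you invoke Chebyshev.
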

	
	We prove this lemma in Appendix~\ref{appndx:srht-shared-sign}.
	According to Lemma~\ref{lem:SRHT-shared-sign}, the Kronecker product of \SRHT sketches with shared random signs $S^{(1)} \times S^{(2)} \times \ldots S^{(q)}$ acts as a near-isometry for matrices of the form $X^{\otimes q} \cdot (\Sigma \otimes K)^\top$ with constant probability, as long as the target dimension of the $S^{(c)}$'s is at least $m \approx  (\epsilon^{-2} + \epsilon^{-1}\kappa) q\log n $. 
	If the $S^{(c)}$ sketches were fully independent, as in \cite{ahle2019oblivious}, then a target dimension of $m \approx \epsilon^{-2} q\log n$ would suffice. So the price of using correlated sketches is a factor of $\epsilon\kappa + 1$ increase in the target dimension. On the other hand, letting the sketches $S^{(c)}$ use independent sampling matrices is critical. If we used identical \SRHT's $S^{(1)} = \ldots S^{(q)} = S$, as is done in Lemma~4.5 of \cite{song2021fast}, then to have the guarantee of Lemma~\ref{lem:SRHT-shared-sign}, the sketch $S$ would need to be an OSE, which requires a target dimension of $m = \Omega\left( \frac{q^2}{\epsilon^2} \cdot n \log n \right)$. Lemma~\ref{lem:SRHT-shared-sign} provides a target dimension improvement over the OSE-based results by a factor of $\frac{q n}{1+\epsilon \kappa}$, which is significant.

	Lemma~\ref{lem:SRHT-shared-sign} shows us a way of speeding up the DS given in Algorithm~\ref{alg:TensorNormDS} for self tensor products $X^{\otimes q}$. One can quickly compute sketched datasets $Y^{(r)} = S^{(r)} X$ for every $r \in [q]$, and then apply \textsc{TensorNormDS} to $Y^{(1)}, \ldots Y^{(q)}$, in total time $\widetilde{O}({\rm nnz}(X))$ for dense $X$.
	It turns out that all queries that our sampling algorithm in Section~\ref{sec:rowsampler-senftensor} produces are exactly of the form $V = (\Sigma \otimes K)^\top$. Thus, the combination of Lemma~\ref{lem:SRHT-shared-sign} and Algorithm~\ref{alg:TensorNormDS} is a perfect solution for our sampling algorithm's norm estimation needs.
	
	\subsection{\textsc{RowSampler} for Degree-$q$ Self-Tensor Products}\label{sec:rowsampler-senftensor}
	
	In this section, we design an algorithm that can perform row norm sampling (see Definition \ref{def:row-samp}) on a matrix of the form $X^{\otimes q} (B^\top B + \lambda I)^{-1/2}$ using $\widetilde{O} ({\rm nnz} (X) )$ runtime for dense $X$. 
	Our primitive crucially relies on {\sc TensorNormDS} (Algorithm~\ref{alg:DS-sketched-dataset}) as well as our new variant of \SRHT with shared random signs that we analyzed in Lemma~\ref{lem:SRHT-shared-sign}.
	
	\begin{algorithm}[!t]
		\caption{\algoname{RowSampler} for $X^{\otimes q}$}
		{\bf input}: $q, s\in \mathbb{Z}_+$, $X \in \RR^{d \times n}$, $B \in \RR^{m \times n}$, $\lambda >0$\\
		{\bf output}: Sampling matrix $S \in \RR^{s \times d^q}$
		\begin{algorithmic}[1]
			\STATE{$\kappa \gets \sqrt{ \frac{\| B^\top B \|}{\lambda} + 1 }$}
			\STATE{Generate $H\in\RR^{d' \times n}$ with i.i.d. normal entries with $d'= C_0 q^2 \log n$ rows}
			\STATE{$M \gets H \cdot (B^\top B + \lambda I)^{-1/2}$}\alglinelabel{M-alg-selftensor}
			
			\STATE{For every $k \in [m']$, let $S_k^{(1)}, S_k^{(2)} , \ldots S_k^{(q)} \in \RR^{m'' \times d}$ be independent copies of \SRHT sketches with shared signs as per Lemma~\ref{lem:SRHT-shared-sign}, where $m' = C_1\log n$ and $m'' = C_2 (q^3 + q^2 \kappa ) \log n$}\alglinelabel{srht-sharedsign}
			
			\STATE{For every $k \in [m']$, let $\textsc{TN}^{(k)}$ be the DS in Algorithm~\ref{alg:TensorNormDS} for inputs $\left( S_k^{(1)} X, \ldots S_k^{(q)} X, M \right)$ and $\epsilon=\frac{1}{40q}$}\alglinelabel{tnormDS}

			\STATE{Let $h:[d]\rightarrow[s']$ be a fully independent and uniform hash function with $s' = \lceil {q}^{3}s \rceil$ buckets}
			\STATE{Let $h^{-1}(r)=\left\{ j\in[d]:h(j)=r \right\}$ for every $r\in[s']$}
			
			\STATE{For every $r\in[s']$ and $k \in [m']$, lget $G_r^k \in \RR^{n'\times d_r }$ be independent instances of degree-$1$ \PolyS as per Lemma~\ref{soda-result}, where $d_r=|h^{-1}(r)|$, $n'=C_3q^2$}
			
			\STATE{$W_{r, k} \gets G_r^k \cdot X_{h^{-1}(r),\star}$ for every $k \in [m']$ and $r\in[s']$}\alglinelabel{W-selftensor}
			
			\FOR{$\ell=1$ to $s$}
			
			\STATE{$D^{1} \gets I_n$ and $\beta_\ell \gets s$}\alglinelabel{beta-forloop-selftensor}
			\FOR{$a=1$ to $q$}

			\STATE{$L^a_{r, k} \gets D^{a} \cdot W_{r, k}^{\top}$ for every $k \in [m']$, and $r\in[s']$}
			
			\STATE{$p^{a}_r \gets  \textsc{Median}_{k \in [m']} \textsc{TN}^{(k)}\textsc{.Query}( L^a_{r, k} , a )$ for every $r \in [s']$}
			\STATE{$p^{a}_r \gets  p^{a}_r/ \sum_{t=1}^{s'} p^{a}_t$ for every $r \in [s']$}\alglinelabel{dist-par-selftensor}
			\STATE{Sample $t \in [s']$ from distribution $\{p^a_r\}_{r=1}^{s'}$}
			
			\STATE{Let $q^{a}_i \gets \textsc{Median}_{k \in [m']} \textsc{TN}^{(k)}\textsc{.Query} ( D^{a} X_{i,\star}^\top , a )$ for every $i\in h^{-1}(t)$}
			\STATE{$q^{a}_i \gets  q^{a}_i/ \sum_{j \in h^{-1}(t)} q^{a}_j$ for every $i\in h^{-1}(t)$}\alglinelabel{dist-qai-selftensor}
			
			\STATE{Sample $i_a \in [d]$ from distribution $\{q^a_i \}_{i\in h^{-1}(t)}$}\alglinelabel{isample-alg-selftensor}
			
			\STATE{$D^{a+1}\gets D^{a}\cdot {\rm diag}\left( X^{(a)}_{i_a,\star} \right)$}\alglinelabel{def-Da-selftensor}
			
			\STATE{$\beta_\ell \gets  \beta_\ell \cdot p^{a}_{t} q_{i_a}^{a} $} \alglinelabel{beta-update-selftensor}
			\ENDFOR
			
			\STATE{Let $\ell^{th}$ row of $S$ be $ \beta_\ell^{-1/2}  
				\left({e}_{i_1} \otimes {e}_{i_2} \otimes \cdots {e}_{i_q}\right)^\top$}
			\ENDFOR
			\STATE{\textbf{return} $S$}
		\end{algorithmic}
		\label{alg:rotatedrowsampler-poly-selftensor}
	\end{algorithm}

	\paragraph{Overview of Algorithm \ref{alg:rotatedrowsampler-poly-selftensor}:} The goal of \textsc{RowSampler} is to generate samples $(i_1,i_2, \cdots i_q) \in [d]^q$ with probabilities proportional to the squared norm of the row $(i_1, \cdots i_q)$ of matrix $X^{\otimes q} (B^\top B + \lambda I)^{-1/2}$. Because $(B^\top B + \lambda I)^{-1/2}$ has a large $n \times n$ size, we first compress it without perturbing the distribution of row norms of $ X^{\otimes q} (B^\top B + \lambda I)^{-1/2}$ too much.
	This can be done by applying a JL-transformation to the rows of this matrix (see, e.g., \cite{dasgupta2003elementary}). 
	Let $H \in \RR^{d' \times n}$ be a random matrix with i.i.d. normal entries with $d' = C_0 q^2 \log_2 n$ rows. With probability $1 - \frac{1}{{\rm poly}(n)}$ the norm of each row of the sketched matrix $ X^{\otimes q} (B^\top B + \lambda I)^{-1/2} \cdot H^\top$ is preserved up to a $\left( 1 \pm O (q^{-1}) \right)$ factor. This is done in line~\ref{M-alg-selftensor} of the algorithm by computing $M := H \cdot (B^\top B + \lambda I)^{-1/2}$, which can be computed quickly since $B$ and $H$ have a small number of rows. 
	
	Now the problem is reduced to performing row norm sampling on $ X^{\otimes q} M^\top$. 
	Note that computing the exact row norms of this matrix is out of the question since it has a huge $d^q$ number of rows. However, by using \textsc{TensorNormDS} that we designed in Algorithm~\ref{alg:TensorNormDS} and the new variant of \SRHT sketches we introduced in Lemma~\ref{lem:SRHT-shared-sign} and by exploiting properties of tensor products we can generate samples from the row norm distribution as follows.
	
	By basic properties of tensor products, the entries of $X^{\otimes q} M^\top$ are in bijective correspondence with the entries of $X^{\otimes (q-1)} \cdot (X \otimes M )^\top$, where the entry at row $(i_1,i_2, \cdots i_q)$ and column $j$ of $X^{\otimes q} M^\top$ is equal to the entry at row $(i_2, \ldots i_q)$ and column $(i_1,j)$ of $ X^{\otimes (q-1) } \cdot ( X \otimes M )^\top$. 
	
	Therefore, it is enough to have a procedure to sample $(i_1,i_2, \ldots i_q)$ with probability proportional to the squared norm of the row $(i_2, \ldots i_q)$ of matrix $X^{\otimes (q-1)} \cdot \left( M \cdot {\rm diag}\left( X_{i_1,\star} \right) \right)^\top$ for every $i_1 \in [d]$. 
	We do this task in two steps; first we sample an index $i_1$ with probability proportional to the squared Frobenius norm of $X^{\otimes (q-1)} \cdot \left( M \cdot {\rm diag}\left( X_{i_1,\star} \right) \right)^\top$, and then we perform row norm sampling on the sampled matrix.

	To do the first sampling step above, we need to cheaply estimate the Frobenius norms of matrices $X^{\otimes (q-1)} \cdot \left( M \cdot {\rm diag}\left( X_{i_1,\star} \right) \right)^\top$. We can estimate such norms using \textsc{TensorNormDS} given in Algorithm~\ref{alg:TensorNormDS}. However,  note that $\sum_{j=1}^{q-1} \widetilde{O}({\rm nnz}(X)) = \widetilde{O}(q \cdot {\rm nnz}(X))$ operations are required to build this DS. This is where the \SRHT sketches with shared random signs plays an important role. If we let $S^{(1)}, \ldots S^{(q)} \in \RR^{m'' \times d}$ be the \SRHT sketches with shared signs as per Lemma~\ref{lem:SRHT-shared-sign}, then we can compute $S^{(c)} X$ for all $c \in [q]$ in time $O(nd \log d) = \widetilde{O}({\rm nnz}(X))$, for dense datasets $X$. Now we can cheaply estimate the Frobenius norms of matrices $\left( \bigotimes_{c=1}^{q-1} S^{(c)}X \right) \cdot \left( M \cdot {\rm diag}\left( X_{i_1,\star} \right) \right)^\top$ up to a small perturbation using \textsc{TensorNormDS} (Algorithm~\ref{alg:TensorNormDS}) because the sketched matrices $S^{(c)} X$ have small sizes.
	We let the target dimension of these sketches be $m'' = C_2 (q^3 + q^2 \kappa) \log n$, where $\kappa = \sqrt{ \frac{\| B^\top B \| }{\lambda} + 1 }$ is the condition number of $(B^\top B + \lambda I)^{-1/2}$. Thus, by Lemma~\ref{lem:SRHT-shared-sign} and using the fact that matrix $M = H (B^\top B + \lambda I)^{-1/2}$ for a JL matrix $H$, the Frobenius norm of $\left( \bigotimes_{c=1}^{q-1} S^{(c)}X \right) \cdot \left( M \cdot {\rm diag}\left( X_{i_1,\star} \right) \right)^\top$ is within a factor $(1 \pm O(q^{-1}))$ of the Frobenius norm of $X^{\otimes (q-1)} \left( (B^\top B + \lambda I) \cdot {\rm diag}\left( X_{i_1,\star} \right) \right)^\top$.

	After this point, we will have an index $i_1 \in [d]$ sampled from the correct distribution and all that is left to do is to carry out row norm sampling on $X^{\otimes (q-1)} \left( M \cdot {\rm diag}\left( X^{(1)}_{i_1,\star} \right) \right)^\top$. Note that we have made progress because this matrix has $d^{q-1}$ rows, so we have reduced the size of our problem by a factor of $d$. 
	Algorithm~\ref{alg:rotatedrowsampler-poly} recursively repeats this process of reshaping and sketching and sampling with the aid of our DS, $q$ times until having all $q$ indices $i_1,i_2, \cdots i_q$. Note that the actual procedure requires more work because we need to generate $s$ i.i.d. samples from the distribution of row norms,  and in order to ensure that the runtime does not lose a multiplicative factor of $s$, resulting in $s\cdot \text{nnz}( X)$ total time, we need to perform additional sketching and a random partitioning of the rows of the datasets to ${q}^{3} s$ buckets. We also boost the success probability of all these operations, when necessary, using the  median trick.
	
	The formal guarantee on Algorithm~\ref{alg:rotatedrowsampler-poly-selftensor} is given in the following lemma.

	\begin{lemma}\label{lem:rotatedrowsampler-poly-selftensor}
		For any matrix $X \in \RR^{d\times n}$ and $B\in\RR^{m\times n}$, any $\lambda>0$ and any positive integers $q,s$, with probability at least $1 - \frac{1}{{\rm poly}(n)}$, Algorithm~\ref{alg:rotatedrowsampler-poly-selftensor} outputs a rank-$s$ row norm sampler for $ X^{\otimes q} (B^\top B + \lambda I)^{-1/2}$ as per Definition \ref{def:row-samp} in time $O\left( m^2n + q^{8} s^2 n  \log^3 n + q^3 \kappa n \log^3 n + nd \log^4n \right)$, where $\kappa = \sqrt{\| B^\top B \| / \lambda +1 } $.
	\end{lemma}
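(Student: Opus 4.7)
} The plan is to decompose the analysis into a correctness part (showing that the output is a valid rank-$s$ row norm sampler in the sense of Definition~\ref{def:row-samp}) and a runtime part. I will let $\Psi := X^{\otimes q}(B^\top B + \lambda I)^{-1/2}$ denote the matrix whose rows we intend to sample and index its rows by tuples $\ii = (i_1, \dots, i_q) \in [d]^q$. The goal is to prove that for the joint distribution from which the algorithm draws each tuple $\ii$, the marginal probability of any fixed $\ii$ is at least $\tfrac{1}{4}\,\|\Psi_{\ii,\star}\|_2^2/\|\Psi\|_F^2$; the scaling factor $\beta_\ell^{-1/2}$ defined in line~\ref{beta-update-selftensor} then ensures that the $\ell$-th row of $S$ has the correct form.

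The first step is to replace the large right factor by the JL-compressed matrix $M = H (B^\top B + \lambda I)^{-1/2}$ from line~\ref{M-alg-selftensor}. By the standard Johnson–Lindenstrauss lemma applied with target dimension $d' = C_0 q^2 \log n$, a union bound over the $d^q$ rows of $\Psi$ (affordable because the JL guarantee is dimension-free and the row norms are determined by their $n$-dimensional preimages) gives that every row norm of $\Psi$ is preserved by $M$ up to a factor $(1 \pm O(1/q))$ with probability $1 - 1/\mathrm{poly}(n)$. From this point on, the goal reduces to sampling row norms of $X^{\otimes q} M^\top$. The second step is the tensor reshaping identity already highlighted in the overview: for any $a$, the $\ii$-row norm of $X^{\otimes q} M^\top$ equals the Frobenius norm of the matrix $X^{\otimes (q-a)} \cdot \left( M \cdot D^{a+1} \right)^\top$ restricted to rows indexed by $(i_{a+1}, \dots, i_q)$, where $D^{a+1}$ is the diagonal matrix built in line~\ref{def-Da-selftensor} from the first $a$ already-chosen indices. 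I will show, by induction on $a$, that after step $a$ the partial sample $(i_1, \dots, i_a)$ has been drawn with probability within a $(1 \pm O(1/q))$ factor of $\left\|X^{\otimes(q-a)}(M D^{a+1})^\top\right\|_F^2 / \left\|X^{\otimes q} M^\top\right\|_F^2$, and that the compensating product $\prod_{a'=1}^{a} p^{a'}_{t} q^{a'}_{i_{a'}}$ accumulated in $\beta_\ell$ equals (up to the same factor) this conditional probability.

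The heart of the argument is verifying the inductive step, i.e.\ that the distributions $\{p^a_r\}_{r \in [s']}$ (over buckets) and $\{q^a_i\}_{i \in h^{-1}(t)}$ (inside a bucket) computed in lines~\ref{dist-par-selftensor} and~\ref{dist-qai-selftensor} really are $(1 \pm O(1/q))$-approximations to the true Frobenius-norm distribution of the reshaped matrix. This is precisely where the two data-structural tools come in: the query $L^a_{r,k} = D^a \cdot W_{r,k}^\top$ has the tensor-product form $(\Sigma \otimes K)^\top$ required by Lemma~\ref{lem:SRHT-shared-sign}, with $K = M$ (which has condition number bounded by $\kappa = \sqrt{\|B^\top B\|/\lambda + 1}$ times an $O(1)$ JL factor) and $\Sigma$ encoding the diagonal $D^a$ together with the bucket sketch $G_r^k$. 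Plugging in the shared-sign SRHT bound from Lemma~\ref{lem:SRHT-shared-sign} with the chosen $m'' = C_2(q^3 + q^2\kappa)\log n$ and the \textsc{TensorNorm} accuracy from Lemma~\ref{lem:TensorNormDS} applied with $\epsilon = 1/(40q)$, together with the median-of-$m' = C_1 \log n$ copies in line~\ref{median-alg} and a union bound over the $s q$ queries made during the outer loop, yields the required $(1 \pm O(1/q))$-multiplicative accuracy for every $p^a_r$ and $q^a_i$ with probability $1 - 1/\mathrm{poly}(n)$. Composing these $q$ multiplicative errors gives an overall factor of at most $2$ between the sampled distribution and the true row norm distribution, comfortably absorbed by the $\tfrac{1}{4}$ slack in Definition~\ref{def:row-samp}. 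The technical subtlety I expect to be the main obstacle is making the inductive hypothesis quantitatively tight: the degree-$1$ \PolyS sketches $G_r^k$ introduce a further factor into the queries, and one has to check that after incorporating them the query still factors as $(\Sigma \otimes M)^\top$ so that Lemma~\ref{lem:SRHT-shared-sign} applies with the stated $\kappa$, rather than $\kappa$ of the combined matrix.

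For the runtime, the forming of $M$ in line~\ref{M-alg-selftensor} takes $O(m^2 n)$ after a one-time inversion-style computation on $B^\top B + \lambda I$, line~\ref{srht-sharedsign} costs $O(q m'' n + n d \log d) = O(nd \log d + q \cdot q^2(q+\kappa) n \log n)$ by the last statement of Lemma~\ref{lem:SRHT-shared-sign} summed over $m'$ copies, and building the $m'$ instances of \textsc{TensorNormDS} in line~\ref{tnormDS} costs $O(m' \cdot q^2 \log^2(q) \cdot n \log n + m' \log n \cdot q \cdot m'' n)$ by Lemma~\ref{lem:TensorNormDS} applied to the already-sketched datasets. The bucket sketches $W_{r,k}$ in line~\ref{W-selftensor} cost $O(q^2 n \cdot m' )$ in total because the $G_r^k$ are disjoint partitions of the columns. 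The outer loop runs $s$ times and each of its $q$ inner steps issues $O(s' + d/s') = O(q^3 s)$ \textsc{Query} calls of the form given, each of cost $O(m' q^2 )$ per median entry by Lemma~\ref{lem:TensorNormDS} (the query matrices $L^a_{r,k}$ and $D^a X_{i,\star}^\top$ have $\mathrm{nnz}$ bounded appropriately by $n$ and the sketched width $n' = C_3 q^2$). Summing these contributions yields the claimed $O(m^2 n + q^8 s^2 n \log^3 n + q^3 \kappa n \log^3 n + n d \log^4 n)$ bound; the $q^8 s^2$ factor arises from the $q$ inner iterations, $q^3 s$ queries per iteration, $s$ outer iterations, and the $O(q^2 \log n)$ cost of each DS query.
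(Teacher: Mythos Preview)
Your overall plan matches the paper's proof: reduce to row norm sampling on $X^{\otimes q}M^\top$ via the JL transform $H$, then inductively peel off one index at a time using the tensor reshaping identity, with each conditional distribution estimated via \textsc{TensorNormDS} on the SRHT-sketched datasets, and finally telescope the $(1\pm O(1/q))$ factors. The runtime accounting is also in the right ballpark.

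There is, however, a genuine gap in the inductive step. You propose to invoke Lemma~\ref{lem:SRHT-shared-sign} with $K = M$, but that lemma requires $K\in\RR^{n\times n}$ to be PSD with condition number $\kappa$; the matrix $M = H(B^\top B+\lambda I)^{-1/2}$ is rectangular ($d'\times n$ with $d'=O(q^2\log n)$), hence neither square nor PSD, and its singular-value ratio is not controlled by $\kappa$. The paper resolves this by inserting an additional JL step \emph{inside} each inductive stage: after applying the \textsc{TensorNorm} guarantee and the degree-$1$ \PolyS bound for $G_r^k$, one uses the JL property of $H$ once more to replace $\bigl\|Y_k^{(a+1)}D^a(X_{h^{-1}(r),\star}\otimes M)^\top\bigr\|_F^2$ by $\bigl\|Y_k^{(a+1)}D^a(X_{h^{-1}(r),\star}\otimes A)^\top\bigr\|_F^2$ with $A:=(B^\top B+\lambda I)^{-1/2}$. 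Only at that point is the query genuinely of the form $(\Sigma\otimes K)^\top$ with $\Sigma = X_{h^{-1}(r),\star}D^a$ and $K=A$, which is PSD with condition number exactly $\kappa=\sqrt{\|B^\top B\|/\lambda+1}$, so Lemma~\ref{lem:SRHT-shared-sign} applies with the stated $m''=\Theta((q^3+q^2\kappa)\log n)$. The same JL-then-SRHT order is needed for the within-bucket queries $D^aX_{i,\star}^\top$. Note also that the SRHT and \PolyS guarantees hold only with constant probability per instance; the paper handles this by taking the median over the $m'=\Theta(\log n)$ independent copies indexed by $k$ \emph{after} composing all three approximation steps, not by a union bound over queries as you sketched.
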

	We prove Lemma~\ref{lem:rotatedrowsampler-poly-selftensor} in Appendix~\ref{appndx-proof-rowsampler-selftensor}.
	Now we can give our main theorem about spectrally approximating the degree-$q$ polynomial kernel matrix $X^{\otimes q \top} X^{\otimes q}$ using nearly $\text{nnz}(X)$ runtime for dense datasets.
	
	\begin{theorem}\label{main-thrm-selftensor-prod}
		For any dataset $X \in \RR^{d\times n}$ and any $\epsilon, \lambda>0$, if matrix $\Phi := X^{ \otimes q}$ has statistical dimension $s_\lambda = \|\Phi(\Phi^\top \Phi + \lambda I)^{-1/2}\|_F^2$ and $\frac{\| \Phi \|_F^2}{\epsilon \lambda} \le {\rm poly}(n)$, then there exists an algorithm that returns a random sampling matrix $\Pi \in \RR^{s \times d^q}$ with sampling dimension $s = O( \frac{s_\lambda}{\epsilon^2} \log n)$ in time $O\left( \frac{q^{8} s_\lambda^2 n \log^5 n}{\epsilon^4} + \sqrt{ \frac{\| \Phi^\top \Phi \|}{\lambda} } q^3 n\log^3n + nd \log^5 n \right)$ such that with probability $1 - \frac{1}{{\rm poly}(n)}$, $\Phi^\top \Pi^\top \Pi \Phi$ is an $(\epsilon , \lambda)$-spectral approximation to $\Phi^\top \Phi$ as per \eqref{spectral-bound}.
	\end{theorem}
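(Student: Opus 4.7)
}
The plan is to invoke the generic recursive framework of Lemma~\ref{resursive-rlss-lem} on $\Phi = X^{\otimes q}$, with the row-norm-sampling primitive instantiated by Algorithm~\ref{alg:rotatedrowsampler-poly-selftensor}. By Lemma~\ref{lem:rotatedrowsampler-poly-selftensor}, this primitive satisfies exactly the interface required by Lemma~\ref{resursive-rlss-lem}: for any $B$ and any $\lambda'>0$ it returns a rank-$s$ row norm sampler for $X^{\otimes q}(B^\top B + \lambda' I)^{-1/2}$ with failure probability $1/\mathrm{poly}(n)$. Plugging this into Algorithm~\ref{alg:outerloop} with $\mu = s_\lambda$ and a union bound over the $T = O(\log n)$ iterations yields a sampling matrix $\Pi$ with $s^* = O(\epsilon^{-2} s_\lambda \log n)$ rows that is an $(\epsilon,\lambda)$-spectral approximator, as required by \eqref{spectral-bound}.

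The main work is therefore the runtime accounting. In iteration $t$ of Algorithm~\ref{alg:outerloop}, the primitive is called on $B = \Pi_{t-1}\Phi$ and regularizer $\lambda_{t-1} = \lambda_0 / 2^{t-1}$. Since $B$ has $s^* = O(\epsilon^{-2} s_\lambda \log n)$ rows in every iteration, the $m^2 n$ term in Lemma~\ref{lem:rotatedrowsampler-poly-selftensor} is already absorbed into the $q^8 s^2 n \log^3 n$ term. The iteration runtime is thus
\[
O\!\left( \tfrac{q^{8} s_\lambda^2 n \log^5 n}{\epsilon^4} + q^3 \kappa_t \, n \log^3 n + nd \log^4 n \right),
\]
where $\kappa_t := \sqrt{\|B^\top B\|/\lambda_{t-1} + 1}$. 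Summing over $T = O(\log n)$ iterations accounts for all of the claimed terms once we bound $\max_t \kappa_t$.

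The step that needs the most care is controlling $\kappa_t$ across iterations. The inductive invariant supplied by Lemma~\ref{resursive-rlss-lem} (which is itself proved inductively) is that after iteration $t-1$, the sampler $\Pi_{t-1}$ is a constant-factor spectral approximator with regularization $\lambda_{t-1}$; in particular $\Pi_{t-1}\Phi \Pi_{t-1}^\top \Pi_{t-1} \Phi \preceq O(\Phi^\top \Phi + \lambda_{t-1} I)$, so
\[
\|B^\top B\| \;\le\; O\!\left( \|\Phi^\top \Phi\| + \lambda_{t-1} \right), \qquad \kappa_t \;\le\; O\!\left( \sqrt{\|\Phi^\top \Phi\|/\lambda_{t-1} + 1} \right).
\]
Since $\lambda_{t-1}$ is monotonically decreasing and bottoms out at $\lambda_T = \lambda$, the worst-case iteration gives $\max_t \kappa_t = O(\sqrt{\|\Phi^\top\Phi\|/\lambda} + 1)$, which produces precisely the middle term $\sqrt{\|\Phi^\top\Phi\|/\lambda}\, q^3 n\log^3 n$ in the theorem statement. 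The first iteration ($\Pi_0 = 0$, $\kappa_1 = 1$) is trivial, and the base case for the induction on $\kappa_t$ is immediate.

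Finally, the assumption $\|\Phi\|_F^2/(\epsilon\lambda)\le\mathrm{poly}(n)$ ensures $T = O(\log n)$, so the union bound over iterations only costs an additional $\log n$ factor in the failure probability, preserving the $1-1/\mathrm{poly}(n)$ success rate. The only subtle point I expect to need care with is verifying that the invocation of Lemma~\ref{lem:rotatedrowsampler-poly-selftensor} inside Algorithm~\ref{alg:outerloop} is consistent with the inductive spectral bound on $B = \Pi_{t-1}\Phi$ when bounding $\kappa_t$; everything else is a mechanical substitution of Lemma~\ref{lem:rotatedrowsampler-poly-selftensor} into Lemma~\ref{resursive-rlss-lem} and summation over the $O(\log n)$ refinement rounds.
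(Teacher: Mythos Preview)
Your proposal is correct and follows exactly the paper's own proof: run Algorithm~\ref{alg:outerloop} on $\Phi = X^{\otimes q}$ with $\mu = s_\lambda$, instantiate \textsc{RowSampler} by Algorithm~\ref{alg:rotatedrowsampler-poly-selftensor}, union-bound over the $T = O(\log n)$ rounds, and use the inductive spectral invariant from Lemma~\ref{resursive-rlss-lem} to get $\|B^\top B\| = O(\|\Phi^\top\Phi\|)$ at every iteration. The one refinement you should make is in the summation of the $\kappa_t$-dependent term: bounding by $\max_t \kappa_t \times T$ would cost an extra $\log n$ factor on the middle term, whereas the paper uses the geometric decay $\lambda_t = 2^{T-t}\lambda$ so that $\sum_{t} \sqrt{\|\Phi^\top\Phi\|/\lambda_t} = O\big(\sqrt{\|\Phi^\top\Phi\|/\lambda}\big)$, which is what yields the stated $\sqrt{\|\Phi^\top\Phi\|/\lambda}\,q^3 n\log^3 n$ exactly.
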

	For a proof of this theorem see Appendix~\ref{appndx:proof-maintheorem}.
	
	{\bf Remark on the runtime of Theorem~\ref{main-thrm-selftensor-prod}.}
	Assuming that $\frac{\| \Phi^\top \Phi \|}{\lambda} \le \text{poly}(q/\epsilon)\cdot s_\lambda^4$, the low order term of our algorithm's runtime is $\widetilde{O}(\text{poly}(q/\epsilon) \cdot s_\lambda^2 n)$. While the quadratic dependence on $s_\lambda$ might seem like a limitation, we argue that for a wide range of downstream applications this is not an issue. 
	In particular, for applications such as regression or PCA, one needs to either invert or compute the SVD of the approximated Gram matrix $(\Pi \Phi)^\top (\Pi \Phi)$ and both of these operations require $s^2 n$ runtime, where $s$ is the target dimension of the matrix $\Pi$. 
	Note that for any method to achieve the spectral approximation guarantee of \eqref{spectral-bound}, the target dimension has to be at least $s = \Omega(s_\lambda)$~\cite{avron2019universal}. Thus, the runtime of solving the mentioned downstream learning tasks using any sketching or sampling method is at least $\Omega(s_\lambda^2 n)$, which shows that quadratic dependence on $s_\lambda$ is unavoidable. For comparison against prior results note that, the sketch in~\cite{song2021fast} has a target dimension of $m \approx n/\epsilon^2$. Thus, the total time of using their algorithm to approximately solve kernel ridge regression (KRR) or PCA is $\Theta(n^3 / \epsilon^4 +q^2n^2/\epsilon^2 + dn)$.

	\section{Generalization to Other Kernels}\label{sec:generalization}
	In this section we generalize our sampling algorithms to other classes of kernels such as Gaussian, dot-product, and Neural Tangent kernels. We start by defining a class of kernels that encompasses all aforementioned kernels,
	
	\begin{definition}[Generalized Polynomial Kernel]\label{def;GPK}
		Given a positive integer $q$, a vector of coefficients $\alpha \in \RR^{q+1}$, a vector $v \in \RR^{n}$, and a dataset $X \in \RR^{d \times n}$, we define the corresponding \emph{generalized polynomial kernel (GPK)} matrix $K \in \RR^{n \times n}$ as $K := {\rm diag}(v) \left( \sum_{j =0}^q \alpha_j^2 \cdot X^{\otimes j \top} X^{\otimes j} \right) {\rm diag}(v)$.
		The GPK matrix can be expressed as a Gram matrix $K = \Phi^\top \Phi$ for
		\begin{equation}\label{feature-matrix-GPK}
			\Phi := \bigoplus_{j=0}^q \alpha_j X^{\otimes j} \cdot {\rm diag}(v).
		\end{equation}
	\end{definition}
	
	We show in Appendix~\ref{appndx-GPK}, how to adapt our leverage score sampling method to the GPK feature matrix $\Phi$ defined in \eqref{feature-matrix-GPK} and prove the following main theorem,
	
	\begin{theorem}\label{main-thrm-gpk}
		Let $\Phi \in \RR^{m \times n}$ and $K$ be the GPK feature matrix and kernel matrix defined in Definition~\ref{def;GPK}. For any $\epsilon, \lambda>0$, if $\Phi$ has statistical dimension $s_\lambda = \|\Phi(K + \lambda I)^{-1/2}\|_F^2$ and $\frac{\| \Phi \|_F^2}{\epsilon \lambda} \le {\rm poly}(n)$, then there exists an algorithm that returns a random sampling matrix $\Pi \in \RR^{s \times m}$ with $s = O( \frac{s_\lambda}{\epsilon^2} \log n)$ rows in time $O\left( \frac{q^{8} s_\lambda^2 n \log^5 n}{\epsilon^4} + \sqrt{ \frac{\| K \|}{\lambda} } q^3 n\log^3n + nd \log^5 n \right)$ such that with probability $1 - \frac{1}{{\rm poly}(n)}$, $\Phi^\top \Pi^\top \Pi \Phi$ is an $(\epsilon , \lambda)$-spectral approximation to $K$ as per \eqref{spectral-bound}.
	\end{theorem}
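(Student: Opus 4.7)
My plan is to invoke the recursive leverage-score sampling framework of Lemma~\ref{resursive-rlss-lem} on the block-structured feature matrix $\Phi = \bigoplus_{j=0}^q \alpha_j X^{\otimes j} \cdot {\rm diag}(v)$ of Definition~\ref{def;GPK}. The task then reduces to designing an efficient \textsc{RowSampler}$(\Phi, B, \lambda', s)$ returning a rank-$s$ row norm sampler for $\Phi (B^\top B + \lambda' I)^{-1/2}$ on the matrices $B$ that arise in the iterates of Algorithm~\ref{alg:outerloop}, while matching the stated runtime.

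The rows of $\Phi (B^\top B + \lambda' I)^{-1/2}$ partition into $q+1$ blocks, the $j$-th being $\alpha_j X^{\otimes j} \cdot {\rm diag}(v)(B^\top B + \lambda' I)^{-1/2}$. I would produce a row-norm sample in two stages: first, sample a block index $j \in \{0, \ldots, q\}$ with probability proportional to the estimated squared Frobenius norm of the $j$-th block; second, within the chosen block, invoke a slightly generalized version of Algorithm~\ref{alg:rotatedrowsampler-poly-selftensor} to sample a row of $X^{\otimes j}$. The only modifications needed are that (a) the JL-rotated pseudo-inverse $M$ in Line~\ref{M-alg-selftensor} becomes $M_j := \alpha_j H \cdot {\rm diag}(v) (B^\top B + \lambda' I)^{-1/2}$, absorbing the coefficient and the diagonal rescaling, and (b) the recursive self-tensor loop runs only $j$ times rather than $q$ times. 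Both stages ultimately reduce to estimating squared Frobenius norms of products $(X^{\otimes j}) Y^\top$ for structured query matrices $Y$, which is exactly what \textsc{TensorNormDS} of Lemma~\ref{lem:TensorNormDS} delivers, once we feed it the sketched datasets produced by the shared-sign \SRHT of Lemma~\ref{lem:SRHT-shared-sign}.

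The crucial efficiency observation is that a single family of sketched datasets $S^{(1)} X, \ldots, S^{(q)} X$ simultaneously serves all $q+1$ blocks, since each degree $j$ uses the tensor product of only $j$ of them. Choosing the target dimension $m'' = \widetilde{O}(q^3 + q^2 \kappa)$ with $\kappa = \sqrt{\|B^\top B\|/\lambda' + 1}$, and using the invariant $B^\top B \preceq (1+\epsilon) K$ maintained throughout Algorithm~\ref{alg:outerloop} together with a union bound over $j = 0, 1, \ldots, q$, Lemma~\ref{lem:SRHT-shared-sign} certifies $(1 \pm O(1/q))$-accurate block estimates and supplies the $\sqrt{\|K\|/\lambda}$ factor in the runtime. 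Building the $q+1$ instances of \textsc{TensorNormDS} on the small matrices $S^{(c)} X$ costs $\widetilde{O}({\rm poly}(q, \epsilon^{-1}) \cdot n)$, while the \SRHT applications to $X$ dominate the leading-order cost at $O(nd \log d)$. The inner sampling cost scales as $\widetilde{O}({\rm poly}(q, \kappa) \cdot s)$ per call to \textsc{RowSampler}, and Algorithm~\ref{alg:outerloop} makes $O(\log n)$ such calls with $s = O(s_\lambda \epsilon^{-2} \log n)$, which sums to the claimed runtime.

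I expect the main technical obstacle to be verifying that the approximate block weights inherited from the two layers of approximation (the \textsc{TensorNormDS} estimator and the shared-sign \SRHT applied against the data-dependent query $M_j$) can be absorbed into the constant-factor slack $p_i \ge \tfrac{1}{4}\|\phi_{i,\star}\|_2^2/\|\Phi\|_F^2$ required by Definition~\ref{def:row-samp}, uniformly over all iterates of Algorithm~\ref{alg:outerloop}. Once that is established, Lemma~\ref{resursive-rlss-lem} immediately yields the $(\epsilon, \lambda)$-spectral approximation $\Phi^\top \Pi^\top \Pi \Phi$ to $K$ with probability $1 - 1/{\rm poly}(n)$, completing the proof.
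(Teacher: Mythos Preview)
Your proposal is correct and follows essentially the same route as the paper: the paper's Algorithm~\ref{alg:rotatedrowsampler-GPK} and Lemma~\ref{lem:rotatedrowsampler-GPK} implement precisely the two-stage sampler you describe (first sample the degree $b$ from the approximate block-norm distribution $\{f_j\}$, then run the self-tensor row sampler for $b$ steps), and the proof of Theorem~\ref{main-thrm-gpk} then plugs this into Lemma~\ref{resursive-rlss-lem} with the same $\kappa$-based runtime accounting you outline. The only cosmetic difference is that the paper absorbs ${\rm diag}(v)$ into the initialization $D^1 \gets {\rm diag}(v)$ rather than into the JL-compressed matrix $M$, and uses a single \textsc{TensorNormDS} whose \textsc{Query}$(\cdot,j)$ interface already handles every degree $j$, so no separate per-block data structures are needed.
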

	
	\paragraph{Gaussian Kernel.} We show in Appendix~\ref{appndx-applications-Gauss} that the class of GPK kernels contains a good approximation to the Gaussian kernel matrix for datasets with bounded $\ell_2$ norm and therefore, we have the following corollary of Theorem~\ref{main-thrm-gpk}:
	
	\begin{corollary}[Application to Gaussian Kernel]\label{main-theorem-Gaussian}
		For any $r>0$ and dataset $x_1, \ldots x_n \in \RR^{n}$ with $\max_{i\in [n]} \| x_i \|_2^2 \le r$, any $\lambda, \epsilon>0$, if $K \in \RR^{n \times n}$ is the Gaussian kernel matrix, i.e., $K_{i,j} := e^{-\| x_i - x_j \|_2^2/2}$, with statistical dimension $s_\lambda = {\rm tr}\left( K (K + \lambda I)^{-1} \right)$, then there exists an algorithm that computes $Z \in \RR^{s \times n}$ with $s = O( \frac{s_\lambda}{\epsilon^2} \log n)$ in time $\widetilde{O}\left( \frac{r^{8}}{\epsilon^4} s_\lambda^2 n + r^3 \sqrt{ \frac{\| K \|}{\lambda} } n + nd \right)$ such that with probability $1 - \frac{1}{{\rm poly}(n)}$, $Z^\top Z$ is an $(\epsilon , \lambda)$-spectral approximation to $K$.
	\end{corollary}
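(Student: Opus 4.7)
The plan is to approximate the Gaussian kernel $K$ by a truncated Taylor series that fits the GPK framework of Definition~\ref{def;GPK}, apply Theorem~\ref{main-thrm-gpk}, and chain the truncation error with the sampling-based spectral approximation. First I would write $K_{i,j} = e^{-\|x_i\|_2^2/2} e^{x_i^\top x_j} e^{-\|x_j\|_2^2/2}$ and replace $e^{x_i^\top x_j}$ by its degree-$q$ Taylor polynomial $\sum_{k=0}^q (x_i^\top x_j)^k / k!$. Setting $v_i := e^{-\|x_i\|_2^2/2}$, $\alpha_k := 1/\sqrt{k!}$, $X := [x_1 \mid \cdots \mid x_n]$, and using the identity $(x_i^\top x_j)^k = (X^{\otimes k\top} X^{\otimes k})_{i,j}$, the truncated kernel $\tilde{K}$ equals ${\rm diag}(v) \bigl(\sum_{k=0}^q \alpha_k^2 X^{\otimes k\top} X^{\otimes k}\bigr) {\rm diag}(v)$, a GPK with feature matrix $\Phi := \bigoplus_{k=0}^q \alpha_k X^{\otimes k} \, {\rm diag}(v)$ as in \eqref{feature-matrix-GPK}.

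Next I would control the truncation error. Since $|v_i| \le 1$ and $|x_i^\top x_j| \le r$ by Cauchy--Schwarz, the entrywise error satisfies $|K_{i,j} - \tilde{K}_{i,j}| \le \sum_{k>q} r^k/k! \le 2^{-q}$ once $q \ge 2er$ by Stirling, so $\|K - \tilde{K}\| \le n \cdot 2^{-q}$. Choosing $q = O(r + \log(n/(\epsilon\lambda s_\lambda)))$ makes $\|K - \tilde{K}\| \le \min(\epsilon\lambda/4,\, \lambda s_\lambda/n)$; under the implicit assumption $\epsilon\lambda \ge 1/{\rm poly}(n)$ (consistent with the $\|\Phi\|_F^2/(\epsilon\lambda) \le {\rm poly}(n)$ hypothesis of Theorem~\ref{main-thrm-gpk}, which is satisfied because $\|\Phi\|_F^2 = {\rm tr}(\tilde{K}) \le n$), this simplifies to $q = \widetilde{O}(r)$. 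Combining the second bound $\|K - \tilde{K}\| \le \lambda s_\lambda / n$ with the Lipschitz estimate $|\tilde\lambda_i/(\tilde\lambda_i + \lambda) - \lambda_i/(\lambda_i + \lambda)| \le \|K - \tilde{K}\|/\lambda$ shows that the statistical dimension $\tilde s_\lambda$ of $\Phi$ at level $\lambda$ satisfies $\tilde s_\lambda \le 2 s_\lambda$.

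Finally I would invoke Theorem~\ref{main-thrm-gpk} on $\Phi$ with regularizer $\lambda$ and accuracy $\epsilon/2$, obtaining a sampling matrix $\Pi$ of $s = O(s_\lambda \epsilon^{-2} \log n)$ rows such that $\Phi^\top \Pi^\top \Pi \Phi$ is an $(\epsilon/2, \lambda)$-spectral approximation of $\tilde{K}$. An additive chaining argument using $\|K - \tilde{K}\| \le \epsilon\lambda/4$ together with the PSD bound $\epsilon\lambda/4 \cdot I \preceq (\epsilon/4)(K+\lambda I)$ upgrades this to an $(\epsilon, \lambda)$-spectral approximation of $K$. Setting $Z := \Pi \Phi \in \RR^{s \times n}$ and noting that each sampled row of $\Phi$ has $j$-th entry of the form $\alpha_k v_j \prod_{\ell=1}^k X_{i_\ell, j}$ (for some $k \le q$ and indices $i_1,\ldots,i_k$) assembled in $O(nq)$ time, computing $Z$ fits within the overall runtime. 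Substituting $q = \widetilde{O}(r)$ and $\|\tilde{K}\| = O(\|K\| + \lambda)$ into the runtime of Theorem~\ref{main-thrm-gpk} produces exactly the stated $\widetilde{O}\bigl(\tfrac{r^8}{\epsilon^4} s_\lambda^2 n + r^3 \sqrt{\|K\|/\lambda}\, n + nd\bigr)$ bound. The main obstacle is calibrating $q$ so that the truncation error is simultaneously small enough for the spectral-approximation chain and for preservation of the statistical dimension; once $q$ is fixed, the corollary follows as a direct specialization of Theorem~\ref{main-thrm-gpk}.
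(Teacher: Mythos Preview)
Your proposal is correct and follows essentially the same route as the paper: write $K_{i,j}=v_i\bigl(\sum_{k\ge 0}(x_i^\top x_j)^k/k!\bigr)v_j$ with $v_i=e^{-\|x_i\|_2^2/2}$, truncate at degree $q=\widetilde{O}(r)$ to obtain a GPK $\tilde K$, bound $\|K-\tilde K\|\le \epsilon\lambda/4$, apply Theorem~\ref{main-thrm-gpk} to $\tilde K$, and chain the two approximations. The paper bounds $\|K-\tilde K\|$ via the Frobenius norm whereas you use the entrywise bound times $n$, and you additionally verify that $\tilde s_\lambda\le 2s_\lambda$ via a Weyl-type Lipschitz argument---a point the paper leaves implicit when it substitutes $s_\lambda$ directly into the runtime of Theorem~\ref{main-thrm-gpk}; otherwise the arguments coincide.
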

	Note that for the Gaussian kernel we have $\| K \| \le {\rm tr}(K) = n$. Therefore, for constant $\epsilon$, the runtime of Corollary~\ref{main-theorem-Gaussian} is always upper bounded by $\widetilde{O}\left( r^{8} s_\lambda^2 n + r^3 \sqrt{ \frac{n}{\lambda} } \cdot n + nd \right)$. 
	For comparison, the runtime of \cite{song2021fast} for spectrally approximating the Gaussian kernel matrix is $\widetilde{O}\left( r^3 \cdot n^2 + nd \right)$, which means that for any $\lambda =\omega (1/{n})$ and any $r = o\left(n^{0.2}\right)$, our runtime is strictly faster than the runtime of \cite{song2021fast}.

	\paragraph{Neural Tangent Kernel (NTK).} 
	We consider the NTK corresponding to an infinitely wide neural network with two layers and ReLU activation function. This kernel function is defined as follows for any $x,y \in \RR^{d}$~\cite{zandieh2021scaling}
	\begin{align}
		&\Theta_{{\tt ntk}}(x,y) := \|x\|_2\|y\|_2 \cdot k_{{\tt ntk}}\left( \frac{\langle x , y \rangle}{\|x\|_2\|y\|_2} \right),\label{eq:def-ntk}\\
		&k_{{\tt ntk}}(\beta) := \frac{1}{\pi} \left( \sqrt{1-\beta^2} + 2 \beta(\pi - \arccos{\beta}) \right).\nonumber
	\end{align}
	We show in Appendix~\ref{appndx-applications-ntk} that there exists a GPK that well-approximates $\Theta_{{\tt ntk}}(x,y)$ defined in \eqref{eq:def-ntk} on datasets with bounded $\ell_2$ norm. Thus, we have the following corollary of Theorem~\ref{main-thrm-gpk}:
	
	\begin{corollary}[Application to NTK]\label{main-theorem-ntk}
		For any $r>0$ and dataset $x_1, \ldots x_n \in \RR^{n}$ with $\max_{i\in [n]} \| x_i \|_2^2 \le r$, any $\lambda, \epsilon>0$, if $K \in \RR^{n \times n}$ is the NTK kernel matrix, i.e., $K_{i,j} := \Theta_{{\tt ntk}}(x_i,x_j)$ as per \eqref{eq:def-ntk}, with statistical dimension $s_\lambda = {\rm tr}\left( K (K + \lambda I)^{-1} \right)$, then there exists an algorithm that computes $Z \in \RR^{s \times n}$ with $s = O( \frac{s_\lambda}{\epsilon^2} \log n)$ in time $\widetilde{O}\left( \left( \frac{nr}{\epsilon\lambda} \right)^{16} \frac{s_\lambda^2 n}{\epsilon^4} + nd \right)$, such that with probability $1 - \frac{1}{{\rm poly}(n)}$, $Z^\top Z$ is an $(\epsilon , \lambda)$-spectral approximation to $K$.
	\end{corollary}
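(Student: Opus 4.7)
The plan is to approximate the NTK kernel matrix by a GPK matrix of polynomial degree $q$, argue that truncation preserves the spectral approximation, and then invoke Theorem~\ref{main-thrm-gpk} on the resulting GPK. Writing $\bar{x}_i := x_i/\|x_i\|_2$ when $x_i \neq 0$ (and arbitrarily otherwise), one has $\Theta_{{\tt ntk}}(x_i,x_j) = \|x_i\|_2\|x_j\|_2 \cdot k_{{\tt ntk}}(\langle \bar{x}_i,\bar{x}_j\rangle)$. Using $\pi - \arccos\beta = \pi/2 + \arcsin\beta$, the NTK profile admits the decomposition $k_{{\tt ntk}}(\beta) = \tfrac{1}{\pi}\sqrt{1-\beta^2} + \beta + \tfrac{2}{\pi}\beta\arcsin\beta$ and hence a Taylor expansion $k_{{\tt ntk}}(\beta) = \sum_{j\ge 0} c_j \beta^j$ that converges on $[-1,1]$. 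By Schoenberg's theorem (the NTK is a PSD kernel on every unit sphere) one has $c_j \ge 0$ for all $j$, which can also be checked directly by comparing the binomial series of $\sqrt{1-\beta^2}$ against the positive series of $\beta\arcsin\beta$.

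Next I would bound the truncation error and recast the approximation as a GPK. Since $k''_{{\tt ntk}}$ has algebraic singularities of order $(1-\beta^2)^{-1/2}$ at $\beta = \pm 1$, standard estimates on Taylor coefficients of functions with $(1-\beta)^{1/2}$-type boundary behaviour give $c_j = O(j^{-3/2})$, so $\delta(q) := \sum_{j>q} c_j = O(q^{-1/2})$. Defining $K_q$ entrywise by $(K_q)_{ij} := \|x_i\|_2\|x_j\|_2 \cdot k_q(\langle \bar{x}_i,\bar{x}_j\rangle)$ with the truncated profile $k_q(\beta) := \sum_{j=0}^q c_j \beta^j$, I get $\|K - K_q\|\le \|K - K_q\|_F \le n r \,\delta(q)$, and choosing $q = \Theta\bigl((nr/(\epsilon\lambda))^{2}\bigr)$ makes $\|K - K_q\|\le \epsilon\lambda/2$. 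Because both $K$ and $K_q$ are PSD, this additive bound upgrades to the Loewner sandwich $(1-\epsilon/2)(K_q + \lambda I)\preceq K + \lambda I \preceq (1+\epsilon/2)(K_q + \lambda I)$, so composing with any $(\epsilon/2,\lambda)$-spectral approximation of $K_q$ yields an $(\epsilon,\lambda)$-spectral approximation of $K$ for $\epsilon\le 1/2$; the sandwich also implies $s_\lambda(K_q) = \Theta(s_\lambda(K))$.

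Setting $v_i := \|x_i\|_2$ and $\alpha_j := \sqrt{c_j}$ (well-defined since $c_j \ge 0$), the identity $(K_q)_{ij} = v_i v_j \sum_{j=0}^q \alpha_j^2 \langle \bar{x}_i,\bar{x}_j\rangle^j$ shows that $K_q$ is exactly the GPK attached to $\bar{X}$, $v$, $\alpha$ in the sense of Definition~\ref{def;GPK}. Invoking Theorem~\ref{main-thrm-gpk} on this GPK with error parameter $\epsilon/2$ produces a sampling matrix $\Pi$ with $O(\epsilon^{-2} s_\lambda \log n)$ rows such that $\Phi^\top \Pi^\top \Pi \Phi$ is an $(\epsilon/2,\lambda)$-spectral approximation of $K_q$; setting $Z := \Pi \Phi$ and applying the Loewner chain above gives the desired approximation of $K$. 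Normalizing $X$ to $\bar{X}$ costs $O(nd)$; plugging $\|K\|\le \mathrm{tr}(K)\le 2nr$ and $q = \Theta\bigl((nr/(\epsilon\lambda))^{2}\bigr)$ into the runtime of Theorem~\ref{main-thrm-gpk}, the $q^8 s_\lambda^2 n/\epsilon^4$ term becomes $(nr/(\epsilon\lambda))^{16} s_\lambda^2 n/\epsilon^4$ and dominates the $\sqrt{\|K\|/\lambda}\,q^3 n$ term, yielding the advertised $\widetilde{O}\bigl((nr/(\epsilon\lambda))^{16} s_\lambda^2 n/\epsilon^4 + nd\bigr)$.

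The main obstacle is producing a quantitatively sharp tail bound $\delta(q)$. The even Taylor coefficients of $\tfrac{1}{\pi}\sqrt{1-\beta^2}$ are negative for $j\ge 2$, so to establish simultaneously the PSD property $c_j\ge 0$ and the $j^{-3/2}$ decay of the combined series one must carefully argue that the positive even coefficients coming from $\tfrac{2}{\pi}\beta\arcsin\beta$ strictly dominate the negative even coefficients from $\tfrac{1}{\pi}\sqrt{1-\beta^2}$ while inheriting the same asymptotic rate. Everything else — the Loewner step upgrading the $(\epsilon/2,\lambda)$-approximation of $K_q$ to an $(\epsilon,\lambda)$-approximation of $K$, the identification with Definition~\ref{def;GPK}, and the runtime bookkeeping — is routine once the polynomial approximation is in place.
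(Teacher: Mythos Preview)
Your proposal is correct and follows essentially the same route as the paper: approximate $K$ by a truncated polynomial GPK on the normalized data, bound the truncation error in operator norm by $O(nr\,q^{-1/2})$, set $q=\Theta((nr/(\epsilon\lambda))^2)$, and invoke Theorem~\ref{main-thrm-gpk}. The only substantive difference is how the ``main obstacle'' you flag is handled: the paper sidesteps the indirect Schoenberg/singularity argument by writing down the closed-form Taylor coefficients of $k_{\tt ntk}$ explicitly, namely $c_0=1/\pi$, $c_1=1$, $c_j=0$ for odd $j>1$, and $c_{2\ell+2}=\tfrac{1}{\pi}\tfrac{(2\ell+3)(2\ell)!}{2^{2\ell}(\ell!)^2(2\ell+1)(2\ell+2)}$, from which nonnegativity is manifest and the bound $c_{2\ell+2}\le \tfrac{1}{2\ell^{3/2}}$ (hence $\delta(q)=O(q^{-1/2})$) follows by Stirling. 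Your abstract argument reaches the same conclusions but leaves the cancellation between the $\sqrt{1-\beta^2}$ and $\beta\arcsin\beta$ series to be checked; the paper's explicit computation makes this verification immediate.
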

	Note that, for constant $\epsilon$ and any $r = (\log n)^{O(1)}$, the runtime of Corollary~\ref{main-theorem-ntk} is upper bounded by $\widetilde{O}\left( \left( \frac{n}{\lambda} \right)^{16} \cdot s_\lambda^2 n  + nd \right)$. 
	For comparison, the runtime of \cite{song2021fast} for spectrally approximating the NTK on datasets with unit radius $r=1$ is $\widetilde{O}\left( n^{11/3} + nd \right)$, which means that for any $\lambda =\omega (n^{5/6})$, our runtime is strictly faster than the runtime of \cite{song2021fast}.
	Furthermore, the random features proposed in \cite{zandieh2021scaling} requires $\widetilde{O}\left( (n/\lambda) \cdot n d^2 \right)$ operations to spectrally approximate the NTK, which is slower than our runtime for high dimensional datasets with $d = \omega\left( (n/\lambda)^{15} \right)$. Additionally, Corollary~\ref{main-theorem-ntk} applies to datasets with arbitrary radius $r$ while both of \cite{song2021fast} and \cite{zandieh2021scaling} only apply to datasets with unit radius.
	
	\section{Experiments}\label{sec:experiments}
	In this section we apply our sampling algorithm to accelerate regression and classification on real-world datasets.
	We approximately solve the kernel ridge regression problem by running least squares regression on the features sampled by our algorithm.
	We also reduce the classification problem to regression by applying a one-hot encoding to the labels of classes and then use our fast regression method to solve it. 
	In the experiments, we focus on ridge regression with a Gaussian kernel as well as the depth-$1$ Neural Tangent kernel, and compare our result from Corollaries~\ref{main-theorem-Gaussian} and \ref{main-theorem-ntk} to various popular sampling and sketching methods for Gaussian and Neural Tangent kernels. 
	The classification error rate and root mean square error (RMSE) on the testing sets are summarized in Table~\ref{tab-regression-classification} (average over 5 trials with different random seeds). For each task, the number of features and sketching dimensions are chosen to be equal across all different methods. Thus, we can compare different methods given that the memory needed to store the approximate kernel matrices is equal for all methods.

	\setlength{\tabcolsep}{6pt}
	\begin{table}[t]
		\caption{Approximate kernel ridge regression/classification with Gaussian and Neural Tangent kernels. We denote the ridge parameter by $\lambda$, and the number of samples or sketching dimension of different methods by $s$. The RMSE and classification error rates are measured on the testing sets for each task.} \label{tab-regression-classification}
		
		\centering
		\scalebox{0.85}{
			\begin{tabular}{@{}lcccccccc@{}}
				\hline
				Data-set:& \multicolumn{2}{c}{MNIST} &   \multicolumn{2}{c}{Location of CT}\\
				\hline
				$n / d$ & \multicolumn{2}{c}{$60{,}000$ / $784$} & \multicolumn{2}{c}{$53{,}500$ / $384$} \\
				$\lambda$ / $s$ & \multicolumn{2}{c}{$1$ / $1{,}000$} & \multicolumn{2}{c}{$0.5$ / $2{,}000$} \\
				Kernel function & \multicolumn{2}{c}{$\Theta_{\tt ntk}(x,y)$} & $\Theta_{\tt ntk}(x,y)$ & $e^{-\frac{\|x-y\|^2}{40}}$ \\
				Metric & \multicolumn{2}{c}{Error ($\%$)} & RMSE & RMSE  \\
				\hline\hline
				\makecell[l]{Fourier Features\\ {\small\citep{rahimi2008random}}} & \multicolumn{2}{c}{--} & {--} & 4.92   \\
				\hline
				\makecell[l]{PolySketch \\ {\small \citep{ahle2019oblivious}}\\ {\small \citep{zandieh2021scaling}}} & \multicolumn{2}{c}{5.92} &  4.87 & 5.05 \\
				\hline
				\makecell[l]{Accelerated PolySketch \\ {\small \citep{song2021fast}}} & \multicolumn{2}{c}{6.07}  & 4.93 & 5.14 \\
				\hline
				\makecell[l]{Adaptive Sampling \\ {\small \citep{woodruff2020near}}} & \multicolumn{2}{c}{5.87}  & 4.72 & 4.76 \\
				\midrule
				\makecell[l]{Our Method \\ Corollaries~\ref{main-theorem-Gaussian} and \ref{main-theorem-ntk}} & \multicolumn{2}{c}{\bf 5.44}   & {\bf 4.71} & {\bf 4.76}  \\
				\hline
			\end{tabular}
		}
		
	\end{table}
	
	While our theoretical results guarantee that for large enough datasets in high dimensions our method performs better than prior work, our experiments verify that even for moderately-sized datasets with dimension $d < 1000$ our method performs well.
	In particular, we achieve the best RMSE and classification error rate compared to all other methods under the condition that the number of sampled features or sketching dimension is fixed for each method. We remark that the Fourier features method~\cite{rahimi2008random} only applies to shift invariant kernels such as the Gaussian kernel and cannot be used for Neural Tangent kernels. On the other hand, the sketching methods of \cite{ahle2019oblivious} and \cite{song2021fast} can be used to sketch the Taylor expansion of the NTK, as was previously done in \cite{zandieh2021scaling}.
	
	\paragraph{Accuracy/memory trade-off.} Figure~\ref{fig-mnist-class} shows the trade-off of various methods for MNIST classification using the NTK kernel function. We plot the testing set accuracy as a function of the number of samples or sketching dimension, which is a parameter that directly controls the memory usage of different methods. It can been seen that our method has the best accuracy/memory trade-off.

	\begin{figure}[!t]
		\centering
		\includegraphics[width=0.42\textwidth]{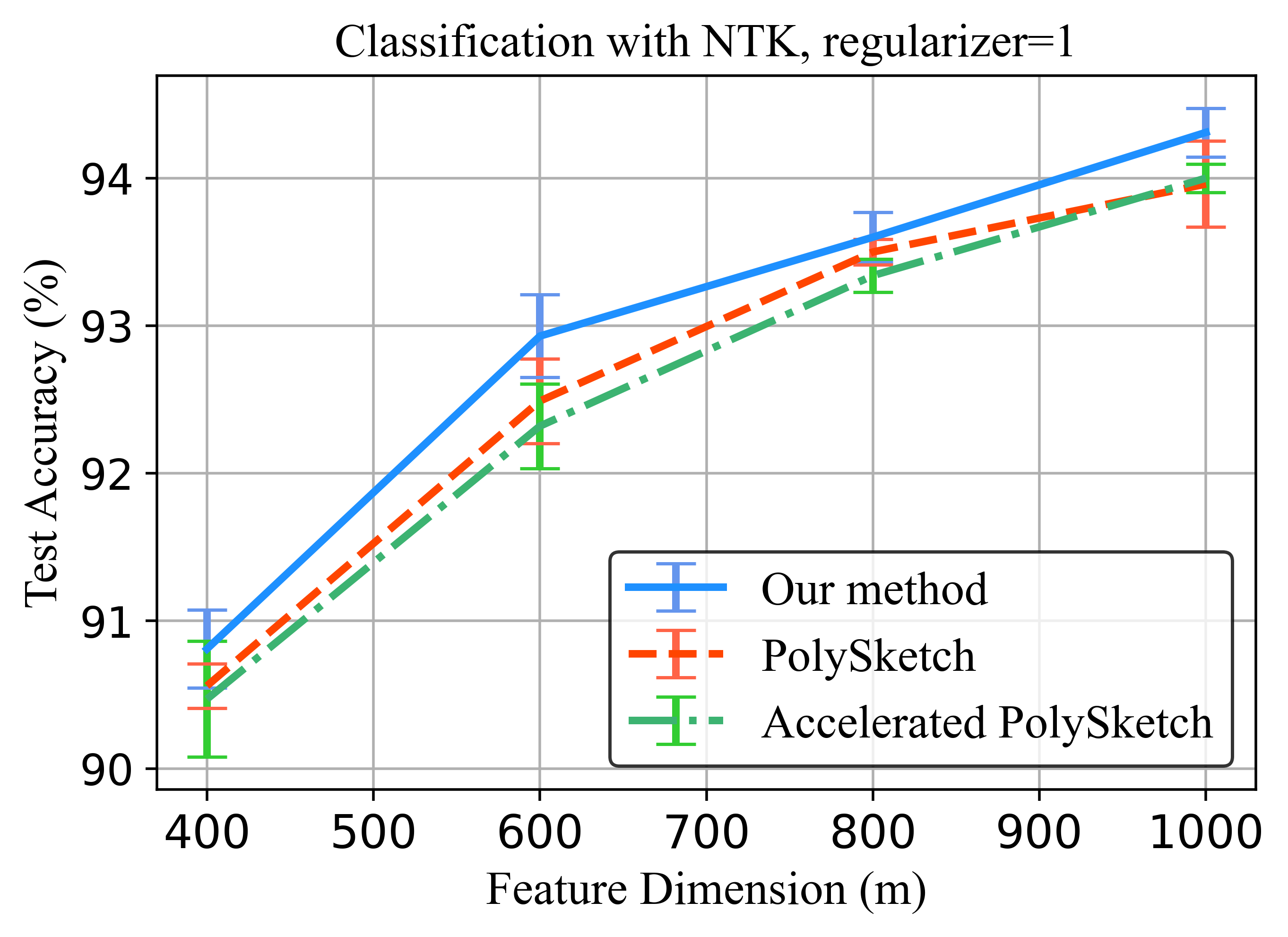}
		\caption{Approximate classification of the MNIST dataset using depth-$1$ Neural Tangent KRR. The ridge parameter is $\lambda=1$. The classification error rates are measured on the testing set.}\label{fig-mnist-class}
	\end{figure}

	\section*{Acknowledgements}
	David Woodruff would like to thank NSF grant No. CCF-1815840, NIH grant 5401 HG 10798-2, ONR grant N00014-18-1-2562, and a Simons Investigator Award. 
	Amir Zandieh was supported by the Swiss NSF grant No. P2ELP2\textunderscore 195140.


	\nocite{langley00}
	
	\bibliography{example_paper}

\begin{thebibliography}{21}
\providecommand{\natexlab}[1]{#1}
\providecommand{\url}[1]{\texttt{#1}}
\expandafter\ifx\csname urlstyle\endcsname\relax
  \providecommand{\doi}[1]{doi: #1}\else
  \providecommand{\doi}{doi: \begingroup \urlstyle{rm}\Url}\fi

\bibitem[Ahle et~al.(2020)Ahle, Kapralov, Knudsen, Pagh, Velingker, Woodruff,
  and Zandieh]{ahle2019oblivious}
Ahle, T.~D., Kapralov, M., Knudsen, J.~B., Pagh, R., Velingker, A., Woodruff,
  D.~P., and Zandieh, A.
\newblock Oblivious sketching of high-degree polynomial kernels.
\newblock In \emph{Proceedings of the Fourteenth Annual ACM-SIAM Symposium on
  Discrete Algorithms}, pp.\  141--160. SIAM, 2020.

\bibitem[Ailon \& Chazelle(2009)Ailon and Chazelle]{ailon2009fast}
Ailon, N. and Chazelle, B.
\newblock The fast johnson--lindenstrauss transform and approximate nearest
  neighbors.
\newblock \emph{SIAM Journal on computing}, 39\penalty0 (1):\penalty0 302--322,
  2009.

\bibitem[Avron et~al.(2014)Avron, Nguyen, and Woodruff]{avron2014subspace}
Avron, H., Nguyen, H., and Woodruff, D.
\newblock Subspace embeddings for the polynomial kernel.
\newblock \emph{Advances in neural information processing systems}, 27, 2014.

\bibitem[Avron et~al.(2017)Avron, Kapralov, Musco, Musco, Velingker, and
  Zandieh]{avron2017random}
Avron, H., Kapralov, M., Musco, C., Musco, C., Velingker, A., and Zandieh, A.
\newblock Random fourier features for kernel ridge regression: Approximation
  bounds and statistical guarantees.
\newblock In \emph{Proceedings of the 34th International Conference on Machine
  Learning-Volume 70}, pp.\  253--262. JMLR. org, 2017.

\bibitem[Avron et~al.(2019)Avron, Kapralov, Musco, Musco, Velingker, and
  Zandieh]{avron2019universal}
Avron, H., Kapralov, M., Musco, C., Musco, C., Velingker, A., and Zandieh, A.
\newblock A universal sampling method for reconstructing signals with simple
  fourier transforms.
\newblock In \emph{Proceedings of the 51st Annual ACM SIGACT Symposium on
  Theory of Computing}, pp.\  1051--1063, 2019.

\bibitem[Charikar et~al.(2002)Charikar, Chen, and
  Farach-Colton]{charikar2002finding}
Charikar, M., Chen, K., and Farach-Colton, M.
\newblock Finding frequent items in data streams.
\newblock In \emph{International Colloquium on Automata, Languages, and
  Programming}, pp.\  693--703. Springer, 2002.

\bibitem[Cohen et~al.(2015)Cohen, Lee, Musco, Musco, Peng, and
  Sidford]{cohen2015uniform}
Cohen, M.~B., Lee, Y.~T., Musco, C., Musco, C., Peng, R., and Sidford, A.
\newblock Uniform sampling for matrix approximation.
\newblock In \emph{Proceedings of the 2015 Conference on Innovations in
  Theoretical Computer Science}, pp.\  181--190, 2015.

\bibitem[Dasgupta \& Gupta(2003)Dasgupta and Gupta]{dasgupta2003elementary}
Dasgupta, S. and Gupta, A.
\newblock An elementary proof of a theorem of johnson and lindenstrauss.
\newblock \emph{Random Structures \& Algorithms}, 22\penalty0 (1):\penalty0
  60--65, 2003.

\bibitem[El~Alaoui \& Mahoney(2014)El~Alaoui and Mahoney]{alaoui2014fast}
El~Alaoui, A. and Mahoney, M.~W.
\newblock Fast randomized kernel methods with statistical guarantees.
\newblock \emph{stat}, 1050:\penalty0 2, 2014.

\bibitem[Haagerup \& Musat(2007)Haagerup and Musat]{haagerup2007best}
Haagerup, U. and Musat, M.
\newblock On the best constants in noncommutative khintchine-type inequalities.
\newblock \emph{Journal of Functional Analysis}, 250\penalty0 (2):\penalty0
  588--624, 2007.

\bibitem[Jacot et~al.(2018)Jacot, Gabriel, and Hongler]{jacot2018neural}
Jacot, A., Gabriel, F., and Hongler, C.
\newblock Neural tangent kernel: Convergence and generalization in neural
  networks.
\newblock \emph{Advances in neural information processing systems}, 31, 2018.

\bibitem[Li et~al.(2013)Li, Miller, and Peng]{li2013iterative}
Li, M., Miller, G.~L., and Peng, R.
\newblock Iterative row sampling.
\newblock In \emph{2013 IEEE 54th Annual Symposium on Foundations of Computer
  Science}, pp.\  127--136. IEEE, 2013.

\bibitem[Meister et~al.(2019)Meister, Sarlos, and Woodruff]{meister2019tight}
Meister, M., Sarlos, T., and Woodruff, D.
\newblock Tight dimensionality reduction for sketching low degree polynomial
  kernels.
\newblock \emph{Advances in Neural Information Processing Systems},
  32:\penalty0 9475--9486, 2019.

\bibitem[Musco \& Musco(2017)Musco and Musco]{musco2017recursive}
Musco, C. and Musco, C.
\newblock Recursive sampling for the nystrom method.
\newblock In \emph{Advances in Neural Information Processing Systems}, pp.\
  3833--3845, 2017.

\bibitem[Pham \& Pagh(2013)Pham and Pagh]{pham2013fast}
Pham, N. and Pagh, R.
\newblock Fast and scalable polynomial kernels via explicit feature maps.
\newblock In \emph{Proceedings of the 19th ACM SIGKDD international conference
  on Knowledge discovery and data mining}, pp.\  239--247, 2013.

\bibitem[Rahimi \& Recht(2008)Rahimi and Recht]{rahimi2008random}
Rahimi, A. and Recht, B.
\newblock Random features for large-scale kernel machines.
\newblock In \emph{Advances in neural information processing systems}, pp.\
  1177--1184, 2008.

\bibitem[Rahimi \& Recht(2009)Rahimi and Recht]{rahimi2007random}
Rahimi, A. and Recht, B.
\newblock
  {\href{https://people.eecs.berkeley.edu/~brecht/papers/07.rah.rec.nips.pdf}{Random
  Features for Large-Scale Kernel Machines}}.
\newblock 2009.

\bibitem[Song et~al.(2021)Song, Woodruff, Yu, and Zhang]{song2021fast}
Song, Z., Woodruff, D., Yu, Z., and Zhang, L.
\newblock Fast sketching of polynomial kernels of polynomial degree.
\newblock In \emph{International Conference on Machine Learning}, pp.\
  9812--9823. PMLR, 2021.

\bibitem[Williams \& Seeger(2001)Williams and Seeger]{williams2001using}
Williams, C. and Seeger, M.
\newblock Using the nystroem method to speed up kernel machines.
\newblock \emph{Advances in Neural Information Processing Systems 13}, 2001.

\bibitem[Woodruff \& Zandieh(2020)Woodruff and Zandieh]{woodruff2020near}
Woodruff, D. and Zandieh, A.
\newblock Near input sparsity time kernel embeddings via adaptive sampling.
\newblock In \emph{International Conference on Machine Learning}, pp.\
  10324--10333. PMLR, 2020.

\bibitem[Zandieh et~al.(2021)Zandieh, Han, Avron, Shoham, Kim, and
  Shin]{zandieh2021scaling}
Zandieh, A., Han, I., Avron, H., Shoham, N., Kim, C., and Shin, J.
\newblock Scaling neural tangent kernels via sketching and random features.
\newblock In Beygelzimer, A., Dauphin, Y., Liang, P., and Vaughan, J.~W.
  (eds.), \emph{Advances in Neural Information Processing Systems}, 2021.
\newblock URL \url{https://openreview.net/forum?id=vIRFiA658rh}.

\end{thebibliography}
	\bibliographystyle{icml2022}

	\newpage
	\appendix
	\onecolumn

	\section{Preliminary Sketching Results}\label{appndx:prelim-sektch}
	In this section we provide preliminary sketching results. In particular, we provide a proof of Lemma~\ref{soda-result}.
	
	\begin{proofof}{Lemma~\ref{soda-result}}
		By invoking Corollary 4.1 of \cite{ahle2019oblivious}, we find that there exists a random sketch $S^q \in\RR^{m \times d^q}$ such that if $m = C \cdot q \cdot\varepsilon^{-2}$ for some absolute constant $C$, then this sketch satisfies the $(\epsilon, 1/20, 2)$-JL-moment property. It follows from the definition of the JL-moment property along with Minkowski's Inequality that for any $Y\in \RR^{d^q \times n}$, 
		\[ \EE \left[ \left| \left\| S^q Y \right\|_F^2 - \left\| Y \right\|_F^2 \right|^2 \right] \le \epsilon^2/20 \cdot \left\| Y \right\|_F^4. \]
		Thus, by applying Markov's inequality on $\left| \left\| S^q Y \right\|_F^2 - \left\| Y \right\|_F^2 \right|^2$, we find that
		\begin{align*}
			\Pr\left[ \left\| S^q Y \right\|_F^2 \in (1\pm\varepsilon) \|Y\|_F^2\right] \ge 19/20.
		\end{align*}
		This immediately proves the first statement of the lemma.
		
		\begin{figure*}
			\centering
			
			\scalebox{0.8}{
				\begin{tikzpicture}[<-, level/.style={sibling distance=75mm/#1,level distance = 2.3cm}]
					\node [arn_t] (z){}
					child {node [arn_t] (a){}edge from parent [electron]
						child {node [arn_t] (b){}edge from parent [electron]
						}
						child {node [arn_t] (e){}edge from parent [electron]
						}
					}
					child { node [arn_t] (h){}edge from parent [electron]
						child {node [arn_t] (i){}edge from parent [electron]
						}
						child {node [arn_t] (l){}edge from parent [electron]
						}
					};
					
					\node []	at (z.south)	[label=\large{${\bf S_{\text{base}}}$}]	{};

					\node []	at (a.south)	[label=\large{${\bf S_{\text{base}}}$}]	{};

					\node []	at (b.south)	[label=\large${\bf T_{\text{base}}}$]	{};
					\node []	at (e.south)	[label=\large${\bf T_{\text{base}}}$] {};
					\node []	at (h.south)	[label=\large{${\bf S_{\text{base}}}$}] {};

					\node []	at (i.south)	[label=\large${\bf T_{\text{base}}}$] {};
					\node []	at (l.south)	[label=\large${\bf T_{\text{base}}}$] {};

					\draw[draw=black, ->] (2.2,0.2) -- (0.7,0.1);
					\draw[draw=black, ->] (4.5,0) -- (3.8,-1.7);
					
					\node [] at (2.2,0.5) [label=right:\large{ internal nodes: {\sc TensorSketch}}]	{}
					edge[->, bend right=45] (-3.6,-1.7);
					
					\draw[draw=black, ->] (4.0,-6.1) -- (5.2,-5.2);
					\draw[draw=black, ->] (2.8,-6.1) -- (2,-5.2);
					\draw[draw=black, ->] (1.9,-6.1) -- (-1.6,-5.15);
					
					\node [] at (1,-6.5) [label=right:\large{leaves: {\sc CountSketch}}]	{}
					edge[->, bend left=15] (-5.3,-5.1);
					
				\end{tikzpicture}
			}
			\par

			\caption{The structure of sketch $S^q$ proposed in Theorem 1.1 of \cite{ahle2019oblivious}: the sketch matrices in nodes of the tree labeled with $S_{\text{base}}$ and $T_{\text{base}}$ are independent instances of degree-2 {\sc TensorSketch} and {\sc CountSketch}, respectively.} \label{sketchingtree}
		\end{figure*}
		
		It was shown in \cite{ahle2019oblivious} that the sketch $S^q$ can be represented by a binary tree with $q$ leaves. As shown in Figure~\ref{sketchingtree}, the leaves are independent copies of {\sc CountSketch} and the internal nodes are independent instances of degree-2 {\sc TensorSketch}~\cite{pham2013fast}, which can sketch 2-fold tensor products efficiently.
		The sketch $S^q$ can be applied to tensor product vectors of the form $u_1 \otimes u_2 \otimes \ldots u_q$ by recursive application of $O(q)$ independent instances of {\sc CouuntSketch}~\cite{charikar2002finding} and degree-2 {\sc TensorSketch} \cite{pham2013fast} on vectors $u_i$ and their sketched versions. 
		The use of {\sc CountSketch} in the leaves of this sketch structure ensures input sparsity runtime for sketching sparse input vectors. 
		
		\paragraph{Runtime analysis:} By Theorem 1.1 of \cite{ahle2019oblivious}, for any collection of vectors $u_1, u_2, \ldots u_q \in \RR^d$, $S^q \left( u_1 \otimes u_2 \otimes \ldots u_q \right)$ can be computed in time $O\left( q m \log m + \sum_{j=1}^q {\rm nnz}(u_j) \right)$. 
		From the binary tree structure of the sketch, shown in Figure~\ref{sketchingtree}, it follows that once we compute $S^q \left( u_1 \otimes u_2 \otimes \ldots u_q \right)$, then $S^q \left( e_1 \otimes u_{2} \otimes u_{3} \otimes \ldots u_q \right)$ can be computed by updating the path from one of the leaves to the root of the binary tree. This exactly amounts to applying an instance of {\sc CountSketch} on ${e}_1$ and then applying $O(\log q)$ instances of degree-2 {\sc TensorSketch} on the intermediate nodes of the tree.
		This can be computed in a total additional runtime of $O( m\log m \log q )$. By this argument, it follows that $S^q \left( e_1^{\otimes j} \otimes u_{j+1} \otimes u_{j+2} \ldots u_q \right)$ can be computed sequentially for all $j=0,1,2, \cdots q$ in total time $O\left( q m \log m \log q + \sum_{j=1}^q {\rm nnz}(u_j) \right)$. 
		By plugging in the value $m = O\left( \frac{q}{\varepsilon^2} \right)$, this runtime will be upper bounded by $O\left( \frac{q^2 \log^2 \frac{q}{\varepsilon}}{\varepsilon^2} + \sum_{j=1}^q {\rm nnz}(u_j) \right)$, which gives the second statement of the lemma.

	\end{proofof}

	In order to prove our main result about \SRHT with shared random signs in Lemma~\ref{lem:SRHT-shared-sign}, we use  Khintchine's inequality. We provide a formal statement of this inequality in the following lemma.
	\begin{lemma}[Khintchine's inequality~\cite{haagerup2007best}] \label{lem:khintchine}
		Let $t$ be a positive integer, $x\in \RR^d$, and $(\sigma_i)_{i\in[d]}$
		be independent Rademacher $\pm1$ random variables.
		Then
		\begin{align*}
			\left( E \left[ |\langle \sigma ,  x \rangle|^t \right] \right)^{1/t} \,\le C_t\, \|x\|_2,
		\end{align*}
		where $C_t\le \sqrt2 \left(\frac{\Gamma((t+1)/2)}{\sqrt\pi}\right)^{1/t}\le\sqrt{t}$ for all $t\ge 1$. Consequently, by Minkowski's Inequality along with Markov's inequality, for any $\delta>0$ and any matrix $X \in \RR^{d \times n}$, we have
		\[ \Pr\left[ \left\| X^\top \cdot \sigma \right\|_2 \ge 2\sqrt{ \log_2 \frac{1}{\delta}} \cdot \| X \|_F \right] \le \delta. \]
		
	\end{lemma}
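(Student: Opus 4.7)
The first inequality is the classical Khintchine inequality with sharp constant, and my plan is to appeal directly to the Haagerup reference already cited in the statement. If a self-contained argument were needed, the standard route is to compute $\EE\!\left[\langle \sigma, x\rangle^{2k}\right]$ for integer $k$ by expanding and keeping only the perfect-pair Wick contractions, obtaining $(2k)!/(2^k k!) \cdot \|x\|_2^{2k} \le (2k)^k \|x\|_2^{2k}$, and then extending to all real $t \ge 1$ by log-convexity of $t \mapsto \|\cdot\|_{L^t}$ (Lyapunov's inequality).

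For the matrix tail bound in the ``consequently'' part, I will first produce an $L^t$ moment estimate for $\|X^\top \sigma\|_2$ via a Minkowski trick. Writing $\|X^\top \sigma\|_2^2 = \sum_{i=1}^n \langle X_{\star,i}, \sigma\rangle^2$ as a sum of non-negative random variables and applying Minkowski's inequality in $L^{t/2}$ (valid since $t/2 \ge 1$ whenever $t \ge 2$), I get
\[
\bigl(\EE\!\left[\|X^\top \sigma\|_2^{t}\right]\bigr)^{2/t} \;=\; \bigl\| \|X^\top \sigma\|_2^2 \bigr\|_{L^{t/2}} \;\le\; \sum_{i=1}^n \bigl\| \langle X_{\star,i},\sigma\rangle \bigr\|_{L^t}^{\,2} \;\le\; t\sum_{i=1}^n \|X_{\star,i}\|_2^2 \;=\; t\,\|X\|_F^2,
\]
where the second inequality is the first part of the lemma applied column-by-column. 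Raising both sides to the $t/2$ power gives $\EE[\|X^\top \sigma\|_2^t] \le t^{t/2}\,\|X\|_F^t$.

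A Markov bound on the $t$-th power then yields
\[
\Pr\!\left[\|X^\top \sigma\|_2 \ge \tau\,\|X\|_F\right] \;\le\; \bigl(\sqrt{t}/\tau\bigr)^{t}.
\]
Choosing $t = \log_2(1/\delta)$ and $\tau = 2\sqrt{\log_2(1/\delta)} = 2\sqrt{t}$ forces the right-hand side to $2^{-t} = \delta$, which is exactly the claimed bound. The only subtlety is that this choice satisfies $t \ge 2$ (needed for the Minkowski step) only when $\delta \le 1/4$; in the complementary regime $\delta \in (1/4,1]$ the statement is essentially trivial, since $t=2$ already gives $\EE[\|X^\top \sigma\|_2^2] = \|X\|_F^2$ exactly, and Markov yields $\Pr[\|X^\top\sigma\|_2 \ge 2\|X\|_F] \le 1/4 \le \delta$. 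The conceptual heart of the argument, and the only nonroutine step, is the Minkowski-in-$L^{t/2}$ move that aggregates the per-column scalar Khintchine bound into a single Frobenius-scale tail bound for the vector $X^\top \sigma$.
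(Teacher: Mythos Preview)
Your argument is correct and follows exactly the route the paper itself signals in the lemma statement (Minkowski in $L^{t/2}$ to aggregate per-column Khintchine bounds, then Markov at order $t = \log_2(1/\delta)$); the paper provides no further proof beyond that hint, so there is nothing to compare against.

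One small remark on your edge case: for $\delta \in (1/4,1]$ you bound $\Pr[\|X^\top\sigma\|_2 \ge 2\|X\|_F]$, but the stated threshold $2\sqrt{\log_2(1/\delta)}\,\|X\|_F$ is \emph{smaller} than $2\|X\|_F$ there, so the implication runs the wrong way. In fact the inequality as written is not literally true for $\delta$ near $1$ (take $X=I_d$, so $\|X^\top\sigma\|_2 = \|X\|_F$ deterministically), so this is a defect of the lemma statement rather than of your proof; the lemma is only ever invoked in the paper with $\delta = 1/\mathrm{poly}(n)$, where your main argument applies cleanly.
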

	
	\subsection{Proof of Lemma~\ref{lem:TensorNormDS}}\label{appndx:proof-DS}
	
	Let $P_{i,j}$ be the matrices defined in line~\ref{alg:DS-sketched-dataset} of Algorithm~\ref{alg:TensorNormDS}. For every $V \in \RR^{n \times r}$, we can write,
	\[ P_{i,j} \cdot V =  Q_i \cdot S_i^q \cdot \left( \left( E_1^{\otimes j} \otimes X^{(j+1)} \otimes X^{(j+2)} \otimes \ldots X^{(q)} \right) \cdot V \right), \]
	where $S^q_i$ is an instance of degree-$q$ \PolyS and $Q_i$ is an \SRHT.
	By Lemma~\ref{soda-result} and Lemma~\ref{lem:srht} and a union bound, for every fixed $i \in [T]$ and $j\in \{ 0,1, 2, \ldots q \}$ the following holds,
	\begin{equation}\label{sketched-norm-prob-error}
		\Pr\left[ \|P_{i,j} \cdot V \|_F^2 \in (1\pm \epsilon)\left\| \left( E_1^{\otimes j} \otimes X^{(j+1)} \otimes X^{(j+2)} \otimes \ldots X^{(q)} \right) \cdot V \right\|_F^2 \right] \ge 9/10
	\end{equation}
	Using the properties of tensor products and the definition of matrix $E_1$ we have,
	\[\left\| \left( E_1^{\otimes j} \otimes X^{(j+1)} \otimes X^{(j+2)} \otimes \ldots X^{(q)} \right) \cdot V \right\|_F^2 = \left\| \left( X^{(j+1)} \otimes X^{(j+2)} \otimes \ldots X^{(q)} \right) \cdot V \right\|_F^2\]
	Because $\tilde{z}_j$ is defined as the median over $T = \Omega(\log n)$ independent copies in line~\ref{median-alg} of Algorithm~\ref{alg:TensorNormDS}, using the above equality and \eqref{sketched-norm-prob-error} we have,
	\[\Pr\left[ \tilde{z} \in (1\pm \epsilon)\left\| \left( X^{(j+1)} \otimes X^{(j+2)} \otimes \ldots X^{(q)} \right) \cdot V \right\|_F^2 \right] \ge 1 - \frac{1}{{\rm poly}(n)}.\]
	This proves the first statement of the  lemma.

	\paragraph{Runtime and Memory:} The time to compute $P_{i,j}$ for a fixed $i$ and all $j=0,1, \ldots q$ is $O\left( {\frac{q^2 \log^2\frac{q}{\epsilon}}{\epsilon^2}} \cdot n + \sum_{j=1}^q {\rm nnz}\left(X^{(j)} \right) \right)$, by Lemma~\ref{soda-result} and Lemma~\ref{lem:srht}. Therefore, the total time to compute $P_{i,j}$ for all $i \in [T]$ and all $j=0,1, \ldots q$ is $O\left( {\frac{q^2 \log^2\frac{q}{\epsilon}}{\epsilon^2}} \cdot n \log n + \log n \cdot \sum_{j=1}^q {\rm nnz}\left(X^{(j)} \right) \right)$. Since matrices $P_{i,j}$ are of size $m' \times n$, the total memory needed to store them for all $i$ and $j$ is $O\left( \frac{q \log(1/\epsilon)}{\epsilon^2} \cdot n \log n \right)$. Finally note that the runtime of \textsc{Query}$(V, j)$ is dominated by time needed to compute the product $P_{i,j} \cdot V$ for $i \in [T]$. This can be done in $O\left( \frac{\log(1/\epsilon)}{\epsilon^2} \cdot \log n \cdot {\rm nnz}(V) \right)$ operations.
	
	\section{Spectral Approximation to Tensor Product Matrices $\Phi = \bigotimes_{j=1}^q X^{(j)}$}\label{appndx:proof-rowsampler-distinct-datasets}
	In this section we design the \textsc{RowSampler} procedure which can perform \emph{row norm sampling} as per Definition \ref{def:row-samp} on $\Phi (B^\top B + \lambda I)^{-1/2}$ for $\Phi=\bigotimes_{j=1}^q X^{(j)}$ using $\widetilde{O} \left( \sum_{i}\text{nnz}\left(X^{(i)} \right) \right)$ runtime. 
	Our primitive crucially relies on {\sc TensorNormDS}, given in Algorithm~\ref{alg:TensorNormDS}, to quickly estimate norm queries of the form $\left\| \left(\bigotimes_{j=1}^q X^{(j)}\right) V \right\|_F^2$.

	\begin{algorithm}[!t]
		\caption{\algoname{RowSampler} for $\Phi = \bigotimes_{j=1}^q X^{(j)}$}
		{\bf input}: $q,s \in \mathbb{Z}_+$, $X^{(1)}, \ldots X^{(q)} \in \RR^{d \times n}$, $B \in \RR^{m \times n}$, $\lambda>0$\\
		{\bf output}: Sampling matrix $S \in \RR^{s \times d^q}$
		\begin{algorithmic}[1]
			\STATE{Generate $H\in\RR^{d' \times n}$ with i.i.d. normal entries with $d'= C_1 q \log n$ rows}
			\STATE{$M \gets H \cdot (B^\top B + \lambda I)^{-1/2}$}\alglinelabel{M-alg}
			
			\STATE{Let \textsc{TNorm} be the DS in Algorithm~\ref{alg:TensorNormDS} for inputs $\left( X^{(1)}, X^{(2)}, \ldots , X^{(q)}, M \right)$ and $\epsilon=\frac{1}{20q}$} \alglinelabel{Tnorm-instantiation-rowsampler}

			\STATE{Let $h:[d]\rightarrow[s']$ be a fully independent and uniform hash function with $s' = \lceil {q}^{2}s \rceil$ buckets}
			\STATE{Define the set $h^{-1}(r):=\left\{ j\in[d]:h(j)=r \right\}$ for every $r\in[s']$}
			
			\STATE{For every $r\in[s']$ and $k \in [m']$, let $G_r^k \in \RR^{n'\times d_r}$ be independent instances of degree-$1$ \PolyS as per Lemma~\ref{soda-result}, where $d_r=|h^{-1}(r)|$, $n'=C_2q^2$, and $m' = C_3\log n$}
			
			\STATE{$W_{r, k}^a \gets G_r^k \cdot X^{(a)}_{h^{-1}(r),\star}$ for every $a \in [q]$, $k \in [m']$, and $r\in[s']$}\alglinelabel{W}

			\FOR{$\ell=1$ to $s$}
			
			\STATE{$D^{1} \gets I_n$ and $\beta_\ell \gets s$}\alglinelabel{beta-forloop}
			\FOR{$a=1$ to $q$}

			\STATE{$L^a_{r, k} \gets D^{a} \cdot W_{r, k}^{a\top}$ for every $k \in [m']$, and $r\in[s']$}
			
			\STATE{$p^{a}_r \gets  \textsc{Median}_{k \in [m']} \left\{\textsc{TNorm.Query}( L^a_{r, k} , a )\right\}$ for every $r \in [s']$}
			\STATE{$p^{a}_r \gets  p^{a}_r/ \sum_{t=1}^{s'} p^{a}_t$ for every $r \in [s']$}\alglinelabel{dist-par}
			\STATE{Sample $t \in [s']$ from distribution $\{p^a_r\}_{r=1}^{s'}$}

			\STATE{$q^{a}_i \gets \textsc{TNorm.Query}\left( D^{a} \cdot X^{(a)\top}_{i,\star} , a \right)$ for every $i\in h^{-1}(t)$}
			\STATE{$q^{a}_i \gets  q^{a}_i/ \sum_{j \in h^{-1}(t)} q^{a}_j$ for every $i\in h^{-1}(t)$}\alglinelabel{dist-qai}
			
			\STATE{Sample $i_a \in [d]$ from distribution $\{q^a_i \}_{i\in h^{-1}(t)}$}\alglinelabel{isample-alg}
			
			\STATE{$D^{a+1}\gets D^{a}\cdot {\rm diag}\left( X^{(a)}_{i_a,\star} \right)$}\alglinelabel{def-Da}
			
			\STATE{$\beta_\ell \gets  \beta_\ell \cdot p^{a}_{t} q_{i_a}^{a,t} $} \alglinelabel{beta-update}
			\ENDFOR

			\STATE{Let the $\ell^{th}$ row of $S$ be $ \beta_\ell^{-1/2}  
				\left({e}_{i_1} \otimes {e}_{i_2} \otimes \cdots {e}_{i_q}\right)^\top$}
			\ENDFOR
			\STATE{\textbf{return} $S$}
		\end{algorithmic}
		\label{alg:rotatedrowsampler-poly}
	\end{algorithm}
	
	\paragraph{Overview of Algorithm \ref{alg:rotatedrowsampler-poly}:} The goal of \textsc{RowSampler} is to generate a sample $(i_1,i_2, \cdots i_q) \in [d]^q$ with probability proportional to the squared norm of the row $(i_1, \cdots i_q)$ of matrix $\left( \bigotimes_{j=1}^q X^{(j)} \right) \cdot (B^\top B + \lambda I)^{-1/2}$. Because $(B^\top B + \lambda I)^{-1/2}$ has a large $n \times n$ size, we first compress it using random projection techniques without perturbing the row norm distribution of $\left( \bigotimes_{j=1}^q X^{(j)} \right) \cdot (B^\top B + \lambda I)^{-1/2}$ too much.
	This can be done by applying a JL-transformation to the rows of this matrix (see, e.g., \cite{dasgupta2003elementary}). 
	Let $H \in \RR^{d' \times n}$ be a random matrix with i.i.d. normal entries with $d' = C_1 q \log_2 n$ rows. With probability $1 - \frac{1}{{\rm poly}(n^q)}$ the norm of each row of the sketched matrix $\left( \bigotimes_{j=1}^q X^{(j)} \right) \cdot (B^\top B + \lambda I)^{-1/2} \cdot H^\top$ is preserved up to a $(1\pm 0.1)$ factor and hence by a union bound, with probability $1 - \frac{1}{{\rm poly}(n^q)}$, all row norms of the sketched matrix are within a $(1\pm 0.1)$ factor of the original row norms. This is done in line~\ref{M-alg} of the algorithm by computing $M := H \cdot (B^\top B + \lambda I)^{-1/2}$, which can be computed quickly since matrices $B$ and $H$ have few rows. 
	
	Now the problem is reduced to performing row norm sampling on $\left( \bigotimes_{j=1}^q X^{(j)} \right) \cdot M^\top$. 
	Note that computing the exact row norms of this matrix is out of the question since it has a huge $d^q$ number of rows. However, by using \textsc{TensorNormDS} that we designed in Algorithm~\ref{alg:TensorNormDS} and exploiting the properties of tensor products we can approximately generate samples from the row norm distribution in near input sparsity time as follows:
	
	First note that by basic properties of tensor products, the entries of $\left( X^{(1)} \otimes X^{(2)}  \ldots X^{(q)} \right) \cdot M^\top$ are in bijective correspondence with the entries of $\left(X^{(1)} \otimes M \right) \cdot \left( X^{(2)} \otimes X^{(3)}  \ldots X^{(q)} \right)^\top$. More precisely, the entry at row $(i_1,i_2, \cdots i_q)$ and column $j$ of $\left( X^{(1)} \otimes X^{(2)}  \ldots X^{(q)} \right) \cdot M^\top$ is equal to the entry at row $(i_1,j)$ and column $(i_2, \ldots i_q)$ of $\left( X^{(1)} \otimes M \right) \cdot \left( X^{(2)} \otimes \ldots X^{(q)} \right)^\top$. 
	
	Therefore, it is enough to have a procedure to sample $(i_1,i_2, \ldots i_q)$ with probability proportional to the squared norm of column $(i_2, \ldots i_q)$ of matrix $\left( M \cdot {\rm diag}\left( X^{(1)}_{i_1,\star} \right) \right) \cdot \left( X^{(2)} \otimes \ldots X^{(q)} \right)^\top$ for every $i_1 \in [d]$. 
	To this end, we first sample an index $i_1$ with probability proportional to the squared Frobenius norm of $\left( M \cdot {\rm diag}\left( X^{(1)}_{i_1,\star} \right) \right) \cdot \left( X^{(2)} \otimes \ldots X^{(q)} \right)^\top$, and then perform column norm sampling on the sampled matrix. 
	We can cheaply estimate the Frobenius norms of matrices $\left( M \cdot {\rm diag}\left( X^{(1)}_{i_1,\star} \right) \right) \cdot \left( X^{(2)} \otimes \ldots X^{(q)} \right)^\top$ up to $\left(1\pm \frac{1}{20q}\right)$ perturbation using \textsc{TensorNormDS} (Algorithm~\ref{alg:TensorNormDS}).
	
	After this point, we will have an index $i_1 \in [d]$ sampled from the right distribution and all that is left to do is to carry out row norm sampling on $\left( X^{(2)} \otimes \ldots X^{(q)} \right)\cdot \left( M \cdot {\rm diag}\left( X^{(1)}_{i_1,\star} \right) \right)^\top$. Note that we have made progress because this matrix has $d^{q-1}$ rows, meaning that we have reduced the size of our problem by a factor of $d$. 
	Algorithm~\ref{alg:rotatedrowsampler-poly} recursively repeats this process of reshaping, norm estimation, and sampling $q$ times until having all $q$ indices $i_1,i_2, \cdots i_q$. 
	
	Note that the actual procedure requires more work because we need to generate $s$ i.i.d. samples with the row norm distribution and to ensure that the runtime does not lose a multiplicative factor of $s$, resulting in $s\cdot \sum_{j \in [q]} \text{nnz}\left( X^{(j)} \right)$ total time, we need to do extra sketching and a random partitioning of the rows of the datasets to ${q}^{2} s$ buckets. Moreover, we use the median trick to boost the success probabilities of our randomized operations, when needed.
	
	The formal guarantee on Algorithm~\ref{alg:rotatedrowsampler-poly} is given in the following lemma.
	
	\begin{lemma}\label{lem:rotatedrowsampler-poly}
		For any matrices $X^{(1)}, X^{(2)}, \ldots X^{(q)} \in \RR^{d\times n}$ and $B\in\RR^{m\times n}$, any $\lambda>0$ and any positive integers $q,s$, with probability at least $1 - \frac{1}{{\rm poly}(n)}$, Algorithm~\ref{alg:rotatedrowsampler-poly} outputs a ranks-$s$ row norm sampler for the matrix $\left( X^{(1)} \otimes X^{(2)} \otimes \ldots X^{(q)} \right) \cdot (B^\top B + \lambda I)^{-1/2}$ as per Definition \ref{def:row-samp} in time $O\left( m^2 n + q^{7} s^2 n \log^3 n + \log^3n \log q \sum_{j=1}^q {\rm nnz}\left( X^{(j)} \right) \right)$.
		
	\end{lemma}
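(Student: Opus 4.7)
The plan is to reduce the row-norm sampling problem on $\Phi(B^\top B + \lambda I)^{-1/2}\in\RR^{d^q\times n}$ to a sequence of $q$ single-coordinate sampling subproblems by chain-ruling the joint distribution, and then to implement each subproblem efficiently using the \textsc{TensorNormDS} of Algorithm~\ref{alg:TensorNormDS} combined with hash bucketing and \PolyS compression. Since $(B^\top B+\lambda I)^{-1/2}$ has a large $n\times n$ footprint, I first compress it by a Gaussian JL matrix $H\in\RR^{d'\times n}$ with $d' = C_1 q\log n$ rows, producing $M = H(B^\top B+\lambda I)^{-1/2}$ on line~\ref{M-alg}. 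A union bound of the JL guarantee over the $d^q$ rows of $\Phi(B^\top B+\lambda I)^{-1/2}$ preserves every row norm up to a factor of $(1\pm 0.1)$ with probability $1-1/\mathrm{poly}(n)$, so it suffices to row-norm sample $\Phi M^\top$. Computing $M$ via an SVD of $B$ costs $O(m^2 n)$, matching the first term in the runtime.

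For each sample $\ell\in[s]$ the algorithm draws $(i_1,\ldots,i_q)$ one coordinate at a time using the identity
\[
\sum_{i_{a+1},\ldots,i_q}\!\bigl\|\mathrm{row}_{(i_1,\ldots,i_q)}(\Phi M^\top)\bigr\|_2^2 \;=\; \Bigl\|\bigl(X^{(a+1)}\otimes\cdots\otimes X^{(q)}\otimes M\bigr)\,D^{a}\,X^{(a)\top}_{i_a,\star}\Bigr\|_2^2,
\]
with $D^{a}=\prod_{b<a}\mathrm{diag}(X^{(b)}_{i_b,\star})$ as in line~\ref{def-Da}. This is exactly the quantity that \textsc{TNorm.Query}$(D^a X^{(a)\top}_{i,\star},a)$ estimates once \textsc{TNorm} is instantiated on $(X^{(1)},\ldots,X^{(q)},M)$. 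Because iterating over all $d$ candidates for $i_a$ in every round would be too expensive, round $a$ is split into a two-level sample: first a bucket $t\in[s']$ is drawn from the estimates $\{p^a_r\}$ with $s'=\lceil q^2 s\rceil$, then $i_a$ is drawn within $h^{-1}(t)$ from the estimates $\{q^a_i\}$. The bucket-level estimates exploit the precomputed \PolyS compressed datasets $W^a_{r,k}=G_r^k X^{(a)}_{h^{-1}(r),\star}$, so a single \textsc{TNorm.Query} on the $n\times n'$ matrix $L^a_{r,k}=D^a W^{a\top}_{r,k}$ replaces what would otherwise be $|h^{-1}(r)|$ separate queries.

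The error analysis proceeds as follows: tuning each TNorm query to accuracy $\epsilon=1/(20q)$, invoking Lemma~\ref{lem:TensorNormDS} for the built-in median and an outer median over the $m'=O(\log n)$ independent sketches $G_r^k$ to absorb the \PolyS randomness, every one of the $O(q s s')$ estimates lies within $(1\pm 1/(10q))$ of its target simultaneously with probability $1-1/\mathrm{poly}(n)$. Hence the bucket-and-within-bucket distribution at round $a$ multiplicatively distorts the true marginal by at most $(1\pm 1/(10q))^2$, and compounding over $q$ rounds gives a final sample probability $\beta_\ell^{-1}$ within a $(1\pm 1/5)$ factor of the true row-norm probability in $\Phi M^\top$, which after the JL correction is still within a constant factor of the corresponding probability in $\Phi(B^\top B+\lambda I)^{-1/2}$. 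This comfortably meets the $p_i\ge\tfrac14\|\phi_{i,\star}\|^2/\|\Phi\|_F^2$ requirement in Definition~\ref{def:row-samp}.

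The runtime then aggregates as: $O(m^2 n)$ for $M$; $O(q^4 n\log^2 n + \log n\sum_j\mathrm{nnz}(X^{(j)}))$ for constructing \textsc{TNorm} with $\epsilon=1/(20q)$ via Lemma~\ref{lem:TensorNormDS}; $O(\log^3 n\log q\sum_j\mathrm{nnz}(X^{(j)}))$ for building all the \PolyS sketches $W^a_{r,k}$ bucket-by-bucket across the $m'=O(\log n)$ repetitions via Lemma~\ref{soda-result}; and the double loop, which performs $sq(s'm'+\max_t|h^{-1}(t)|)$ queries each of cost $\widetilde{O}(\mathrm{poly}(q)\,n)$, summing to $O(q^7 s^2 n\log^3 n)$ after bounding $\max_t|h^{-1}(t)|=O(d/s'+\log n)$ by a balls-into-bins argument with $s'=\Theta(q^2 s)$. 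The principal technical hurdle is the joint control of three approximation layers---the JL sketch $H$, the \PolyS $G_r^k$ inside $W^a_{r,k}$, and the TNorm estimate---each of which must be calibrated so that their multiplicative distortions compound to only a constant factor after $q$ rounds, while simultaneously the union bound over all $O(q s s')$ randomized estimates is driven below $1/\mathrm{poly}(n)$ by the two levels of median trick without inflating the coefficient of $\sum_j\mathrm{nnz}(X^{(j)})$ beyond $\log^3 n\log q$.
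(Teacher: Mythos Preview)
Your overall strategy is exactly the paper's: JL-compress $(B^\top B+\lambda I)^{-1/2}$ via $H$, chain-rule the joint distribution over $(i_1,\ldots,i_q)$, and at each stage use \textsc{TNorm} together with bucketed \PolyS sketches to sample one coordinate. The correctness argument (compounding $(1\pm O(1/q))$ distortions over $q$ rounds, then absorbing the JL factor) matches the paper and is fine.

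There is, however, a genuine gap in your runtime accounting for the within-bucket step (the $\{q^a_i\}$ computation). You charge each query $\textsc{TNorm.Query}(D^a X^{(a)\top}_{i,\star},a)$ a flat $\widetilde{O}(\mathrm{poly}(q)\,n)$ and then bound the number of queries by $\max_t|h^{-1}(t)|=O(d/s'+\log n)$ via balls-into-bins. Multiplying these out over the $sq$ rounds produces a term of order $\widetilde{O}(q\cdot dn)$, which is \emph{not} dominated by $O(q^7 s^2 n\log^3 n)$ and, for sparse inputs, is also not dominated by $O(\log^3 n\log q\sum_j\mathrm{nnz}(X^{(j)}))$. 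So the stated runtime does not follow from your argument.

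The fix, and what the paper actually does, is to use that by Lemma~\ref{lem:TensorNormDS} the cost of a single query is proportional to $\mathrm{nnz}(D^a X^{(a)\top}_{i,\star})\le \mathrm{nnz}(X^{(a)}_{i,\star})$, not to $n$. Summing over $i\in h^{-1}(t)$ gives a cost proportional to $\mathrm{nnz}\bigl(X^{(a)}_{h^{-1}(t),\star}\bigr)=\sum_i \mathbbm{1}_{\{h(i)=t\}}\,\mathrm{nnz}(X^{(a)}_{i,\star})$, and Bernstein's inequality (not just a count of bucket sizes) bounds this by $O\bigl((\mathrm{nnz}(X^{(a)})/s'+n)\log n\bigr)$ uniformly over all buckets with high probability. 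With $s'=\Theta(q^2 s)$, summing over $a\in[q]$ and $\ell\in[s]$ then yields the claimed $O(\log^3 n\log q\sum_j\mathrm{nnz}(X^{(j)}))$ contribution. Replace your balls-into-bins step with this nnz-based Bernstein bound and the runtime goes through.
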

	
	\begin{proof}
		
		All rows of the sampling matrix $S\in\RR^{s\times d^q}$ (the output of Algorithm \ref{alg:rotatedrowsampler-poly}) have independent and identical distributions because for each  $\ell \in[s]$, the $\ell^{th}$ row of the matrix $S$ is constructed by sampling indices $i_1,i_2,\cdots i_q$ in line~\ref{isample-alg} completely independent of the sampled values for other rows $\ell'\neq \ell$.
		Thus, it is enough to consider the distribution of the $\ell^{th}$ row of $S$ for some arbitrary $\ell \in[s]$. 
		
		Let $I:=(I_1,I_2, \cdots I_q)$ be a vector-valued random variable that takes values in $[d]^q$ with the following conditional probability distribution for every $a=1,2, \cdots q$ and every $i \in [d]$,
		\begin{equation}\label{eq:def-ia-dist}
			\Pr\left[ I_a =i | I_1=i_1, I_2=i_2, \cdots I_{a-1}=i_{a-1} \right] := p^a_{h(i)} \cdot q^{a}_{i},
		\end{equation}
		where distributions $\{p^a_r\}_{r \in [s']}$ and $\{q^{a}_i\}_{i \in h^{-1}(t)}$ for every $t \in [s']$ are defined as per lines~\ref{dist-par} and \ref{dist-qai} of the algorithm.
		One can see that the random vector $(i_1,i_2, \cdots i_q)$ obtained by stitching together the random indices sampled in line~\ref{isample-alg} of the algorithm, is in fact a copy of the random variable $I$ defined above.

		Let $\beta_\ell$ be the quantity computed in line \ref{beta-update} of the algorithm. If $i_1,i_2, \cdots i_q \in[d]$ are the indices sampled in line~\ref{isample-alg} of the algorithm, then using the conditional distribution of $I$ in \eqref{eq:def-ia-dist}, we find that the value of $\beta_\ell$ is equal to the following,
		\begin{align*}
			\beta_\ell &= s \cdot \prod_{a=1}^q p^a_{h(i_a)} q^{a}_{i_a} \\
			&= s \cdot \prod_{a=1}^q \Pr \left[ I_a =i_a | I_1=i_1, I_2=i_2, \cdots I_{a-1}=i_{a-1} \right]\\
			&= s \cdot \Pr \left[ I =(i_1, i_2, \ldots i_q) \right],
		\end{align*}
		where $p^a$ and $q^{a}$ are the distributions computed in lines~\ref{dist-par} and \ref{dist-qai} of the algorithm.
		Hence, for any $i_1,i_2, \cdots i_q \in [d]$, the distribution of $S_{\ell,\star}$ is,
		\begin{align}
			&\Pr\left[ S_{\ell,\star}= \beta_\ell^{-1/2} ({ e}_{i_1} \otimes { e}_{i_2} \otimes \cdots { e}_{i_q})^\top \right]\nonumber\\
			&\qquad= \Pr \left[ I =(i_1, i_2, \ldots i_q) \right] = \frac{\beta_\ell}{s}. \label{dist-sampling-matrix-poly}
		\end{align}
		We will use \eqref{dist-sampling-matrix-poly} later.

		By Lemma~\ref{lem:TensorNormDS} and the way $\textsc{TNorm}$ is constructed in line~\ref{Tnorm-instantiation-rowsampler} of the algorithm, we have the following inequalities for any $r\in[s']$, $k \in [m']$, $i \in [d]$, and any $a=1,2, \ldots q$ , with probability at least $1 - \frac{1}{{\rm poly}(n)}$,
		\begin{align}
			&\textsc{TNorm.Query}\left( L^a_{r,k} , a \right) \in \left( 1\pm \frac{1}{20q} \right) \left\| \left(X^{(a+1)} \otimes \ldots X^{(q)} \otimes M \right) D^{a} W_{r, k}^{a\top}  \right\|_F^2,\label{tensorsketch}\\
			&\textsc{TNorm.Query}\left( D^{a} \cdot X^{(a)\top}_{i,\star} , a \right) \in \left( 1\pm \frac{1}{20q} \right) \cdot \left\| \left(X^{(a+1)} \otimes \ldots X^{(q)} \otimes M \right) D^{a} \cdot X^{(a)\top}_{i,\star} \right\|_2^2. \label{Tnorm-xi}
		\end{align}
		By union bounding over $qds'm'$ events, with probability at least $1 - \frac{1}{{\rm poly}(n)}$, \eqref{tensorsketch} and \eqref{Tnorm-xi} hold simultaneously for all $a\in[q]$, $k \in [m']$, $i\in[d]$, and all $r\in[s']$.

		Furthermore, note that $W_{r,k}^a$ is defined in line~\ref{W} as $W_{r,k}^a=G_r^k \cdot X^{(a)}_{h^{-1}(r),\star}$, where $G_r^k$ is an instance of the degree-$1$ \PolyS as per Lemma~\ref{soda-result} with target dimension $n'=C_2q^2$. 
		By the first statement of Lemma~\ref{soda-result}, the \PolyS $G_r^k$ approximately preserves the Frobenius norm of any fixed matrix with constant probability. In particular, for every $a\in[q], r\in[s'], k \in [m']$, with probability at least $9/10$ the following holds,
		\begin{align}
			\left\| \left(X^{(a+1)} \otimes \ldots X^{(q)} \otimes M \right) D^{a} W_{r,k}^{a\top} \right\|_F^2 \in \left(1\pm \frac{1}{50q}\right) \left\| \left(X^{(a+1)} \otimes \ldots X^{(q)} \otimes M \right) D^{a} \left( X^{(a)}_{h^{-1}(r),\star} \right)^\top \right\|_F^2. \label{jl-poly}
		\end{align}
		By taking the median of $m' = \Omega(\log n)$ independent instances of $G_r^k$, the success probability in \eqref{jl-poly} gets boosted. Thus, by combining this inequality with \eqref{tensorsketch} using a union bound, and applying the median trick, with probability at least $1 - \frac{1}{{\rm poly}(n)}$ the following holds simultaneously for all $a \in [q]$ and $r \in [s']$,
		\begin{align}
			\textsc{Median}_{k \in [m']} \left\{\textsc{TNorm.Query}\left( L^{a}_{r, k} , a \right)\right\} \in \left( 1 \pm \frac{1}{14q} \right) \left\| \left(X^{(a+1)} \otimes \ldots X^{(q)} \right) D^{a} \left( X^{(a)}_{h^{-1}(r),\star} \otimes M \right)^\top \right\|_F^2 \label{jl-countsketch-median}
		\end{align}
		Note that to obtain the above inequality we used the property of tensor products regarding the bijective correspondence between entries of $\left(X^{(a+1)} \otimes \ldots X^{(q)} \otimes M \right) D^{a} \left( X^{(a)}_{h^{-1}(r),\star} \right)^\top$ and $\left(X^{(a+1)} \otimes \ldots X^{(q)} \right) D^{a} \left( X^{(a)}_{h^{-1}(r),\star} \otimes M \right)^\top$
		
		By plugging the above inequality along with \eqref{Tnorm-xi} into \eqref{eq:def-ia-dist}, we conclude that with high probability the following bound holds simultaneously for all $a \in [q]$,
		\begin{align}
			&\Pr[I_a =i | I_1=i_1, I_2=i_2, \cdots I_{a-1}=i_{a-1}]\nonumber\\ 
			&\qquad\ge \left( 1-\frac{1}{5q} \right) \cdot \frac{ \left\| \left(X^{(a+1)} \otimes \ldots X^{(q)} \right) D^{a} {\rm diag} \left(X^{(a)}_{i,\star} \right) M^\top \right\|_F^2 }{ \left\| \left(X^{(a+1)} \otimes \ldots X^{(q)} \right) D^{a} \left( X^{(a)} \otimes M \right)^\top \right\|_F^2}.\label{lower-bnd-Ij}
		\end{align}
		Again to obtain the above inequality we used the property of tensor products regarding the bijective correspondence between the entries of vector $\left(X^{(a+1)} \otimes \ldots X^{(q)} \otimes M \right) D^{a} \cdot X^{(a)\top}_{i,\star}$ and matrix $\left(X^{(a+1)} \otimes \ldots X^{(q)} \right) D^{a} {\rm diag} \left( X^{(a)}_{i,\star}  \right) M^\top$
		
		It follows from the properties of tensor products and the definition of $D^a$ in line~\ref{def-Da} of the algorithm, that
		\begin{align*}
			\left\| \left(X^{(a+2)} \otimes \ldots X^{(q)} \right) D^{a+1}  \left( X^{(a+1)} \otimes M \right)^\top \right\|_F^2
			&=\left\| \left(X^{(a+2)} \otimes \ldots X^{(q)} \right) D^{a} {\rm diag}\left( X^{(a)}_{i_a,\star} \right)  \left( X^{(a+1)} \otimes M \right)^\top \right\|_F^2\\
			&= \left\| \left(X^{(a+1)} \otimes X^{(a+2)} \otimes \ldots X^{(q)} \right) D^{a} {\rm diag}\left( X^{(a)}_{i_a,\star} \right) M^\top \right\|_2^2
		\end{align*}
		Using this equality and inequality \eqref{lower-bnd-Ij}, we have:
		\begin{align}
			\Pr\left[ I=(i_1 , i_2, \cdots i_q) \right] &= \prod_{a=1}^{q} \Pr \left[I_a =i_a | I_1=i_1, \cdots I_{a-1}=i_{a-1} \right]\nonumber\\
			&\ge \prod_{a=1}^{q} \left(1-\frac{1}{5q}\right) \frac{ \left\| \left(X^{(a+1)} \otimes \ldots X^{(q)} \right) D^{a} \cdot {\rm diag} \left(X^{(a)}_{i_a,\star} \right) M^\top \right\|_F^2 }{ \left\| \left(X^{(a+1)} \otimes \ldots X^{(q)} \right) D^{a} \left( X^{(a)} \otimes M \right)^\top \right\|_F^2}\nonumber\\
			& \ge \frac{3}{4} \cdot \frac{ \left\| \mathbf{1}_n^\top \cdot D^{q} \cdot {\rm diag}\left( X_{i_q,\star}^{(q)} \right) M^\top \right\|_F^2}{ \left\| \left(X^{(2)} \otimes \ldots X^{(q)} \right) D^{1}  \left( X^{(1)} \otimes M \right)^\top \right\|_F^2}\nonumber\\ 
			&= \frac{3}{4} \cdot \frac{ \left\| \left[ \left(X^{(1)} \otimes X^{(2)} \otimes \ldots X^{(q)} \right) \cdot M^\top \right]_{(i_1,i_2,\cdots i_q),\star} \right\|_2^2}{ \left\| \left(X^{(1)} \otimes X^{(2)} \otimes \ldots X^{(q)} \right) \cdot M^\top \right\|_F^2} \label{conditional-prob-bound}
		\end{align}
		By plugging \eqref{conditional-prob-bound} back in 
		\eqref{dist-sampling-matrix-poly} we find that,
		\begin{align*}
			\Pr\left[ S_{\ell,\star}= \beta_\ell^{-1/2} ({ e}_{i_1} \otimes { e}_{i_2} \otimes \cdots { e}_{i_q})^\top \right] \ge \frac{3}{4} \cdot \frac{ \left\| \left[ \left(X^{(1)} \otimes \ldots X^{(q)} \right) \cdot M^\top \right]_{(i_1,i_2,\cdots i_q),\star} \right\|^2}{ \left\| \left(X^{(1)} \otimes \ldots X^{(q)} \right) \cdot M^\top \right\|_F^2 }
		\end{align*}
		
		Matrix $M$ is defined as $M=H \cdot (B^\top B +\lambda I)^{-1/2}$ where $H$ is a random matrix with i.i.d. Gaussian entries with $d'=C_1q\log n$ rows. Therefore, $H$ is a JL-transform, so for every $(i_1,i_2,\cdots i_q) \in[d]^q$, with probability $1-\frac{1}{\text{poly}(n^q)}$,
		\begin{align*}
			({d'})^{-1}{\left\| \left[ \left(X^{(1)} \otimes \ldots X^{(q)} \right) \cdot M^\top \right]_{(i_1,i_2,\cdots i_q),\star} \right\|_2^2}  \in \left(1\pm 0.1 \right) \left\| \left[ X^{(1)} \otimes \ldots X^{(q)} \right]_{(i_1,i_2,\cdots i_q),\star}  (B^\top B +\lambda I)^{-1/2}\right\|_2^2.
		\end{align*}
		Therefore, by union bounding over $d^q$ rows of $\left(X^{(1)} \otimes \ldots X^{(q)} \right) \cdot M^\top$, the above holds simultaneously for all $(i_1,i_2,\cdots i_q) \in[d]^q$ with probability $1 - \frac{1}{{\rm poly}(n^q)}$. Therefore, with high probability in $n$,
		\begin{align*}
			&\Pr\left[ S_{\ell,\star}= \beta_\ell^{-1/2} ({ e}_{i_1} \otimes { e}_{i_2} \otimes \cdots { e}_{i_q})^\top \right]\\
			&\qquad\ge \frac{1}{2} \cdot \frac{ \left\| \left[ \left(X^{(1)} \otimes \ldots X^{(q)} \right) \cdot (B^\top B +\lambda I)^{-1/2} \right]_{(i_1,i_2,\cdots i_q),\star} \right\|_2^2}{\left\| \left(X^{(1)} \otimes \ldots X^{(q)} \right) \cdot (B^\top B +\lambda I)^{-1/2} \right\|_F^2}
		\end{align*}
		Because $\frac{\beta_\ell}{s}$ is the probability of sampling row $(i_1,i_2,\cdots i_q)$ of $\left(X^{(1)} \otimes \ldots X^{(q)} \right) \cdot (B^\top B +\lambda I)^{-1/2}$, the above inequality proves that with high probability, matrix $S$ is a rank-$s$ row norm sampler for $\left(X^{(1)} \otimes \ldots X^{(q)} \right) \cdot (B^\top B +\lambda I)^{-1/2}$ as in Definition \ref{def:row-samp}.

		\paragraph{Runtime:} One of the expensive steps of this algorithm is the computation of $M$ in line~\ref{M-alg} which takes $O(m^2 n + q m n \log n)$ operations since $B$ has rank at most $m$. 
		Another expensive step is the computation of the \textsc{TNorm} data-structure in line~\ref{Tnorm-instantiation-rowsampler}. By Lemma \ref{lem:TensorNormDS}, this DS for $\epsilon = \frac{1}{20q}$ can be formed in time $O\left( q^4 \log^2 q \cdot n \log n + \log n \cdot \sum_{j=1}^q \text{nnz}\left( X^{(j)} \right) \right)$.

		By Lemma~\ref{soda-result}, matrices $W^a_{r,k}$ for all $r\in[s']$, $k \in [m']$ and $a \in [q]$ in line~\ref{W} of the algorithm can be computed in total time $O\left( q^3 s' n\log^2 n + \log n \cdot \sum_{j=1}^q \text{nnz}\left( X^{(j)} \right) \right)$.
		
		The matrix $W_{r, k}^{a}$ for every $k \in [m']$, and $r\in[s']$, has size $O(q^2) \times n$.
		Thus, by Lemma~\ref{lem:TensorNormDS}, computing the distribution $\{ p^a_r \}_{r=1}^{s'}$ in line~\ref{dist-par} takes time $O\left(q^{4} s' \cdot n \log^2n \log q \right)$ for a fixed $a\in[q]$ and a fixed $\ell \in[s]$.  Therefore, the total time to compute this distribution for all $a$ and $\ell$ is $O\left(q^{7} s^2 \cdot n \log^2 n\log q \right)$.

		The runtime of computing the distribution $\{ q^{a}_i \}_{i\in h^{-1}(t)}$ in line~\ref{dist-qai} depends on the sparsity of $X^{(a)}_{h^{-1}(t),\star}$, i.e.,  $\text{nnz}\left( X^{(a)}_{h^{-1}(t),\star} \right)$. To bound the sparsity of $X^{(a)}_{h^{-1}(t),\star}$, note that, $\text{nnz}\left( X^{(a)}_{h^{-1}(t),\star} \right) = \sum_{i=1}^d \mathbbm{1}_{\{i \in h^{-1}(t)\}} \cdot \text{nnz}\left( X^{(a)}_{i,\star} \right)$.
		Since the hash function $h$ is fully independent, by invoking Bernstein's inequality, we find that for every $t\in[s']$ and $a \in [q]$,  with high probability in $n$, ${\rm nnz}\left( X^{(a)}_{h^{-1}(t),\star} \right) = O\left( \left( {{\rm nnz}\left( X^{(a)} \right)/s'} + n \right) \log n \right)$. 
		By union bounding over $qs'$ events, with high probability in $n$, ${\rm nnz}\left( X^{(a)}_{h^{-1}(t),\star} \right) = O\left( \left( {{\rm nnz}\left( X^{(a)} \right)/s'} + n \right) \log n \right)$, simultaneously for all $t\in[s']$ and $a \in [q]$.
		
		Therefore, by Lemma~\ref{lem:TensorNormDS}, the distribution $\{ q^{a}_i \}_{i\in h^{-1}(t)}$ in line~\ref{dist-qai} of the algorithm can be computed in total time $O\left( q^3 s n \log^3 n \log q + \log^3 n \log q\cdot \sum_{j=1}^q {\rm nnz}\left( X^{(j)} \right) \right)$ for all $a \in [q]$ and all $\ell \in [s]$.

		The total runtime of Algorithm~\ref{alg:rotatedrowsampler-poly} is thus $O\left( m^2 n + q^{7} s^2 n \log^2 n \log q + \log^3n \log q \cdot \sum_{j=1}^q {\rm nnz}\left( X^{(j)} \right) \right)$.
	\end{proof}
	
	Now we can prove our main theorem about spectrally approximating the Gram matrix $\Phi^\top \Phi$ for matrices of the form $\Phi=\bigotimes_{j=1}^q X^{(j)}$ using nearly $\sum_{i}\text{nnz}\left(X^{(i)} \right)$ runtime.

	\begin{proofof}{Theorem~\ref{main-thrm-tensor-prod}}
		The theorem follows by invoking Lemmas~\ref{resursive-rlss-lem} and \ref{lem:rotatedrowsampler-poly}. To find the sampling matrix $\Pi$, run Algorithm~\ref{alg:outerloop} on $\Phi$ with $\mu=s_\lambda$ and for the \textsc{RowSampler} primitive, invoke Algorithm~\ref{alg:rotatedrowsampler-poly}. 
		By Lemma~\ref{lem:rotatedrowsampler-poly}, Algorithm~\ref{alg:rotatedrowsampler-poly} outputs a row norm sampler as per Definition~\ref{def:row-samp} with probability $1 - \frac{1}{{\rm poly}(n)}$. Therefore, since the total number of times Algorithm~\ref{alg:rotatedrowsampler-poly} is invoked by Algorithm~\ref{alg:outerloop} is $\log \frac{\| \Phi \|_F^2}{\epsilon\lambda} = O(\log n)$, by a union bound,
		the preconditions of Lemma~\ref{resursive-rlss-lem} are satisfied with high probability. Thus, it follows that $\Pi$ satisfies the following spectral approximation guarantee
		\[ \frac{\Phi^\top \Phi + \lambda I}{1+\epsilon} \preceq \Phi^\top \Pi^\top \Pi \Phi + \lambda I \preceq \frac{\Phi^\top \Phi + \lambda I}{1-\epsilon}.\]
		Algorithm~\ref{alg:outerloop} invokes the \textsc{RowSampler} primitive $\log \frac{\| \Phi \|_F^2}{\epsilon \lambda} = O(\log n)$ times. Thus, by Lemma~\ref{lem:rotatedrowsampler-poly}, the runtime of finding $\Pi$ is $O\left( \frac{q^7 \cdot s_\lambda^2 \cdot n}{\epsilon^4} \log^5 n \log q + \log^4 n \log q \cdot \sum_{i}\text{nnz}\left(X^{(i)} \right) \right)$.
		
	\end{proofof}

	\section{Proof of Lemma~\ref{lem:SRHT-shared-sign}}
	\label{appndx:srht-shared-sign}
	First, by properties of tensor products and using the definitions of sketch matrices $S^{(c)} = \frac{1}{\sqrt{m}} \cdot P_c H D$, we obtain
	\begin{equation}\label{eq:norm-tensor-SX-expansion}
		\left( S^{(1)} X \right) \otimes \left( S^{(2)} X \right) \otimes \ldots \left( S^{(q)} X \right) = \frac{1}{m^{q/2}} \cdot \left( P_1 \times P_2 \times \ldots P_q \right) \cdot \left( H D X \right)^{\otimes q},
	\end{equation}
	where $P_1 \times P_2 \times \ldots P_q$ denotes the Kronecker product of the sampling matrices $P_1, P_2, \ldots P_q$ and is of size $m^q \times d^q$.
	Now let $x_1, x_2, \ldots x_n \in \RR^d$ denote the columns of $X$. By Khintchine's inequality (Lemma~\ref{lem:khintchine}) along with a union bound over the $d$ entries of the vector $HD x_\ell$, the following holds with probability $1 - \frac{1}{{\rm poly}(n)}$, for every $\ell \in [n]$:
	\[ \left\| HD \cdot x_\ell \right\|_\infty^2 \le O \left( \log n \right) \cdot \|x_\ell\|_2^2. \]
	Therefore, using the definition of tensor product, the following holds with probability $1 - \frac{1}{{\rm poly}(n)}$, simultaneously for all $\ell \in [n]$ and all $r \in [q]$
	\begin{equation}\label{eq:flatness-single-col-SRHT}
		\left\| (HD \cdot x_\ell)^{\otimes r} \right\|_\infty^2 \le O \left( \log n \right)^r \cdot \left\| x_\ell^{\otimes r} \right\|_2^{2}.
	\end{equation}
	From now on we condition on the above inequality holding for every $r \in [q]$ and every $\ell \in [n]$.
	
	Now let us consider the matrix $\left(H D X\right)^{\otimes q} \cdot (\Sigma \otimes K)^\top$. This matrix has $d^q$ rows and $n$ columns. If we let $\lambda_{\min}$ be the smallest eigenvalue of $K$, then using the properties of the tensor product of matrices, the Frobenius norm of this matrix satisfies the following inequality,
	\begin{align}
		\left\| \left( H D X\right)^{\otimes q} \cdot \left( \Sigma \otimes K \right)^\top \right\|_F^2 &= \left\| \left( \Sigma \otimes \left( H D X\right)^{\otimes q} \right) \cdot K^\top \right\|_F^2 \nonumber\\
		&\ge \lambda_{\min}^2 \cdot \left\| \Sigma \otimes \left( H D X\right)^{\otimes q} \right\|_F^2= d^q \cdot \lambda_{\min}^2 \cdot \left\| \Sigma \otimes X^{\otimes q} \right\|_F^2\label{eq:norm-srht-lower-bound}
	\end{align}
	Furthermore, if we let $\lambda_{\max}$ be the largest eigenvalue of $K$, then for any row $\jj \in [d]^q$ of the matrix $\left( H D X\right)^{\otimes q} \cdot (\Sigma \otimes  K)^\top$, the following upper bound holds,
	\begin{align*}
		\left\| \left[ \left( H D X\right)^{\otimes q} \cdot \left( \Sigma \otimes K \right)^\top \right]_{\jj , \star} \right\|_2^2 &= \left\| \Sigma \cdot {\rm diag}\left(\left[ \left( H D X\right)^{\otimes q} \right]_{\jj , \star}\right) \cdot K^\top \right\|_F^2 \\
		&\le \lambda_{\max}^2 \cdot \left\| \Sigma \cdot {\rm diag}\left(\left[ \left( H D X\right)^{\otimes q} \right]_{\jj , \star}\right) \right\|_F^2\\ 
		&= \lambda_{\max}^2 \cdot \sum_{\ell=1}^n \left| \left[ \left( H D \cdot x_\ell \right)^{\otimes q} \right](\jj) \right|^2 \cdot \| \Sigma_{\star,\ell} \|_2^2
	\end{align*}
	By incorporating \eqref{eq:flatness-single-col-SRHT} into the above inequality for $r=q$, we find that for any $\jj \in [d]^q$,
	\[ \left\| \left[ \left( H D X\right)^{\otimes q} (\Sigma \otimes K)^\top \right]_{\jj , \star} \right\|_2^2 \le O \left( \log n \right)^q \lambda_{\max}^2 \sum_{\ell=1}^n \left\| x_\ell^{\otimes q} \right\|_2^{2} \| \Sigma_{\star,\ell} \|_2^2 = O \left( \log n \right)^q \cdot \lambda_{\max}^2 \left\| \Sigma \otimes X^{\otimes q} \right\|_F^2 \]
	In fact, we can prove a stronger version of the above inequality which will turn out to be very useful in our analysis. Let $\jj \in [d]^q$ be some arbitrary index vector. Also, let $S \subseteq [q]$ be some arbitrary subset. Let us denote the subset of indices in $[d]^q$ that agree with $\jj$ on $S$ by $[d]^q_{\jj_S}$ and formally define it as follows:
	\[ [d]^q_{\jj_S} := \{ \ii \in [d]^q : \ii_t = \jj_t \text{ for all } t \in S \}. \]
	Using this notation along with the properties of tensor products and \eqref{eq:flatness-single-col-SRHT} we have the following for every $\jj \in [d]^q$ and $S \subseteq [q]$,
	\begin{align*}
		\sum_{\ii \in [d]^q_{\jj_S}} \left\| \left[ \left( H D X\right)^{\otimes q} \cdot (\Sigma \otimes K)^\top \right]_{\ii , \star} \right\|_2^2 &= \left\| \left( H D X\right)^{\otimes (q - |S|)} \cdot \prod_{t \in S} {\rm diag}\left( [H D X]_{\jj_t, \star} \right) \cdot (\Sigma \otimes K )^\top \right\|_F^2\\
		&\le \lambda_{\max}^2 \cdot \left\| \left( \Sigma \otimes (H D X)^{\otimes (q - |S|)} \right) \prod_{t \in S} {\rm diag}\left( [H D X]_{\jj_t, \star} \right) \right\|_F^2 \\
		&= \lambda_{\max}^2 \cdot \sum_{\ell \in [n]} \left\| \left(\Sigma_{\star,\ell} \otimes ( H D \cdot x_\ell )^{\otimes (q - |S|)}\right) \cdot \prod_{t \in S} [H D \cdot x_\ell](\jj_t) \right\|_2^2 \\
		&= \lambda_{\max}^2 \cdot \sum_{\ell \in [n]} \left\| \Sigma_{\star,\ell} \otimes ( H D \cdot x_\ell )^{\otimes (q - |S|)} \right\|_2^2 \cdot \prod_{t \in S} \left| [H D \cdot x_\ell](\jj_t) \right|^2 \\
		&\le \lambda_{\max}^2 \cdot d^{q - |S|} \cdot \sum_{\ell \in [n]} \left\| \Sigma_{\star,\ell} \otimes x_\ell^{\otimes (q - |S|)} \right\|_2^2 \cdot \prod_{t \in S} O(\log n) \cdot \left\| x_\ell \right\|_2^2 \\
		&= O(\log n)^{|S|} \cdot  \lambda_{\max}^2 \cdot d^{q - |S|} \cdot \left\| \Sigma \otimes X^{\otimes q} \right\|_F^2,
	\end{align*}
	where the fifth line above follows from \eqref{eq:flatness-single-col-SRHT} for $r=1$.
	Now by combining the above with \eqref{eq:norm-srht-lower-bound} we find the following for every non-empty set $S \subseteq [q]$,
	\begin{equation}\label{flatness-set-S}
		\max_{\jj \in [d]^q} \left\{ \sum_{\ii \in [d]^q_{\jj_S}} \left\| \left[ \left( H D X\right)^{\otimes q} \cdot (\Sigma \otimes K)^\top \right]_{\ii , \star} \right\|_2^2 \right\} \le O \left( \frac{\log n}{d}\right)^{|S|} \cdot \kappa^2 \cdot \left\| \left( H D X\right)^{\otimes q} \cdot (\Sigma \otimes K )^\top \right\|_F^2,
	\end{equation}
	where $\kappa = \frac{\lambda_{\max}}{\lambda_{\min}}$ is the condition number of $K$. This inequality shows that the rows of $ \left( H D X\right)^{\otimes q} \cdot (\Sigma \otimes K)^\top$ are ``flat'' and the Frobenius norm of this matrix is spread-out evenly over the rows of this matrix. In addition to \eqref{flatness-set-S}, we can prove a stronger inequality for the case of sets of cardinality one. Specifically, we prove a stronger version of \eqref{flatness-set-S} for any singleton set $S$, i.e., $|S|=1$. We start by denoting the sole element of set $S$ by $\tilde{s}$, i.e., $S = \{ \tilde{s} \}$. So when $S = \{ \tilde{s} \}$, using the definition of $[d]^q_{\jj_S}$ we have $[d]^q_{\jj_S} = \{ \ii \in [d]^q : \ii_{\tilde{s}} = \jj_{\tilde{s}} \}$. Therefore, by properties of tensor products, we can write for any $\jj_{\tilde{s}} \in [d]$:
	\begin{align*}
		\sum_{\ii \in [d]^q_{\jj_S}} \left\| \left[ \left( H D X\right)^{\otimes q} \cdot (\Sigma \otimes K)^\top \right]_{\ii , \star} \right\|_2^2 &= \left\| \left( H D X\right)^{\otimes (q-1)} \cdot {\rm diag}\left( [HDX]_{\jj_{\tilde{s}},\star} \right) (\Sigma \otimes K)^\top \right\|_F^2 \\
		&= d^{q-1} \cdot \left\| X^{\otimes (q-1)} \cdot {\rm diag}\left( [HDX]_{\jj_{\tilde{s}},\star} \right) (\Sigma \otimes K)^\top \right\|_F^2\\
		&= d^{q-1} \cdot \left\| \left[ HDX \cdot \left( \Sigma \otimes K \otimes X^{\otimes (q-1)} \right)^\top \right]_{\jj_{\tilde{s}},\star} \right\|_2^2.
	\end{align*}
	Using the above inequality along with Khintchine's inequality from Lemma~\ref{lem:khintchine}, we find that the following holds for any $S = \{ \tilde{s} \}$, with probability at least $1 - \frac{1}{{\rm poly}(n)}$,
	\begin{align*}
		\sum_{\ii \in [d]^q_{\jj_S}} \left\| \left[ \left( H D X\right)^{\otimes q} \cdot (\Sigma \otimes K)^\top \right]_{\ii , \star} \right\|_2^2 &\le O(\log n) \cdot d^{q-1} \cdot \left\| X \cdot \left( \Sigma \otimes K \otimes X^{\otimes (q-1)} \right)^\top \right\|_F^2\\
		&= O(\log n) \cdot d^{q-1} \cdot \left\| \left( X \otimes X^{\otimes (q-1)} \right) (\Sigma \otimes K)^\top \right\|_F^2\\
		&= O\left( \frac{\log n}{d} \right) \cdot \left\|  (HDX)^{\otimes q} \cdot (\Sigma \otimes K)^\top \right\|_F^2
	\end{align*}
	Now using the above inequality and union bounding over all $\tilde{s} \in [d]$ and $\jj_{\tilde{s}} \in [d]$, we can conclude that with probability at least $1 - \frac{1}{{\rm poly}(n)}$, the following holds simultaneously for all singleton sets $S = \{ \tilde{s} \} \subseteq [q]$ and all $\jj \in [d]^q$,
	\begin{equation}\label{flatness-set-S-singletone}
		\max_{\jj \in [d]^q} \left\{ \sum_{\ii \in [d]^q_{\jj_S}} \left\| \left[ \left( H D X\right)^{\otimes q} \cdot (\Sigma \otimes K)^\top \right]_{\ii , \star} \right\|_2^2 \right\} \le O \left( \frac{\log n}{d}\right) \cdot \left\| \left( H D X\right)^{\otimes q} \cdot (\Sigma \otimes K )^\top \right\|_F^2,
	\end{equation}
	which is a stronger upper bound than \eqref{flatness-set-S} by a factor of $\kappa^2$.

	Now recall that, by \eqref{eq:norm-tensor-SX-expansion}, we have the following,
	\[ \left\| \left( S^{(1)} X \right) \otimes \ldots \left( S^{(q)} X \right) \cdot (\Sigma \otimes K)^\top \right\|_F^2 = \frac{1}{m^{q}} \cdot \left\| \left( P_1 \times P_2 \times \ldots P_q \right) \cdot \left( H D X\right)^{\otimes q} \cdot (\Sigma \otimes K)^\top \right\|_F^2 \]
	Therefore, to simplify the notation, if we denote the vector corresponding to row norms of $ \left( H D X\right)^{\otimes q} \cdot (\Sigma \otimes K)^\top$ by $y \in \RR^{d^q}$,
	\[y_{\jj} := \left\| \left[ \left( H D X\right)^{\otimes q} \cdot (\Sigma \otimes K)^\top \right]_{\jj,\star} \right\|_2~~~~~ \text{ for every } \jj \in [d]^q, \]
	then it suffices to prove that 
	\begin{equation}\label{eq:srht-sampling-error-simplified-version}
		\Pr_{P_1,\ldots P_q} \left[\frac{1}{m^{q}} \cdot \left\| \left( P_1 \times P_2 \times \ldots P_q \right) \cdot y \right\|_2^2 \in (1 \pm \epsilon) \frac{\| y \|_2^2}{d^q} \right] \ge 1 - \delta 
	\end{equation}
	given the fact that the $P_i$ are independent random sampling matrices and conditioned on $y$ satisfying the following flatness property for any non-empty set $S \subseteq [q]$ (by combining \eqref{flatness-set-S} and \eqref{flatness-set-S-singletone}):
	\begin{equation}\label{eq:flatness-row-norms-vector}
		\max_{\jj \in [d]^q} \sum_{\ii \in [d]^q_{\jj_S}}  \left| y_{\ii} \right|^2 \le O \left( \frac{\log n}{d}\right)^{|S|} \cdot \left( \kappa^2 \cdot \mathbbm{1}_{\{ |S|>1 \}} + \mathbbm{1}_{\{ |S|=1 \}} \right) \cdot \left\| y \right\|_2^2. 
	\end{equation}
	In order to prove \eqref{eq:srht-sampling-error-simplified-version}, first note that $\frac{1}{m^{q}} \cdot \left\| \left( P_1 \times P_2 \times \ldots P_q \right) \cdot y \right\|_2^2$ is an unbiased estimator, i.e.,
	\begin{align*}
		\EE_{P_1, \ldots P_q} \left[ \frac{1}{m^{q}} \cdot \left\| \left( P_1 \times \ldots P_q \right) \cdot y \right\|_2^2 \right] &= \frac{1}{m^q} \cdot \sum_{i_1,i_2, \ldots i_q \in [d]} \Pr[i_1 \in P_1] \cdot \ldots \Pr[i_q \in P_q] \cdot |y_{(i_1, \ldots i_q)}|^2\\
		&= \frac{1}{d^q} \cdot \sum_{i_1,i_2, \ldots i_q \in [d]} |y_{(i_1, \ldots i_q)}|^2\\
		&= \frac{\| y \|_2^2}{d^q},
	\end{align*}
	where by $\Pr[i_c \in P_c]$ we mean the probability that $i_c$ is sampled by matrix $P_c$, and this quantity is equal to $\Pr[i_c \in P_c] \equiv \frac{m}{d}$.
	Next we bound the variance of this estimator and then finish the proof by Chebyshev's inequality.
	\begin{align}
		&\EE_{P_1, \ldots P_q} \left[ \left(\frac{1}{m^{q}} \cdot \left\| \left( P_1 \times P_2 \times \ldots P_q \right) \cdot y \right\|_2^2 \right)^2 \right]\nonumber \\ 
		&\qquad = \frac{1}{m^{2q}} \sum_{\jj, \ii \in [d]^q } \Pr[\ii_1, \jj_1 \in P_1] \cdot \ldots  \Pr[\ii_q, \jj_q \in P_q] \cdot |y_{\ii}|^2 \cdot |y_{\jj}|^2\nonumber\\
		&\qquad = \frac{1}{m^{2q}} \sum_{S \subseteq [q]} \sum_{\jj \in [d]^q } \sum_{\ii \in [d]^q_{\jj_S}} \left(\prod_{t \in [q] \setminus S} \mathbbm{1}_{\{ \jj_t \neq \ii_t \}}\right) \cdot \Pr[\ii_1, \jj_1 \in P_1] \cdot \ldots  \Pr[\ii_q, \jj_q \in P_q] \cdot |y_{\ii}|^2 \cdot |y_{\jj}|^2\nonumber\\
		&\qquad = \frac{1}{m^{2q}} \sum_{S \subseteq [q]} \sum_{\jj \in [d]^q } \sum_{\ii \in [d]^q_{\jj_S}} \left(\prod_{t \in [q] \setminus S} \mathbbm{1}_{\{ \jj_t \neq \ii_t \}} \cdot \Pr[\ii_t, \jj_t \in P_t] \right) \cdot \left(\prod_{t' \in S} \Pr[\jj_{t'} \in P_{t'}] \right) \cdot |y_{\ii}|^2 \cdot |y_{\jj}|^2 \label{variance-diagonal-terms}
	\end{align}
	Where the second line follows because $P_1, \ldots P_q$ are independent and the third line follows from the definition of the set $[d]^q_{\jj_S}$. Now we can bound \eqref{variance-diagonal-terms} by noting that for any $i\neq j$, the collision probability $ \Pr[i, j \in P_t] = \frac{m(m-1)}{d(d-1)} \le \left(\frac{m}{d}\right)^2$ and $\Pr[j \in P_t] = \frac{m}{d}$. We can write,
	\begin{align*}
		\EE_{P_1, \ldots P_q} \left[ \left(\frac{1}{m^{q}} \cdot \left\| \left( P_1 \times P_2 \times \ldots P_q \right) \cdot y \right\|_2^2 \right)^2 \right] &\le \frac{1}{m^{2q}} \sum_{\jj \in [d]^q } \sum_{\ii \in [d]^q} \left( \frac{m}{d} \right)^{2q} \cdot |y_{\ii}|^2 \cdot |y_{\jj}|^2 \\
		&\qquad + \frac{1}{m^{2q}} \sum_{\emptyset \neq S \subseteq [q]} \sum_{\jj \in [d]^q } \sum_{\ii \in [d]^q_{\jj_S}} \left(\frac{m}{d} \right)^{2q-|S|} \cdot |y_{\ii}|^2 \cdot |y_{\jj}|^2\\
		&= \frac{\| y \|_2^4}{d^{2q}} + \sum_{\emptyset \neq S \subseteq [q]} \frac{1}{m^{|S|} \cdot d^{2q-|S|}} \sum_{\jj \in [d]^q }|y_{\jj}|^2 \sum_{\ii \in [d]^q_{\jj_S}} |y_{\ii}|^2\\
		&\le  \frac{\| y \|_2^4}{d^{2q}} + \sum_{\substack{S \subseteq [q] \\ |S| = 1}}  \frac{O( \log n )  \left\| y \right\|_2^2}{m \cdot d^{2q}} \sum_{\jj \in [d]^q }|y_{\jj}|^2 \\
		& + \sum_{ \substack{S \subseteq [q] \\ |S| > 1}}  \frac{O( \log n )^{|S|} \cdot \kappa^2 \left\| y \right\|_2^2}{m^{|S|} \cdot d^{2q}} \sum_{\jj \in [d]^q }|y_{\jj}|^2\\
		&\le \frac{\| y \|_2^4}{d^{2q}} +O\left( \frac{q \log n}{m} + \frac{q^2 \kappa^2 \log^2 n }{m^2} \right) \cdot \frac{\| y \|_2^4}{d^{2q}}, 
	\end{align*}
	where the fourth and fifth lines above follow from the fact that $y$ satisfies the condition in \eqref{eq:flatness-row-norms-vector}.
	Therefore, the above inequality along with the fact that $\frac{1}{m^{q}} \cdot \left\| \left( P_1 \times P_2 \times \ldots P_q \right) \cdot y \right\|_2^2$ is an unbiased estimator implies that,
	\[ {\rm Var}_{P_1, \ldots P_q} \left[ \frac{1}{m^{q}} \cdot \left\| \left( P_1 \times P_2 \times \ldots P_q \right) \cdot y \right\|_2^2 \right] =  O\left( \frac{q \log n}{m} + \frac{q^2 \kappa^2 \log^2 n }{m^2} \right) \cdot \frac{\| y \|_2^4}{d^{2q}} \]
	Thus if $m = C \left( \frac{1}{\epsilon^2} + \frac{ \kappa}{\epsilon} \right) \cdot \frac{q}{\delta} \log n $ for a large enough constant $C$, by using the definition of vector $y$ together with Chebyshev's inequality and a union bound, we have the following,
	\[ \Pr \left[ \left\| \left( \left( S^{(1)} X \right) \otimes \ldots \left( S^{(q)} X \right)  \right) \cdot (\Sigma \otimes K)^\top \right\|_F^2 \in (1 \pm \epsilon) \left\| X^{\otimes q} \cdot (\Sigma \otimes K)^\top \right\|_F^2\right] \ge 1 - \delta,\]
	so the lemma statement follows.
	
	The runtime of applying all sketches to $X$ consists of the time to compute $Y = H D X$ and the time to compute $P_r Y$ for every $r \in [q]$. The time to compute $Y$ is $O(n d \log d)$ by using the FFT algorithm and the time to compute all $P_r Y$ matrices is $O(q m n).$
	
	\section{Leverage Score Sampler for Polynomial Kernel}
	
	\subsection{Proof of Lemma~\ref{lem:rotatedrowsampler-poly-selftensor}}\label{appndx-proof-rowsampler-selftensor}
	
	All rows of the sampling matrix $S\in\RR^{s\times d^q}$ (the output of Algorithm \ref{alg:rotatedrowsampler-poly-selftensor}) have independent and identical distributions because for each  $\ell \in[s]$, the $\ell^{th}$ row of $S$ is constructed by sampling indices $i_1,i_2,\cdots i_q$ in line~\ref{isample-alg-selftensor} completely independent of the sampled values for other rows $\ell'\neq \ell$.
	Thus, it is enough to consider the distribution of the $\ell^{th}$ row of $S$ for some arbitrary $\ell \in[s]$. 
	Let $I:=(I_1,\ldots I_q)$ be a vector-valued random variable that takes values in $[d]^q$ with the following conditional probability distribution for every $a=1,2, \cdots q$ and every $i \in [d]$,
	\begin{equation}\label{eq:def-ia-dist-selftensor}
		\Pr\left[ I_a =i | I_1=i_1, \cdots I_{a-1}=i_{a-1} \right] := p^a_{h(i)} \cdot q^{a}_{i},
	\end{equation}
	where distributions $\{p^a_r\}_{r \in [s']}$ and $\{q^{a}_i\}_{i \in h^{-1}(t)}$ for every $t \in [s']$ are defined as per lines~\ref{dist-par-selftensor} and \ref{dist-qai-selftensor} of the algorithm.
	One can verify that the random vector $(i_1,i_2, \cdots i_q)$ obtained by stitching together the random indices generated in line~\ref{isample-alg-selftensor} of the algorithm, is in fact a copy of $I$ defined above.

	Let $\beta_\ell$ be the quantity computed in line \ref{beta-update-selftensor} of the algorithm. If $i_1,i_2, \cdots i_q \in[d]$ are the indices sampled in line~\ref{isample-alg-selftensor} of the algorithm, then using the conditional distribution of $I$ in \eqref{eq:def-ia-dist-selftensor}, we find that the value of $\beta_\ell$ is equal to the following,
	\begin{align*}
		\beta_\ell &= s \cdot \prod_{a=1}^q p^a_{h(i_a)} q^{a}_{i_a} \\
		&= s \cdot \prod_{a=1}^q \Pr \left[ I_a =i_a | I_1=i_1, \cdots I_{a-1}=i_{a-1} \right]\\
		&= s \cdot \Pr \left[ I =(i_1, i_2, \ldots i_q) \right],
	\end{align*}
	where $p^a$ and $q^{a}$ are the distributions computed in lines~\ref{dist-par-selftensor} and \ref{dist-qai-selftensor} of the algorithm.
	Hence, for any $i_1,i_2, \cdots i_q \in [d]$, the distribution of $S_{\ell,\star}$ is,
	\begin{align}
		\Pr\left[ S_{\ell,\star}= \beta_\ell^{-1/2} ({ e}_{i_1} \otimes { e}_{i_2} \otimes \cdots { e}_{i_q})^\top \right]= \Pr \left[ I =(i_1, i_2, \ldots i_q) \right] = \frac{\beta_\ell}{s}. \label{dist-sampling-matrix-poly-selftensor}
	\end{align}
	Now to ease the notation we define $Y_k^{(c)} := \bigotimes_{j=c}^q S_k^{(j)} X$ for every $k \in [m']$ and $c \in [q]$, where $S_k^{(c)}$ are the \SRHT sketches with shared signs drawn in line~\ref{srht-sharedsign} of the algorithm.
	From the definition of $\textsc{TN}^{(k)}$ in line~\ref{tnormDS} and by invoking Lemma~\ref{lem:TensorNormDS} we have the following inequalities for any $r\in[s']$, $k \in [m']$, $i \in [d]$, and any $a=1,2, \ldots q$ , with probability at least $1 - \frac{1}{{\rm poly}(n)}$,
	\begin{align}
		&\textsc{TN}^{(k)}\textsc{.Query}\left( L^a_{r,k} , a \right) \in \left( 1\pm \frac{1}{40q} \right) \left\| \left( Y_k^{(a+1)} \otimes M \right) D^{a} W_{r, k}^{\top}  \right\|_F^2,\label{tensorsketch-selftensor}\\
		&\textsc{TN}^{(k)}\textsc{.Query}\left( D^{a} X_{i,\star}^\top , a \right) \in \left( 1\pm \frac{1}{40q} \right) \left\| \left( Y_k^{(a+1)} \otimes M \right) D^{a} X_{i,\star}^\top \right\|_2^2 \label{Tnorm-xi-selftensor}
	\end{align}
	By union bounding over $qds'm'$ events, \eqref{tensorsketch-selftensor} and \eqref{Tnorm-xi-selftensor} hold simultaneously for all $a\in[q]$, $k \in [m']$, $i\in[d]$, and all $r\in[s']$ with high probability. From now on we condition on \eqref{tensorsketch-selftensor} and \eqref{Tnorm-xi-selftensor}. 
	
	Furthermore, note that $W_{r,k}$ is defined in line~\ref{W-selftensor} as $W_{r,k}=G_r^k \cdot X_{h^{-1}(r),\star} $, where $G_r^k$ is a degree-$1$ \PolyS with target dimension $n'=C_3q^2$. 
	By Lemma~\ref{soda-result}, $G_r^k$ approximately preserves Frobenius norm of any fixed matrix with constant probability. In particular, for every $a\in[q], r\in[s'], k \in [m']$, with probability at least $19/20$:
	\begin{align}
		\left\| \left( Y_k^{(a+1)} \otimes M \right) D^{a} W_{r,k}^{\top} \right\|_F^2 \in \left(1\pm \frac{1}{80q}\right) \left\| Y_k^{(a+1)}  D^{a} \left( X_{h^{-1}(r),\star} \otimes M \right)^\top \right\|_F^2. \label{jl-poly-selftensor}
	\end{align}
	To obtain the above inequality we used the fact that there is a bijective correspondence between entries of $\left( Y_k^{(a+1)} \otimes M \right) D^{a} X_{h^{-1}(r),\star}^\top $ and $Y_k^{(a+1)}  D^{a} \left( X_{h^{-1}(r),\star} \otimes M \right)^\top$.
	
	Additionally, note that $M = H \cdot (B^\top B + \lambda I)^{-1/2}$ for a random Gaussian matrix $H$ with $d' = \Omega(q^2 \log n)$ rows. Therefore, $H$ is a JL-transform. So if we define $A := (B^\top B + \lambda I)^{-1/2}$ for ease of notation, then with probability $1 - \frac{1}{{\rm poly}(n)}$, the following holds for any $a\in[q], r\in[s']$:
	\begin{align*}
		\left\| Y_k^{(a+1)} D^{a} \left( X_{h^{-1}(r),\star} \otimes M \right)^\top \right\|_F^2  \in \left(1\pm \frac{1}{80q}\right) \left\| Y_k^{(a+1)} D^{a} \left( X_{h^{-1}(r),\star} \otimes A \right)^\top \right\|_F^2.
	\end{align*}
	By union bounding over $qs'$ events we can conclude that the above inequality holds simultaneously for all $a\in[q]$ and $r\in[s']$. From now on we condition on the above inequality holding. By combining this condition with \eqref{jl-poly-selftensor} we find that with probability at least $19/20$ the following holds:
	\begin{align}
		\left\| \left( Y_k^{(a+1)} \otimes M \right) D^{a} W_{r,k}^{\top} \right\|_F^2  \in \left(1\pm \frac{1}{39q}\right) \left\| Y_k^{(a+1)} D^{a} \left( X_{h^{-1}(r),\star} \otimes A \right)^\top \right\|_F^2. \label{eq:deg2tensorsketch}
	\end{align}
	Using the definition of matrices $Y_k^{(c)} := \bigotimes_{j=c}^q S_k^{(j)} X$ and by Lemma~\ref{lem:SRHT-shared-sign}, because the number of rows of $S^{(c)}_k$ is $m'' = \Omega(q^3 + q^2 \kappa \log n)$, the following holds with probability at least $19/20$ for any $a\in[q], r\in[s'], k \in [m']$,
	\begin{align*}
		\left\| Y_k^{(a+1)} D^{a} \left( X_{h^{-1}(r),\star} \otimes A \right)^\top \right\|_F^2 \in \left(1\pm \frac{1}{80q}\right) \left\| X^{\otimes (q - a)} D^{a} \left( X_{h^{-1}(r),\star} \otimes A \right)^\top \right\|_F^2.
	\end{align*}
	By combining the above with \eqref{eq:deg2tensorsketch} using a union bound, and plugging the result into \eqref{tensorsketch-selftensor} we find that with probability at least $9/10$ the following holds,
	\begin{align*}
		\textsc{TN}^{(k)}\textsc{.Query}\left( L^a_{r,k} , a \right) \in \left(1\pm \frac{1}{10q}\right) \left\| X^{\otimes (q - a)} \cdot D^{a} \left( X_{h^{-1}(r),\star} \otimes A \right)^\top \right\|_F^2.
	\end{align*}
	By taking the median of $m' = \Omega(\log n)$ independent instances of $\textsc{TN}^{(k)}\textsc{.Query}\left( L^a_{r,k} , a \right)$, the success probability of the above gets boosted.  Thus, by a union bound, with probability at least $1 - \frac{1}{{\rm poly}(n)}$ the following holds simultaneously for all $a \in [q]$ and $r \in [s']$,
	\begin{align}
		\textsc{Median}_{k \in [m']} \left\{ \textsc{TN}^{(k)}\textsc{.Query}\left( L^a_{r,k} , a \right) \right\} \in \left( 1 \pm \frac{1}{10q} \right) \left\| X^{\otimes (q - a)}  D^{a} \left( X_{h^{-1}(r),\star} \otimes A \right)^\top \right\|_F^2 \label{jl-countsketch-median-selftensor}
	\end{align}

	Similarly, we can use the fact that there is a bijective correspondence between the entries of $\left( Y_k^{(a+1)} \otimes M \right) D^{a} X_{i,\star}^\top$ and $Y_k^{(a+1)} D^{a} {\rm diag} \left(X_{i,\star} \right) M^\top$ along with $M = H \cdot A$ to conclude that with probability $1 - \frac{1}{{\rm poly}(n)}$, the following holds for any $a \in[q], r\in[s'], k \in [m'],i \in [d]$:
	\begin{align}
		\left\| \left( Y_k^{(a+1)} \otimes M \right) D^{a} X_{i,\star}^\top \right\|_2^2 \in \left(1\pm \frac{1}{80q}\right) \left\| Y_k^{(a+1)} D^{a} \cdot {\rm diag} \left(X_{i,\star} \right) A \right\|_F^2\label{eq:condition-JL}
	\end{align}
	By a union bound over $qs'm' d$ events we can conclude that the above inequality holds simultaneously for all $a \in[q], r\in[s'], k \in [m'], i \in [d]$. From now on we condition on the above inequality holding. Then, by using the definition of matrices $Y_k^{(c)} := \bigotimes_{j=c}^q S^{(j)}_k X$ and invoking Lemma~\ref{lem:SRHT-shared-sign}, the following holds with probability at least $19/20$ for any $a\in[q], r\in[s'], k \in [m'], i \in [d]$,
	\begin{align*}
		\left\| Y_k^{(a+1)} D^{a} {\rm diag} \left(X_{i,\star} \right) A \right\|_F^2 \in \left(1\pm \frac{1}{80q}\right) \left\| X^{\otimes (q - a)} \cdot D^{a} {\rm diag} \left(X_{i,\star} \right)  A \right\|_F^2
	\end{align*}
	By combining this with the condition in \eqref{eq:condition-JL} and \eqref{Tnorm-xi-selftensor} we find that with probability at least $19/20$:
	\begin{align*}
		\textsc{TN}^{(k)}\textsc{.Query}\left( D^{a} X_{i,\star}^\top , a \right) \in \left(1\pm \frac{1}{15q}\right) \left\| X^{\otimes (q - a)} D^{a} {\rm diag} \left(X_{i,\star} \right) A \right\|_F^2
	\end{align*}
	By taking the median of $m' = \Omega(\log n)$ independent instances of $\textsc{TN}^{(k)}\textsc{.Query}\left( D^{a} X_{i,\star}^\top , a \right)$, the success probability of the above gets boosted. Thus, by applying the median trick and then using a union bound, with probability at least $1 - \frac{1}{{\rm poly}(n)}$ the following holds simultaneously for all $a \in [q] , i \in [d] $ and $r \in [s']$,
	\begin{align*}
		\textsc{Median}_{k \in [m']} \left\{ \textsc{TN}^{(k)}\textsc{.Query}\left( D^{a} X_{i,\star}^\top , a \right) \right\} \in \left(1\pm \frac{1}{15q}\right) \left\| X^{\otimes (q - a)} D^{a} {\rm diag} \left(X_{i,\star} \right) A \right\|_F^2
	\end{align*}
	Plugging the above inequality along with \eqref{jl-countsketch-median-selftensor} into \eqref{eq:def-ia-dist-selftensor}, we conclude that with high probability the following bound holds simultaneously for all $a \in [q]$ and all $i \in [d]$,
	\begin{align}
		\Pr[I_a =i | I_1=i_1, I_2=i_2, \cdots I_{a-1}=i_{a-1}] \ge \left( 1-\frac{1}{3q} \right) \cdot \frac{ \left\| X^{\otimes (q - a)} D^{a} \cdot {\rm diag} \left(X_{i,\star} \right) A \right\|_F^2 }{ \left\| X^{\otimes (q-a+1)} D^{a} A \right\|_F^2}.\label{lower-bnd-Ij-selftensor}
	\end{align}
	Thus, using the definition of $D^a$ and $A = (B^\top B + \lambda I)^{-1/2}$, we have
	\begin{align*}
		\Pr\left[ I=(i_1 , i_2, \cdots i_q) \right]
		&= \prod_{a=1}^{q} \Pr \left[I_a =i_a | I_1=i_1, \cdots I_{a-1}=i_{a-1} \right]\\
		&\ge \prod_{a=1}^{q} \left(1-\frac{1}{3q}\right) \frac{ \left\| X^{\otimes (q - a)} D^{a} \cdot {\rm diag} \left(X_{i_a,\star} \right) A \right\|_F^2 }{ \left\| X^{\otimes (q-a+1)} D^{a} A \right\|_F^2}\\
		& \ge \frac{1}{2} \cdot \frac{ \left\| \mathbf{1}_n^\top \cdot D^{q} \cdot {\rm diag}\left( X_{i_q,\star} \right) A \right\|_F^2}{ \left\| X^{\otimes q} D^{1} A \right\|_F^2}\\ 
		&= \frac{1}{2} \cdot \frac{ \left\| \left[ X^{\otimes q} \cdot (B^\top B + \lambda I)^{-1/2} \right]_{(i_1,i_2,\cdots i_q),\star} \right\|_2^2}{ \left\| X^{\otimes q} \cdot (B^\top B + \lambda I)^{-1/2} \right\|_F^2}
	\end{align*}
	This shows that, with high probability in $n$,
	\begin{align*}
		\Pr\left[ S_{\ell,\star}= \beta_\ell^{-1/2} ({ e}_{i_1} \otimes { e}_{i_2} \otimes \cdots { e}_{i_q})^\top \right] \ge \frac{1}{2} \cdot \frac{ \left\| \left[ X^{\otimes q} \cdot (B^\top B + \lambda I)^{-1/2} \right]_{(i_1,i_2,\cdots i_q),\star} \right\|_2^2}{ \left\| X^{\otimes q} \cdot (B^\top B + \lambda I)^{-1/2} \right\|_F^2}
	\end{align*}
	Because $\frac{\beta_\ell}{s}$ is the probability of sampling row $(i_1,i_2,\cdots i_q)$ of $ X^{\otimes q} (B^\top B +\lambda I)^{-1/2}$, the above inequality proves that with high probability, matrix $S$ is a rank-$s$ row norm sampler for $ X^{\otimes q} (B^\top B +\lambda I)^{-1/2}$ as in Definition \ref{def:row-samp}.

	\paragraph{Runtime:} The first expensive step of this algorithm is the computation of $M$ in line~\ref{M-alg-selftensor} which takes $O(m^2 n + q^2 m n \log n)$ operations since $B$ has rank at most $m$. 
	The next expensive computation is the computation of $S^{(c)}_k X$ for $c \in [q]$ and $k \in [m']$ in line~\ref{tnormDS} of the algorithm. By Lemma~\ref{lem:SRHT-shared-sign}, the total time to compute these sketched matrices is $O\left( (q^4 + q^3 \kappa) n \log^2 n + nd \log^2 n \right)$.
	Another expensive step is the construction of the $\textsc{TN}^{(k)}$ data-structure in line~\ref{tnormDS} for $k \in [m']$. By Lemma \ref{lem:TensorNormDS}, these DS's for $\epsilon = \frac{1}{40q}$ and all $k \in [m']$ can be formed in total time $O\left( q^4 \log^2 q \cdot n \log^2 n + (q^4 + q^3\kappa) n \log^3 n \right)$. 
	
	By Lemma~\ref{soda-result}, matrices $W_{r,k}$ for all $r\in[s']$ and $k \in [m']$ in line~\ref{W-selftensor} of the algorithm can be computed in total time $O\left( q^2 s'  n\log^2 n + \log n \cdot \text{nnz}\left( X \right) \right)$.
	
	The matrix $W_{r, k}$ for every $k \in [m']$, and $r\in[s']$, has size $O(q^2) \times n$.
	Thus, by Lemma~\ref{lem:TensorNormDS}, computing the distribution $\{ p^a_r \}_{r=1}^{s'}$ in line~\ref{dist-par-selftensor} takes time $O\left(q^{4} s' \cdot n \log^2n \log q \right)$ for a fixed $a\in[q]$ and a fixed $\ell \in[s]$.  Therefore, the total time to compute this distribution for all $a$ and $\ell$ is $O\left(q^{8} s^2 \cdot n \log^2 n\log q \right)$. 
	
	The runtime of computing the distribution $\{ q^{a}_i \}_{i\in h^{-1}(t)}$ in line~\ref{dist-qai-selftensor} depends on the sparsity of $X_{h^{-1}(t),\star}$, i.e.,  $\text{nnz}\left( X_{h^{-1}(t),\star} \right)$. To bound the sparsity of $X_{h^{-1}(t),\star}$, note that, the hash function $h$ is fully independent.
	Thus, by invoking Bernstein's inequality, we find that, $\Pr \left[ {\rm nnz}\left( X_{h^{-1}(t),\star} \right) = O\left(\log n \cdot \left( \sqrt{\frac{n}{s'} \text{nnz}\left( X \right)} + n \right) \right) \right] \ge 1 - \frac{1}{{\rm poly}(n)}$.
	Hence, by union bounding over $qs'$ events, with high probability in $n$, ${\rm nnz}\left( X_{h^{-1}(t),\star} \right) = O\left( \left( {{\rm nnz}\left( X \right)/s'} + n \right) \log n \right)$, simultaneously for all $t\in[s']$ and $a \in [q]$.

	Therefore, by Lemma~\ref{lem:TensorNormDS}, the distribution $\{ q^{a}_i \}_{i\in h^{-1}(t)}$ in line~\ref{dist-qai-selftensor} of the algorithm can be computed in total time $O\left( q^3 s n \log^4 n \log q + \log^4 n \log q\cdot {\rm nnz}\left( X \right) \right)$ for all $a \in [q]$ and all $\ell \in [s]$.

	The total runtime of Algorithm~\ref{alg:rotatedrowsampler-poly-selftensor} is thus $O( m^2n + q^{8} s^2 n\log^2 n \log q + q^3\kappa n\log^3 n + d n\log^4n )$.

	\subsection{Proof of Theorem~\ref{main-thrm-selftensor-prod}}\label{appndx:proof-maintheorem}
	The theorem follows by invoking Lemmas~\ref{resursive-rlss-lem} and \ref{lem:rotatedrowsampler-poly-selftensor}. To find the sampling matrix $\Pi$, run Algorithm~\ref{alg:outerloop} on $\Phi$ with $\mu=s_\lambda$ and for the \textsc{RowSampler} primitive, invoke Algorithm~\ref{alg:rotatedrowsampler-poly-selftensor}. By Lemma~\ref{lem:rotatedrowsampler-poly-selftensor}, Algorithm~\ref{alg:rotatedrowsampler-poly-selftensor} outputs a row norm sampler as per Definition~\ref{def:row-samp}, with probability $1 - \frac{1}{{\rm poly}(n)}$. Therefore, since the total number of times Algorithm~\ref{alg:rotatedrowsampler-poly-selftensor} is invoked by Algorithm~\ref{alg:outerloop} is $\log \frac{\| \Phi \|_F^2}{\epsilon\lambda} = O(\log n)$, by a union bound, the preconditions of Lemma~\ref{resursive-rlss-lem} are satisfied with high probability.
	Thus, it follows that $\Pi$ satisfies the following spectral approximation guarantee
	\[ \frac{\Phi^\top \Phi + \lambda I}{1+\epsilon} \preceq \Phi^\top \Pi^\top \Pi \Phi + \lambda I \preceq \frac{\Phi^\top \Phi + \lambda I}{1-\epsilon}.\]
	
	The only thing that remains is bounding the runtime. In the proof of Lemma~\ref{resursive-rlss-lem} in \cite{woodruff2020near}, it is shown that with high probability at any iteration $t \in [T]$ of Algorithm~\ref{alg:outerloop},
	the following holds,
	\[ \frac{\Phi^\top \Phi + \lambda_t I}{1+\epsilon} \preceq \Phi^\top \Pi_t^\top \Pi_t \Phi + \lambda_t I \preceq \frac{\Phi^\top \Phi + \lambda_t I}{1-\epsilon}.\]
	Therefore, $\left\|  \Phi^\top \Pi_t^\top \Pi_t \Phi \right\| = O(\| \Phi^\top \Phi \|)$. Now note that
	Algorithm~\ref{alg:outerloop} invokes the \textsc{RowSampler} primitive $T = \log \frac{\| \Phi \|_F^2}{\epsilon \lambda} = O(\log n)$ times. Thus, by Lemma~\ref{lem:rotatedrowsampler-poly-selftensor}, the runtime of finding $\Pi$ is 
	the sum of $O\left( \frac{q^{8} s_\lambda^2 n \log^4 n }{\epsilon^4} + \sqrt{ \frac{\left\|  \Phi^\top \Pi_t^\top \Pi_t \Phi \right\|}{\lambda_t} } q^3 n \log^3 n + n d \log^4 n \right)$ for all $t \in [T]$.
	Since $\lambda_t = 2^{T - t}\lambda$ has a geometric decay, the total time complexity is $O\left( \frac{q^{8} s_\lambda^2 n \log^5 n}{\epsilon^4} + \sqrt{ \frac{ \left\| \Phi^\top \Phi \right\|}{\lambda} } q^3 n \log^3 n + n d \log^5 n \right)$.
	
	\section{Spectral Approximation to Generalized Polynomial Kernels}\label{appndx-GPK}
	In this section we design an algorithm that can produce a spectral approximation to the GPK defined in Definition~\ref{def;GPK}. Our approach is to perform leverage score sampling on the GPK feature matrix $\Phi$ defined in \eqref{feature-matrix-GPK}. We do this by invoking our recursive sampling method given in Algorithm~\ref{alg:outerloop} on $\Phi$. 
	Our central contribution is the design of \textsc{RowSampler} algorithm for the GPK feature matrix $\Phi$ that runs in input sparsity time. 
	This procedure can perform \emph{row norm sampling} as per Definition \ref{def:row-samp} on $\Phi (B^\top B + \lambda I)^{-1/2}$ for $\Phi = \bigoplus_{j=0}^q \alpha_j X^{\otimes j} {\rm diag}(v)$ using $\widetilde{O} \left( {\rm nnz}(X) \right)$ runtime. 
	Our primitive is an adaptation and generalization of Algorithm~\ref{alg:rotatedrowsampler-poly-selftensor}.

	\begin{algorithm}[!t]
		\caption{\algoname{RowSampler} for GPK features $\Phi = \bigoplus_{j=0}^q \alpha_j X^{\otimes j} {\rm diag}(v)$}
		{\bf input}: $q, s\in \mathbb{Z}_+$, $X \in \RR^{d \times n}$, $v \in \RR^{n}$, $\alpha \in \RR^{q+1}$, $B \in \RR^{m \times n}$, $\lambda >0$\\
		{\bf output}: Sampling matrix $S \in \RR^{s \times d^q}$
		\begin{algorithmic}[1]
			\STATE{$\kappa \gets \sqrt{ \frac{\| B^\top B \|}{\lambda} + 1 }$}
			\STATE{Generate $H\in\RR^{d' \times n}$ with i.i.d. normal entries with $d'= C_0 q^2 \log n$ rows}
			\STATE{$M \gets H \cdot (B^\top B + \lambda I)^{-1/2}$}\alglinelabel{M-alg-gpk}
			
			\STATE{For every $k \in [m']$, let $S_k^{(1)}, S_k^{(2)} , \ldots S_k^{(q)} \in \RR^{m'' \times d}$ be independent copies of \SRHT sketches with shared signs as per Lemma~\ref{lem:SRHT-shared-sign}, where $m' = C_1\log n$ and $m'' = C_2 (q^3 + q^2 \kappa ) \log n$}\alglinelabel{srht-sharedsign-gpk}
			
			\STATE{For every $k \in [m']$, let $\textsc{TNorm}^{(k)}$ be the DS in Algorithm~\ref{alg:TensorNormDS} for inputs $\left( S_k^{(1)} X, \ldots S_k^{(q)} X, M \right)$ and $\epsilon=\frac{1}{40q}$}\alglinelabel{tnormDS-gpk}

			\STATE{Let $h:[d]\rightarrow[s']$ be a fully independent and uniform hash function with $s' = \lceil {q}^{3}s \rceil$ buckets}
			\STATE{Let $h^{-1}(r)=\left\{ j\in[d]:h(j)=r \right\}$ for every $r\in[s']$}
			
			\STATE{For every $r\in[s']$ and $k \in [m']$, let $G_r^k \in \RR^{n'\times d_r }$ be independent instances of degree-$1$ \PolyS as per Lemma~\ref{soda-result}, where $d_r=|h^{-1}(r)|$, $n'=C_3q^2$}
			
			\STATE{$W_{r, k} \gets G_r^k \cdot X_{h^{-1}(r),\star}$ for every $k \in [m']$ and $r\in[s']$}\alglinelabel{W-gpk}

			\STATE{$f_j \gets \alpha_j^2 \cdot \textsc{Median}_{k \in [m']} \textsc{TNorm}^{(k)}\textsc{.Query}( v , q-j )$ for every $j = 0,1, \ldots q$}
			\STATE{$f_j \gets f_j / \sum_{i=0}^q f_i$ for every $j = 0,1, \ldots q$}\alglinelabel{dist-fj}
			
			\FOR{$\ell=1$ to $s$}

			\STATE{Sample $b \in \{0,1,\ldots q\}$ from distribution $\{ f_j \}_{j=0}^q$}\alglinelabel{degree-b-gpk}

			\STATE{$D^{1} \gets {\rm diag}(v)$ and $\beta_\ell \gets s\cdot f_b$}\alglinelabel{beta-forloop-gpk}
			\FOR{$a=1$ to $b$}

			\STATE{$L^a_{r, k} \gets D^{a} \cdot W_{r, k}^{\top}$ for every $k \in [m']$, and $r\in[s']$}
			
			\STATE{$p^{a}_r \gets  \textsc{Median}_{k \in [m']} \textsc{TNorm}^{(k)}\textsc{.Query}( L^a_{r, k} , a + q - b)$ for every $r \in [s']$}
			\STATE{$p^{a}_r \gets  p^{a}_r/ \sum_{t=1}^{s'} p^{a}_t$ for every $r \in [s']$}\alglinelabel{dist-par-gpk}
			\STATE{Sample $t \in [s']$ from distribution $\{p^a_r\}_{r=1}^{s'}$}
			
			\STATE{Let $q^{a}_i \gets \textsc{Median}_{k \in [m']} \textsc{TNorm}^{(k)}\textsc{.Query} ( D^{a} X_{i,\star}^\top , a + q-b )$ for every $i\in h^{-1}(t)$}
			\STATE{$q^{a}_i \gets  q^{a}_i/ \sum_{j \in h^{-1}(t)} q^{a}_j$ for every $i\in h^{-1}(t)$}\alglinelabel{dist-qai-gpk}
			
			\STATE{Sample $i_a \in [d]$ from distribution $\{q^a_i \}_{i\in h^{-1}(t)}$}\alglinelabel{isample-alg-gpk}
			
			\STATE{$D^{a+1}\gets D^{a}\cdot {\rm diag}\left( X^{(a)}_{i_a,\star} \right)$}\alglinelabel{def-Da-gpk}
			
			\STATE{$\beta_\ell \gets  \beta_\ell \cdot p^{a}_{t} q_{i_a}^{a} $} \alglinelabel{beta-update-gpk}
			\ENDFOR
			
			\IF{$b>0$}
			\STATE{Let $\ell^{th}$ row of $S$ be $ \beta_\ell^{-1/2}  
				\big( \underbrace{0,0, \ldots 0}_{\frac{d^b - 1}{d-1} \text{ zeros}}, {e}_{i_1} \otimes {e}_{i_2} \otimes \cdots {e}_{i_b}, \underbrace{0,0, \ldots 0}_{\frac{d^{q+1} - d^{b+1}}{d-1} \text{ zeros}} \big)$}
			\ELSE
			\STATE{Let $\ell^{th}$ row of $S$ be $\beta_\ell^{-1/2}  
				\big( 1, \underbrace{0,0, \ldots 0}_{\frac{d^{q+1} - d}{d-1} \text{ zeros}} \big)$}
			\ENDIF
			\ENDFOR
			\STATE{\textbf{return} $S$}
		\end{algorithmic}
		\label{alg:rotatedrowsampler-GPK}
	\end{algorithm}
	
	The formal guarantee on Algorithm~\ref{alg:rotatedrowsampler-GPK} is given in the following lemma.
	\begin{lemma}\label{lem:rotatedrowsampler-GPK}
		For any matrix $X \in \RR^{d\times n}$, any vector $v \in \RR^n$, any positive integers $q,s$, and any $\alpha \in \RR^{q+1}$, let $\Phi$ be the GPK feature matrix defined in \eqref{feature-matrix-GPK}.
		For any matrix $B\in\RR^{m\times n}$ and any $\lambda>0$, with probability at least $1 - \frac{1}{{\rm poly}(n)}$, Algorithm~\ref{alg:rotatedrowsampler-GPK} outputs a rank-$s$ row norm sampler for $ \Phi (B^\top B + \lambda I)^{-1/2}$ as per Definition \ref{def:row-samp}, in time $O\left( m^2n + q^{8} s^2 n  \log^3 n + q^3 \kappa n \log^3 n + nd \log^4n \right)$, where $\kappa = \sqrt{\| B^\top B \| / \lambda +1 } $.
	\end{lemma}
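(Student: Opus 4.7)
The key structural observation is that $\Phi = \bigoplus_{j=0}^{q} \alpha_j X^{\otimes j} {\rm diag}(v)$ is a block-direct-sum, so each row of $\Phi(B^\top B + \lambda I)^{-1/2}$ lives in a unique block indexed by a degree $b \in \{0, 1, \ldots, q\}$. Row-norm sampling on $\Phi(B^\top B + \lambda I)^{-1/2}$ therefore factors into two independent stages: (i) choose a block $b$ with probability proportional to $\alpha_b^2 \|X^{\otimes b} {\rm diag}(v)(B^\top B + \lambda I)^{-1/2}\|_F^2$, and (ii) conditional on $b$, perform row-norm sampling on the single block $\alpha_b X^{\otimes b} {\rm diag}(v)(B^\top B + \lambda I)^{-1/2}$. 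The plan is to show that the two phases of Algorithm~\ref{alg:rotatedrowsampler-GPK} implement exactly these two stages, up to a $(1 \pm O(1/q))$ multiplicative perturbation that propagates to a constant factor after combining $q+1$ such approximations.

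For the first phase, I would argue that the distribution $\{f_j\}$ computed in line~\ref{dist-fj} is, with high probability, within a $(1 \pm O(1/q))$ factor of the target block-distribution. The query $\textsc{TNorm}^{(k)}\textsc{.Query}(v, q-j)$ estimates $\bigl\| \bigl( \bigotimes_{c=q-j+1}^q S_k^{(c)} X \bigr) \otimes M \bigr)\, v \bigr\|_F^2$, which by the bijective correspondence of tensor products equals $\bigl\| \bigl( \bigotimes_{c=q-j+1}^q S_k^{(c)} X \bigr) {\rm diag}(v)\, M^\top \bigr\|_F^2$. Combining Lemma~\ref{lem:TensorNormDS} for the DS error, Lemma~\ref{lem:SRHT-shared-sign} applied with the structured query $(\Sigma \otimes K) = ({\rm diag}(v) \otimes A)$ (where $A = (B^\top B + \lambda I)^{-1/2}$, so $\kappa \approx \sqrt{\|B^\top B\|/\lambda + 1}$), and the JL property of $H$ in $M = H A$, the quantity is within $(1 \pm O(1/q))$ of $\|X^{\otimes j} {\rm diag}(v)\, A\|_F^2$. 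A median over $m' = \Theta(\log n)$ independent DSs and a union bound over $j \in \{0, \ldots, q\}$ give the claim with probability $1 - 1/{\rm poly}(n)$.

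For the second phase, once $b$ is sampled, the remainder of Algorithm~\ref{alg:rotatedrowsampler-GPK} is structurally identical to Algorithm~\ref{alg:rotatedrowsampler-poly-selftensor} applied to the matrix $X^{\otimes b} {\rm diag}(v)$: the inner loop runs $a = 1, \ldots, b$, the diagonal $D^1$ is initialized to ${\rm diag}(v)$ instead of $I_n$ in order to absorb the right-multiplication by ${\rm diag}(v)$, and the shifted query index $a + q - b$ selects precisely the tail $\bigotimes_{c=a+q-b+1}^q S_k^{(c)} X \otimes M$, which after the bijective-correspondence step corresponds to $X^{\otimes (b-a)}$ tensored with $M$ on the remaining coordinates. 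I would then replay the telescoping argument in the proof of Lemma~\ref{lem:rotatedrowsampler-poly-selftensor} verbatim, with $q$ replaced by $b$ and with $D^1$ initialized to ${\rm diag}(v)$. This yields that $(i_1, \ldots, i_b)$ is produced with probability at least $\tfrac{1}{2}\cdot \bigl\|[X^{\otimes b} {\rm diag}(v)\, A]_{(i_1,\ldots,i_b),\star}\bigr\|_2^2 / \|X^{\otimes b} {\rm diag}(v)\, A\|_F^2$. Multiplying by the first-phase probability $f_b$ and accounting for the weight $\beta_\ell = s \cdot f_b \cdot \prod_a p^a_{h(i_a)} q^a_{i_a}$ in line~\ref{beta-update-gpk} shows that the $\ell$-th row of $S$ equals $\beta_\ell^{-1/2} e_{(b; i_1,\ldots,i_b)}^\top$ with probability exactly $\beta_\ell/s$, so $S$ is a rank-$s$ row-norm sampler for $\Phi A$ per Definition~\ref{def:row-samp}.

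The main obstacle is the careful bookkeeping of the shift index $a + q - b$ in the DS query and the verification that the condition-number factor $\kappa$ appearing in Lemma~\ref{lem:SRHT-shared-sign} is the same $\kappa = \sqrt{\|B^\top B\|/\lambda + 1}$ used to set $m'' = C_2 (q^3 + q^2 \kappa) \log n$, so that all $q+1$ applications of the shared-sign SRHT (with various block lengths $j \le q$) simultaneously succeed under a single union bound. The runtime analysis is then additive over Lemma~\ref{lem:rotatedrowsampler-poly-selftensor}: the only new cost is computing the $(q+1)$ values $f_j$, each a median of $m'$ DS queries on the vector $v \in \RR^n$, contributing $\widetilde{O}(q^3 n)$ and therefore absorbed into the claimed $O(m^2 n + q^8 s^2 n \log^3 n + q^3 \kappa n \log^3 n + n d \log^4 n)$ bound.
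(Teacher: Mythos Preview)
Your proposal is correct and follows essentially the same approach as the paper's own proof: decompose row-norm sampling on the block-direct-sum $\Phi$ into (i) block selection via the distribution $\{f_j\}$, verified using Lemma~\ref{lem:TensorNormDS}, the JL property of $H$, and Lemma~\ref{lem:SRHT-shared-sign}, and (ii) within-block sampling by replaying the telescoping argument of Lemma~\ref{lem:rotatedrowsampler-poly-selftensor} with $D^1 = {\rm diag}(v)$ and the shifted query index $a+q-b$. The paper carries out exactly this plan, with the same chain of approximations and the same runtime accounting; your identification of the $\kappa$ bookkeeping and the union bound over all block lengths $j \le q$ matches the paper's treatment.
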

	\begin{proof}
		All rows of the sampling matrix $S\in\RR^{s\times d^q}$ (the output of Algorithm \ref{alg:rotatedrowsampler-poly-selftensor}) have independent and identical distributions because for each  $\ell \in[s]$, the $\ell^{th}$ row of the matrix $S$ is constructed by sampling the degree $b$ and indices $i_1,i_2,\cdots i_q$ in lines\ref{degree-b-gpk} and \ref{isample-alg-gpk}, respectively, completely independent of the sampled values for other rows $\ell'\neq \ell$.
		Thus, it is enough to consider the distribution of the $\ell^{th}$ row of $S$ for some arbitrary $\ell \in[s]$. 
		
		Let $U$ be a random variable that takes values in $\{ 0,1, \ldots q \}$ with the following distribution for every $a=0,1, \ldots q$
		\begin{equation}\label{eq:U-dist}
			\Pr[U=a] = f_a
		\end{equation}
		where $\{ f_a \}_{a=0}^q$ is the distribution defined in line~\ref{dist-fj} of the algorithm.
		Additionally, for any $b \in \{ 1, \ldots q\}$, let $I^b:=(I_1,\ldots I_b)$ be a vector-valued random variable that takes values in $[d]^b$ with the following conditional probability distribution for every $a=1,2, \cdots b$ and every $i \in [d]$,
		\begin{equation}\label{eq:def-ia-dist-gpk}
			\Pr\left[ I_a =i | I_1=i_1, \cdots I_{a-1}=i_{a-1} \right] := p^a_{h(i)} \cdot q^{a}_{i},
		\end{equation}
		where distributions $\{p^a_r\}_{r \in [s']}$ and $\{q^{a}_i\}_{i \in h^{-1}(t)}$ for every $t \in [s']$ are defined as per lines~\ref{dist-par-gpk} and \ref{dist-qai-gpk} of the algorithm.
		One can verify that conditioned on Algorithm~\ref{alg:rotatedrowsampler-GPK} sampling some $b=1,\ldots q$ in line~\ref{degree-b-gpk}, the random vector $(i_1,i_2, \cdots i_b)$ obtained by stitching together the random indices generated in line~\ref{isample-alg-gpk} of the algorithm, is in fact a copy of $I^b$ defined above. Note that if the algorithm samples degree $b=0$ in line~\ref{degree-b-gpk} then the algorithm does not sample any indices in  line~\ref{isample-alg-gpk}.

		Let $\beta_\ell$ be the quantity computed in line \ref{beta-update-gpk} of the algorithm. If $b\in \{1, \ldots q\}$ is the degree sampled in line~\ref{degree-b-gpk} and $i_1,i_2, \cdots i_b \in[d]$ are the indices sampled in line~\ref{isample-alg-gpk} of the algorithm, then using the distribution of $U$ in \eqref{eq:U-dist} and the conditional distribution of $I^b$ in \eqref{eq:def-ia-dist-gpk}, we find that the value of $\beta_\ell$ is equal to the following,
		\begin{align*}
			\beta_\ell &= s \cdot f_b \cdot \prod_{a=1}^b p^a_{h(i_a)} q^{a}_{i_a} \\
			&= s \cdot \Pr[U=b] \cdot \prod_{a=1}^b \Pr \left[ I_a =i_a | I_1=i_1, \cdots I_{a-1}=i_{a-1} \right]\\
			&= s \cdot \Pr \left[ I^b =(i_1, i_2, \ldots i_b) \right]\cdot \Pr[U=b],
		\end{align*}
		where $p^a$ and $q^{a}$ are the distributions computed in lines~\ref{dist-par-gpk} and \ref{dist-qai-gpk} of the algorithm.
		Hence, for any $b=1,\ldots q$ and any $i_1,i_2, \cdots i_b \in [d]$, the distribution of $S_{\ell,\star}$ is,
		\begin{align}
			&\Pr\left[ S_{\ell,\star}= \beta_\ell^{-1/2} \big( \underbrace{0,0, \ldots 0}_{\frac{d^b - 1}{d-1} \text{ zeros}}, {e}_{i_1} \otimes {e}_{i_2} \otimes \cdots {e}_{i_b}, \underbrace{0,0, \ldots 0}_{\frac{d^{q+1} - d^{b+1}}{d-1} \text{ zeros}} \big) \right]\nonumber\\
			&\qquad= \Pr \left[ I^b =(i_1, i_2, \ldots i_b) \right]\cdot \Pr[U=b] = \frac{\beta_\ell}{s}. \label{dist-sampling-matrix-poly-selftensor}
		\end{align}
		Furthermore, if $b=0$ is the degree sampled in line~\ref{degree-b-gpk} of the algorithm then $\beta_\ell = s\cdot f_0 = s\cdot \Pr[U=0]$. Thus,
		\[ \Pr\left[ S_{\ell,\star}= \beta_\ell^{-1/2} \big( 1, \underbrace{0,0, \ldots 0}_{\frac{d^{q+1} - d}{d-1} \text{ zeros}} \big) \right] = \Pr[U=0] = \frac{\beta_\ell}{s}. \]
		
		Now to ease the notation we define $Y_k^{(c)} := \bigotimes_{j=c}^q S_k^{(j)} X$ for every $k \in [m']$ and $c \in [q]$, where $S_k^{(c)}$ are the \SRHT sketches with shared signs drawn in line~\ref{srht-sharedsign-gpk} of the algorithm.
		Using the definition of $\textsc{TNorm}^{(k)}$ in line~\ref{tnormDS-gpk} and by invoking Lemma~\ref{lem:TensorNormDS} we have the following inequality for any $k \in [m']$ and any $j=0,1, \ldots q$:
		\begin{equation}\label{Tnorm-taylorterms-gpk}
			\textsc{TNorm}^{(k)}\textsc{.Query}\left( v , q-j \right) \in \left( 1\pm \frac{1}{40q} \right) \left\| \left( Y_k^{(q-j+1)} \otimes M \right) v \right\|_2^2.
		\end{equation}
		By union bounding over $(q+1)m'$ events, \eqref{Tnorm-taylorterms-gpk} holds simultaneously for all $j \in \{0,1,\ldots q\}$, and $k \in [m']$, with high probability. From now on we condition on \eqref{Tnorm-taylorterms-gpk}. 
		Now, note that $M = H \cdot (B^\top B + \lambda I)^{-1/2}$ for a random Gaussian matrix $H$ with $d' = \Omega(q^2 \log n)$ rows. Therefore, $H$ is a JL-transform. So if we define $A := (B^\top B + \lambda I)^{-1/2}$ for ease of notation, then with probability $1 - \frac{1}{{\rm poly}(n)}$, the following holds for any $j\in\{ 0,1, \ldots q \}, k \in [m']$:
		\[ \left\| \left( Y_k^{(q-j+1)} \otimes M \right) v \right\|_2^2 \in \left( 1 \pm \frac{1}{80q} \right) \left\| Y_k^{(q-j+1)} \cdot {\rm diag}(v) A \right\|_F^2. \]
		To obtain the above inequality we used the fact that there is a bijective correspondence between entries of vector $\left( Y_k^{(q-j+1)} \otimes M \right) v$ and matrix $Y_k^{(q-j+1)} \cdot {\rm diag} (v) M^\top$.
		Now, using the above inequality along with the definition of matrices $Y_k^{(c)} := \bigotimes_{j=c}^q S_k^{(j)} X$ and by invoking Lemma~\ref{lem:SRHT-shared-sign}, because the number of rows of $S^{(c)}_k$'s is $m'' = \Omega(q^3 + q^2 \kappa \log n)$, the following holds with probability at least $9/10$ for any $j\in\{ 0,1, \ldots q \}, k \in [m']$,
		\begin{align*}
			\left\| \left( Y_k^{(q-j+1)} \otimes M \right) v \right\|_2^2 \in \left(1\pm \frac{1}{39q}\right) \left\| X^{\otimes j} \cdot {\rm diag}(v)A \right\|_F^2.
		\end{align*}
		By plugging the above into \eqref{Tnorm-taylorterms-gpk}, we find that with probability at least $9/10$ the following holds,
		\begin{align*}
			\textsc{TNorm}^{(k)}\textsc{.Query}\left( v , q-j \right) \in \left(1\pm \frac{1}{19q}\right) \left\| X^{\otimes j} \cdot {\rm diag}(v)A \right\|_F^2.
		\end{align*}
		By taking the median of $m' = \Omega(\log n)$ independent instances of $\textsc{TNorm}^{(k)}\textsc{.Query}\left( v , q-j \right)$, the success probability of the above gets boosted.  Thus, by a union bound, with probability at least $1 - \frac{1}{{\rm poly}(n)}$, the following holds simultaneously for all $j\in\{ 0,1, \ldots q \}$,
		\[ \textsc{Median}_{k \in [m']} \left\{ \textsc{TNorm}^{(k)}\textsc{.Query}\left( v , q-j \right) \right\} \in \left(1\pm \frac{1}{19q}\right) \left\| X^{\otimes j} \cdot {\rm diag}(v)A \right\|_F^2.\]
		Therefore, using the above along with \eqref{eq:U-dist} and definition of $f_j$ in line~\ref{dist-fj} of the algorithm as well as $A = (B^\top B + \lambda I)^{-1/2}$, with high probability in $n$, for any $b=0,1,\ldots q$ we have
		\begin{align}
			\Pr[U=b] &= f_b\nonumber \\ 
			&\ge \left(1\pm \frac{1}{9q}\right) \frac{\alpha_b^2 \cdot \left\| X^{\otimes b} \cdot {\rm diag}(v)A \right\|_F^2}{\sum_{j=0}^q \alpha_j^2 \cdot \left\| X^{\otimes j} \cdot {\rm diag}(v)A \right\|_F^2}\nonumber \\
			&= \left(1\pm \frac{1}{9q}\right) \frac{\alpha_b^2 \cdot \left\| X^{\otimes b} \cdot {\rm diag}(v) (B^\top B + \lambda I)^{-1/2} \right\|_F^2}{\left\| \Phi \cdot (B^\top B + \lambda I)^{-1/2} \right\|_F^2},\label{eq:dist-fj-gpk}
		\end{align}
		where the last line follows from the definition of $\Phi = \bigoplus_{j=0}^q \alpha_j X^{\otimes j} {\rm diag}(v)$.

		Moreover, suppose that $b\in \{ 1,2, \ldots q \}$. From the definition of $\textsc{TNorm}^{(k)}$ in line~\ref{tnormDS-gpk} and by invoking Lemma~\ref{lem:TensorNormDS} we have the following inequalities for any $r\in[s']$, $k \in [m']$, $i \in [d]$, and any $a=1,2, \ldots b$ , with probability at least $1 - \frac{1}{{\rm poly}(n)}$,
		\begin{align}
			&\textsc{TNorm}^{(k)}\textsc{.Query}\left( L^a_{r,k} , a + q-b \right) \in \left( 1\pm \frac{1}{40q} \right) \left\| \left( Y_k^{(a + q-b +1)} \otimes M \right) D^{a} W_{r, k}^{\top}  \right\|_F^2,\label{tensorsketch-gpk}\\
			&\textsc{TNorm}^{(k)}\textsc{.Query}\left( D^{a} X_{i,\star}^\top , a + q-b \right) \in \left( 1\pm \frac{1}{40q} \right) \left\| \left( Y_k^{(a + q-b +1)} \otimes M \right) D^{a} X_{i,\star}^\top \right\|_2^2 \label{Tnorm-xi-gpk}
		\end{align}
		By union bounding over $qds'm'$ events, \eqref{tensorsketch-gpk}, and \eqref{Tnorm-xi-gpk} hold simultaneously for all $a\in[b]$, $k \in [m']$, $i\in[d]$, and all $r\in[s']$ with high probability. From now on we condition on \eqref{tensorsketch-gpk} and \eqref{Tnorm-xi-gpk}.

		Furthermore, note that $W_{r,k}$ is defined in line~\ref{W-gpk} as $W_{r,k}=G_r^k \cdot X_{h^{-1}(r),\star} $, where $G_r^k$ is a degree-$1$ \PolyS with target dimension $n'=C_3q^2$. 
		By Lemma~\ref{soda-result}, $G_r^k$ approximately preserves the Frobenius norm of any fixed matrix with constant probability. In particular, for every $a\in[b], r\in[s'], k \in [m']$, with probability at least $19/20$:
		\begin{align}
			\left\| \left( Y_k^{(a +q-b +1)} \otimes M \right) D^{a} W_{r,k}^{\top} \right\|_F^2 \in \left(1\pm \frac{1}{80q}\right) \left\| Y_k^{(a +q-b +1)}  D^{a} \left( X_{h^{-1}(r),\star} \otimes M \right)^\top \right\|_F^2. \label{jl-poly-gpk}
		\end{align}
		To obtain the above inequality we used the fact that there is a bijective correspondence between entries of $\left( Y_k^{(a +q-b +1)} \otimes M \right) D^{a} X_{h^{-1}(r),\star}^\top $ and $Y_k^{(a +q-b +1)}  D^{a} \left( X_{h^{-1}(r),\star} \otimes M \right)^\top$.
		Additionally, we use the fact that $M = H \cdot A$ for a JL-transform $H$. So, with probability $1 - \frac{1}{{\rm poly}(n)}$, the following holds for any $a\in[b], r\in[s']$:
		\begin{align*}
			\left\| Y_k^{(a +q-b +1)} D^{a} \left( X_{h^{-1}(r),\star} \otimes M \right)^\top \right\|_F^2 \in \left(1\pm \frac{1}{80q}\right) \left\| Y_k^{(a +q-b +1)} D^{a} \left( X_{h^{-1}(r),\star} \otimes A \right)^\top \right\|_F^2.
		\end{align*}
		By union bounding over $qs'$ events we can conclude that the above inequality holds simultaneously for all $a\in[b], r\in[s']$. From now on we condition on the above inequality holding. By combining this condition with \eqref{jl-poly-gpk} we find that with probability at least $19/20$ the following holds:
		\begin{align}
			\left\| \left( Y_k^{(a+ q-b +1)} \otimes M \right) D^{a} W_{r,k}^{\top} \right\|_F^2  \in \left(1\pm \frac{1}{39q}\right) \left\| Y_k^{(a+ q-b +1)} D^{a} \left( X_{h^{-1}(r),\star} \otimes A \right)^\top \right\|_F^2. \label{eq:deg2tensorsketch-gpk}
		\end{align}
		Using the definition of matrices $Y_k^{(c)} := \bigotimes_{j=c}^q S_k^{(j)} X$ and by Lemma~\ref{lem:SRHT-shared-sign}, because the number of rows of $S^{(c)}_k$ is $m'' = \Omega(q^3 + q^2 \kappa \log n)$, the following holds with probability at least $19/20$ for any $a\in[b], r\in[s'], k \in [m']$,
		\begin{align*}
			\left\| Y_k^{(a+ q-b +1)} D^{a} \left( X_{h^{-1}(r),\star} \otimes A \right)^\top \right\|_F^2 \in \left(1\pm \frac{1}{80q}\right) \left\| X^{\otimes (b - a)} D^{a} \left( X_{h^{-1}(r),\star} \otimes A \right)^\top \right\|_F^2.
		\end{align*}
		By combining the above with \eqref{eq:deg2tensorsketch-gpk} and a union bound, plugging the result into \eqref{tensorsketch-gpk} we find that with probability at least $9/10$ the following holds,
		\begin{align*}
			\textsc{TNorm}^{(k)}\textsc{.Query}\left( L^a_{r,k} , a \right) \in \left(1\pm \frac{1}{10q}\right) \left\| X^{\otimes (b - a)} \cdot D^{a} \left( X_{h^{-1}(r),\star} \otimes A \right)^\top \right\|_F^2.
		\end{align*}
		By taking the median of $m' = \Omega(\log n)$ independent instances of $\textsc{TN}^{(k)}\textsc{.Query}\left( L^a_{r,k} , a \right)$, the success probability of the above gets boosted.  Thus, by a union bound, with probability at least $1 - \frac{1}{{\rm poly}(n)}$ the following holds simultaneously for all $a \in [b]$ and $r \in [s']$,
		\begin{align}
			\textsc{Median}_{k \in [m']} \left\{ \textsc{TNorm}^{(k)}\textsc{.Query}\left( L^a_{r,k} , a \right) \right\} \in \left( 1 \pm \frac{1}{10q} \right) \left\| X^{\otimes (b - a)}  D^{a} \left( X_{h^{-1}(r),\star} \otimes A \right)^\top \right\|_F^2 \label{jl-countsketch-median-gpk}
		\end{align}
		
		Similarly, we can use the fact that there is a bijective correspondence between the entries of $\left( Y_k^{(a + q-b +1)} \otimes M \right) D^{a} X_{i,\star}^\top$ and $Y_k^{(a + q-b +1)} D^{a} {\rm diag} \left(X_{i,\star} \right) M^\top$ along with $M = H \cdot A$ to conclude that with probability $1 - \frac{1}{{\rm poly}(n)}$, the following holds for any $a \in[b], r\in[s'], k \in [m'],i \in [d]$:
		\begin{align}
			\left\| \left( Y_k^{(a + q-b +1)} \otimes M \right) D^{a} X_{i,\star}^\top \right\|_2^2 \in \left(1\pm \frac{1}{80q}\right) \left\| Y_k^{(a + q-b +1)} D^{a} \cdot {\rm diag} \left(X_{i,\star} \right) A \right\|_F^2\label{eq:condition-JL-gkp}
		\end{align}
		By a union bound over $qs'm' d$ events we can conclude that the above inequality holds simultaneously for all $a \in[b], r\in[s'], k \in [m'], i \in [d]$. From now on we condition on the above inequality holding. Then by using the definition of matrices $Y_k^{(c)} := \bigotimes_{j=c}^q S^{(j)}_k X$ and invoking Lemma~\ref{lem:SRHT-shared-sign}, the following holds with probability at least $19/20$ for any $a\in[b], r\in[s'], k \in [m'], i \in [d]$,
		\begin{align*}
			\left\| Y_k^{(a+ q-b +1)} D^{a} {\rm diag} \left(X_{i,\star} \right) A \right\|_F^2 \in \left(1\pm \frac{1}{80q}\right) \left\| X^{\otimes (b - a)} \cdot D^{a} {\rm diag} \left(X_{i,\star} \right)  A \right\|_F^2
		\end{align*}
		By combining this with the condition in \eqref{eq:condition-JL-gkp} and \eqref{Tnorm-xi-gpk} we find that with probability at least $19/20$:
		\begin{align*}
			\textsc{TNorm}^{(k)}\textsc{.Query}\left( D^{a} X_{i,\star}^\top , a+ q-b  \right) \in \left(1\pm \frac{1}{19q}\right) \left\| X^{\otimes (b - a)} D^{a} {\rm diag} \left(X_{i,\star} \right) A \right\|_F^2
		\end{align*}
		By taking the median of $m' = \Omega(\log n)$ independent instances of $\textsc{TNorm}^{(k)}\textsc{.Query}\left( D^{a} X_{i,\star}^\top , a+q-b \right)$, the success probability of the above gets boosted. Thus, by applying the median trick and then using a union bound, with probability at least $1 - \frac{1}{{\rm poly}(n)}$ the following holds simultaneously for all $a \in [b] , i \in [d] $ and $r \in [s']$,
		\begin{align*}
			\textsc{Median}_{k \in [m']} \left\{ \textsc{TNorm}^{(k)}\textsc{.Query}\left( D^{a} X_{i,\star}^\top , a+q-b \right) \right\} \in \left(1\pm \frac{1}{19q}\right) \left\| X^{\otimes (b - a)} D^{a} {\rm diag} \left(X_{i,\star} \right) A \right\|_F^2
		\end{align*}
		Plugging the above inequality along with \eqref{jl-countsketch-median-gpk} into \eqref{eq:def-ia-dist-gpk}, we conclude that with high probability the following bound holds simultaneously for all $a \in [b]$ and all $i \in [d]$,
		\begin{align}
			&\Pr[I_a =i | I_1=i_1, I_2=i_2, \cdots I_{a-1}=i_{a-1}] \ge \left( 1-\frac{1}{3q} \right) \cdot \frac{ \left\| X^{\otimes (b - a)} D^{a} \cdot {\rm diag} \left(X_{i,\star} \right) A \right\|_F^2 }{ \left\| X^{\otimes (b-a+1)} D^{a} A \right\|_F^2}.\label{lower-bnd-Ij-gpk}
		\end{align}
		Thus, using the definition of $D^a$ and $A = (B^\top B + \lambda I)^{-1/2}$, for any $b \in \{ 1,2, \ldots q\}$, we have
		\begin{align*}
			\Pr\left[ I^b =(i_1 , i_2, \cdots i_b) \right]
			&= \prod_{a=1}^{q} \Pr \left[I_a =i_a | I_1=i_1, \cdots I_{a-1}=i_{a-1} \right]\\
			&\ge \prod_{a=1}^{b} \left(1-\frac{1}{3q}\right) \frac{ \left\| X^{\otimes (b - a)} D^{a} \cdot {\rm diag} \left(X_{i_a,\star} \right) A \right\|_F^2 }{ \left\| X^{\otimes (b-a+1)} D^{a} A \right\|_F^2}\\
			& \ge \frac{1}{2} \cdot \frac{ \left\| \mathbf{1}_n^\top \cdot D^{b} \cdot {\rm diag}\left( X_{i_b,\star} \right) A \right\|_F^2}{ \left\| X^{\otimes b} D^{1} A \right\|_F^2}\\ 
			&= \frac{1}{2} \cdot \frac{ \left\| \left[ X^{\otimes b} \cdot {\rm diag}(v)  (B^\top B + \lambda I)^{-1/2} \right]_{(i_1,i_2,\cdots i_b),\star} \right\|_2^2}{ \left\| X^{\otimes b} \cdot {\rm diag}(v) (B^\top B + \lambda I)^{-1/2} \right\|_F^2}
		\end{align*}

		This together with \eqref{eq:dist-fj-gpk}, shows that for any $b \in \{1,2 \ldots q\}$, with high probability in $n$,
		\begin{align*}
			&\Pr\left[ S_{\ell,\star}= \beta_\ell^{-1/2} \big( \underbrace{0,0, \ldots 0}_{\frac{d^b - 1}{d-1} \text{ zeros}}, {e}_{i_1} \otimes {e}_{i_2} \otimes \cdots {e}_{i_b}, \underbrace{0,0, \ldots 0}_{\frac{d^{q+1} - d^{b+1}}{d-1} \text{ zeros}} \big) \right]\nonumber\\
			&\qquad= \Pr \left[ I^b =(i_1, i_2, \ldots i_b) \right]\cdot \Pr[U=b] \\
			&\qquad\ge \frac{1}{3} \cdot \frac{ \left\| \left[ X^{\otimes b} {\rm diag}(v)  (B^\top B + \lambda I)^{-1/2} \right]_{(i_1,i_2,\cdots i_b),\star} \right\|_2^2}{ \left\| X^{\otimes b} {\rm diag}(v)  (B^\top B + \lambda I)^{-1/2} \right\|_F^2} \cdot \frac{\alpha_b^2 \cdot \left\| X^{\otimes b} \cdot {\rm diag}(v) (B^\top B + \lambda I)^{-1/2} \right\|_F^2}{\left\| \Phi \cdot (B^\top B + \lambda I)^{-1/2} \right\|_F^2} \\
			&\qquad= \frac{1}{3} \cdot \frac{ \alpha_b^2 \cdot  \left\| \left[ X^{\otimes b} {\rm diag}(v)  (B^\top B + \lambda I)^{-1/2} \right]_{(i_1,i_2,\cdots i_b),\star} \right\|_2^2}{\left\| \Phi \cdot (B^\top B + \lambda I)^{-1/2} \right\|_F^2}.
		\end{align*}
		The numerator above is exactly equal to the norm of row $(i_1,i_2,\cdots i_b)$ of the $b^{th}$ block of the matrix $ \Phi (B^\top B +\lambda I)^{-1/2}$ (note that $\Phi$ has $q+1$ blocks and its $b^{th}$ block is $\alpha_b \cdot X^{\otimes b}{\rm diag}(v)$). On the other hand if $b=0$, we have,
		\[ \Pr\left[ S_{\ell,\star}= \beta_\ell^{-1/2} \big( 1, \underbrace{0,0, \ldots 0}_{\frac{d^{q+1} - d}{d-1} \text{ zeros}} \big) \right] = \Pr[U=0] \ge \left(1\pm \frac{1}{9q}\right) \frac{\alpha_0^2 \cdot \left\| X^{\otimes 0} \cdot {\rm diag}(v) (B^\top B + \lambda I)^{-1/2} \right\|_F^2}{\left\| \Phi \cdot (B^\top B + \lambda I)^{-1/2} \right\|_F^2}. \]
		The numerator above is exactly equal to the norm of (the sole row of) the $0^{th}$ block of the matrix $ \Phi (B^\top B +\lambda I)^{-1/2}$.
		
		Because $\frac{\beta_\ell}{s}$ is the probability of sampling row $(i_1,i_2,\cdots i_b)$ in the $b^{th}$ block of the matrix $ \Phi (B^\top B +\lambda I)^{-1/2}$ or the sole row of the zero-th block, the above inequalities prove that with high probability, matrix $S$ is a rank-$s$ row norm sampler for $ \Phi (B^\top B +\lambda I)^{-1/2}$ as in Definition \ref{def:row-samp}.

		\paragraph{Runtime:} The first expensive step of this algorithm is the computation of $M$ in line~\ref{M-alg-gpk} which takes $O(m^2 n + q^2 m n \log n)$ operations since $B$ has rank at most $m$. 
		The next expensive computation is the computation of $S^{(c)}_k X$ for $c \in [q]$ and $k \in [m']$ in line~\ref{tnormDS-gpk} of the algorithm. By Lemma~\ref{lem:SRHT-shared-sign}, the total time to compute these sketched matrices is $O\left( (q^4 + q^3 \kappa) n \log^2 n + nd \log^2 n \right)$.
		Another expensive step is the construction of the $\textsc{TNorm}^{(k)}$ data-structure in line~\ref{tnormDS} for $k \in [m']$. By Lemma \ref{lem:TensorNormDS}, these DS's for $\epsilon = \frac{1}{40q}$ and all $k \in [m']$ can be formed in total time $O\left( q^4 \log^2 q \cdot n \log^2 n + (q^4 + q^3\kappa) n \log^3 n \right)$. 
		
		By Lemma~\ref{soda-result}, matrices $W_{r,k}$ for all $r\in[s']$ and $k \in [m']$ in line~\ref{W-gpk} of the algorithm can be computed in total time $O\left( q^2 s'  n\log^2 n + \log n \cdot \text{nnz}\left( X \right) \right)$.
		
		The matrix $W_{r, k}$ for every $k \in [m']$, and $r\in[s']$, has size $O(q^2) \times n$.
		Thus, by Lemma~\ref{lem:TensorNormDS}, computing the distribution $\{ p^a_r \}_{r=1}^{s'}$ in line~\ref{dist-par-gpk} takes time $O\left(q^{4} s' \cdot n \log^2n \log q \right)$ for a fixed $a\in[b]$ and a fixed $\ell \in[s]$.  Therefore, the total time to compute this distribution for all $a$ and $\ell$ is $O\left(q^{8} s^2 \cdot n \log^2 n\log q \right)$. 
		
		The runtime of computing the distribution $\{ q^{a}_i \}_{i\in h^{-1}(t)}$ in line~\ref{dist-qai-gpk} depends on the sparsity of $X_{h^{-1}(t),\star}$, i.e.,  $\text{nnz}\left( X_{h^{-1}(t),\star} \right)$. To bound the sparsity of $X_{h^{-1}(t),\star}$, note that, $\text{nnz}\left( X_{h^{-1}(t),\star} \right) = \sum_{i=1}^d \mathbbm{1}_{\{i \in h^{-1}(t)\}} \cdot \text{nnz}\left( X_{i,\star} \right)$.
		Since the hash function $h$ is fully independent, by invoking Bernstein's inequality, we find that, for every $t\in[s']$ and $a \in [b]$, with high probability in $n$, ${\rm nnz}\left( X_{h^{-1}(t),\star} \right) = O\left( \left( {{\rm nnz}\left( X \right)/s'} + n \right) \log n \right)$. 
		By union bounding over $qs'$ events, with high probability in $n$, ${\rm nnz}\left( X_{h^{-1}(t),\star} \right) = O\left( \left( {{\rm nnz}\left( X \right)/s'} + n \right) \log n \right)$, simultaneously for all $t\in[s']$ and $a \in [b]$.

		Therefore, by Lemma~\ref{lem:TensorNormDS}, the distribution $\{ q^{a}_i \}_{i\in h^{-1}(t)}$ in line~\ref{dist-qai-gpk} of the algorithm can be computed in total time $O\left( q^3 s n \log^4 n \log q + \log^4 n \log q\cdot {\rm nnz}\left( X \right) \right)$ for all $a \in [b]$ and all $\ell \in [s]$.

		The total runtime of Algorithm~\ref{alg:rotatedrowsampler-poly-selftensor} is thus $O( m^2n + q^{8} s^2 n\log^2 n \log q + q^3\kappa n\log^3 n + d n\log^4n )$.
		
	\end{proof}
	
	Now we are ready to prove the main result, i.e., Theorem~\ref{main-thrm-gpk}.
	
	\begin{proofof}{Theorem~\ref{main-thrm-gpk}}
		The theorem follows by invoking Lemmas~\ref{resursive-rlss-lem} and \ref{lem:rotatedrowsampler-GPK}. To find the sampling matrix $\Pi$, run Algorithm~\ref{alg:outerloop} on $\Phi$ with $\mu=s_\lambda$ and for the \textsc{RowSampler} primitive, invoke Algorithm~\ref{alg:rotatedrowsampler-poly-selftensor}. By Lemma~\ref{lem:rotatedrowsampler-GPK}, Algorithm~\ref{alg:rotatedrowsampler-GPK} outputs a row norm sampler as per Definition~\ref{def:row-samp}, with probability $1 - \frac{1}{{\rm poly}(n)}$. Therefore, since the total number of times Algorithm~\ref{alg:rotatedrowsampler-GPK} is invoked by Algorithm~\ref{alg:outerloop} is $\log \frac{\| \Phi \|_F^2}{\epsilon\lambda} = O(\log n)$, by a union bound, the preconditions of Lemma~\ref{resursive-rlss-lem} are satisfied with high probability.
		Thus, it follows that $\Pi$ satisfies the following spectral approximation guarantee
		\[ \frac{\Phi^\top \Phi + \lambda I}{1+\epsilon} \preceq \Phi^\top \Pi^\top \Pi \Phi + \lambda I \preceq \frac{\Phi^\top \Phi + \lambda I}{1-\epsilon}.\]
		
		The only thing that remains is to bound the runtime. In the proof of Lemma~\ref{resursive-rlss-lem} in \cite{woodruff2020near}, it is shown that with high probability at any iteration $t \in [T]$ of Algorithm~\ref{alg:outerloop},
		the following holds,
		\[ \frac{\Phi^\top \Phi + \lambda_t I}{1+\epsilon} \preceq \Phi^\top \Pi_t^\top \Pi_t \Phi + \lambda_t I \preceq \frac{\Phi^\top \Phi + \lambda_t I}{1-\epsilon}.\]
		Therefore, $\left\|  \Phi^\top \Pi_t^\top \Pi_t \Phi \right\| = O(\| \Phi^\top \Phi \|)$. Now note that
		Algorithm~\ref{alg:outerloop} invokes the \textsc{RowSampler} primitive $T = \log \frac{\| \Phi \|_F^2}{\epsilon \lambda} = O(\log n)$ times. Thus, by Lemma~\ref{lem:rotatedrowsampler-GPK}, the runtime of finding $\Pi$ is 
		the sum of $O\left( \frac{q^{8} s_\lambda^2 n \log^4 n }{\epsilon^4} + \sqrt{ \frac{\left\|  \Phi^\top \Pi_t^\top \Pi_t \Phi \right\|}{\lambda_t} } q^3 n \log^3 n + n d \log^4 n \right)$ for all $t \in [T]$.
		Since $\lambda_t = 2^{T - t}\lambda$ has a geometric decay, the total time complexity is $O\left( \frac{q^{8} s_\lambda^2 n \log^5 n}{\epsilon^4} + \sqrt{ \frac{ \left\| \Phi^\top \Phi \right\|}{\lambda} } q^3 n \log^3 n + n d \log^5 n \right)$.
		
	\end{proofof}
	
	\subsection{Application to Gaussian Kernel}\label{appndx-applications-Gauss}
	In this section we show how to use Theorem~\ref{main-thrm-gpk} to spectrally approximate the Gaussian kernel matrix on a dataset with bounded radius. Specifically, we prove Corollary~\ref{main-theorem-Gaussian}:
	
	\begin{proofof}{Corollary~\ref{main-theorem-Gaussian}}
		Our approach is to show that there exists a GPK that tightly approximates the Gaussian kernel matrix and then invoke Theorem~\ref{main-thrm-gpk}.
		We start by letting $X \in \RR^{d \times n}$ be the matrix whose columns are data-points $x_1, \ldots x_n$. Also, let $q = \Theta \left(r + \log \frac{n}{\epsilon\lambda} \right)$ and define $\alpha \in \RR^{q+1}$ as $\alpha_j := 1/\sqrt{j!}$ for every $j=0,1, \ldots q$. Additionally, let $v \in \RR^n$ be defined as $v_i := e^{-\|x_i\|_2^2/2}$ for $i \in [n]$. 
		Now we define the GPK kernel matrix $\widetilde{K} \in \RR^{n \times n}$ corresponding to the above mentioned $q$, $X$, $\alpha$, and $v$, i.e., $\widetilde{K} := {\rm diag}(v) \left( \sum_{j =0}^q \alpha_j^2 \cdot X^{\otimes j \top} X^{\otimes j} \right) {\rm diag}(v)$. Also let $\widetilde{\Phi}$ be the feature matrix corresponding to $\widetilde{K}$ defined as per \eqref{feature-matrix-GPK}. Then by invoking Theorem~\ref{main-thrm-gpk} we can find a sampling matrix $\Pi$ in time $O\left( \frac{q^{8} s_\lambda^2 n \log^5 n}{\epsilon^4} + \sqrt{ \frac{\| K \|}{\lambda} } q^3 n\log^3n + nd \log^5 n \right) = \widetilde{O} \left( \frac{r^{8} s_\lambda^2 n}{\epsilon^4} + \sqrt{ \frac{\| K \|}{\lambda} } r^3 n + nd \right)$ such that with high probability in $n$,
		\[ \frac{\widetilde{K} + \lambda I}{1+\epsilon/3} \preceq \widetilde{\Phi}^\top \Pi^\top \Pi \widetilde{\Phi} + \lambda I \preceq \frac{\widetilde{K} + \lambda I}{1-\epsilon/3}.\]
		Now all that is left to do is to show that 
		\[ \frac{K + \lambda I}{1+\epsilon/3} \preceq \widetilde{K} + \lambda I \preceq \frac{K + \lambda I}{1-\epsilon/3}.\]
		To prove the above we note that since $K$ and $\widetilde{K}$ are PSD matrices, it suffices to prove $\left\| \widetilde{K} - K \right\| \le \frac{\epsilon \lambda}{4}$. The reason we have this bound is,
		\begin{align*}
			\left\| \widetilde{K} - K \right\|^2 &\le \left\| \widetilde{K} - K \right\|_F^2\\
			&= \sum_{i,j \in [n]} \left| \widetilde{K}_{i,j} - K_{i,j} \right|^2\\
			&= \sum_{i,j \in [n]} \left| \sum_{\ell =0}^q \langle x_i , x_j \rangle^\ell / \ell!  - e^{\langle x_i , x_j \rangle} \right|^2 \cdot e^{-\|x_i\|_2^2} \cdot e^{-\|x_j\|_2^2}\\
			&\le \sum_{i,j \in [n]} \left| \sum_{\ell =q+1}^\infty \langle x_i , x_j \rangle^\ell / \ell! \right|^2 \\
			&\le \sum_{i,j \in [n]} \left| \sum_{\ell =q+1}^\infty r^\ell / \ell! \right|^2 \\
			&\le \sum_{i,j \in [n]} \left| \frac{\epsilon \lambda}{4n} \right|^2 \\
			&= \frac{\epsilon^2 \lambda^2}{16}.
		\end{align*}
		This completes the proof and shows that,
		\[ \frac{{K} + \lambda I}{1+\epsilon} \preceq \widetilde{\Phi}^\top \Pi^\top \Pi \widetilde{\Phi} + \lambda I \preceq \frac{{K} + \lambda I}{1-\epsilon}.\]
		
	\end{proofof}

	\subsection{Application to Neural Tangent Kernel}\label{appndx-applications-ntk}
	In this section we show how to use Theorem~\ref{main-thrm-gpk} to spectrally approximate the kernel matrix corresponding to the NTK defined in \eqref{eq:def-ntk} on a dataset with bounded radius. Specifically, we prove Corollary~\ref{main-theorem-ntk}:
	
	\begin{proofof}{Corollary~\ref{main-theorem-ntk}}
		Our approach is to show that there exists a GPK that tightly approximates the NTK and then invoke Theorem~\ref{main-thrm-gpk}.
		We start by letting $X \in \RR^{d \times n}$ be the matrix whose columns are normalized data points $\frac{x_1}{\| x_1\|_2}, \ldots \frac{x_n}{\| x_n\|_2}$.
		Also let $v \in \RR^n$ be defined as the vector of norms $v_i := \| x_i \|_2$ for $i \in [n]$. Additionally, let $q = \Theta\left( \frac{n^2r^2}{\epsilon^2 \lambda^2} \right)$ and define the vector of coefficients $\alpha \in \RR^{2q+3}$ as follows for every $j=0,1, \ldots 2q+2$:
		\[ \alpha_j :=\begin{cases}
			\frac{1}{\pi} & \text{ if }j=0\\
			1 & \text{ if }j=1\\
			0 & \text{ if }j>1\text{ is odd}\\
			\frac{1}{\pi} \cdot \frac{(j+1) \cdot (j-2)!}{2^{j-2} ((j/2-1)!)^2 \cdot (j-1)\cdot j} & \text{ if }j>1\text{ is even}
		\end{cases}. \]
		
		Now we define the GPK kernel matrix $\widetilde{K} \in \RR^{n \times n}$ corresponding to the abovementioned $q$, $X$, $\alpha$, and $v$, i.e., $\widetilde{K} := {\rm diag}(v) \left( \sum_{j =0}^q \alpha_j^2 \cdot X^{\otimes j \top} X^{\otimes j} \right) {\rm diag}(v)$. Also let $\widetilde{\Phi}$ be the feature matrix corresponding to $\widetilde{K}$ defined as per \eqref{feature-matrix-GPK}. Then by invoking Theorem~\ref{main-thrm-gpk} and also noting that the definition of NTK in \eqref{eq:def-ntk} implies $\| K \| \le {\rm tr}(K) = 2n$, we can find a sampling matrix $\Pi$ in time $O\left( \frac{q^{8} s_\lambda^2 n \log^5 n}{\epsilon^4} + \sqrt{ \frac{\| K \|}{\lambda} } q^3 n\log^3n + nd \log^5 n \right) = \widetilde{O} \left(\left( \frac{nr}{\epsilon\lambda} \right)^{16} \frac{s_\lambda^2 n}{\epsilon^4} + nd \right)$ such that with high probability in $n$,
		\[ \frac{\widetilde{K} + \lambda I}{1+\epsilon/3} \preceq \widetilde{\Phi}^\top \Pi^\top \Pi \widetilde{\Phi} + \lambda I \preceq \frac{\widetilde{K} + \lambda I}{1-\epsilon/3}.\]
		Now all that is left to do is to show that 
		\[ \frac{K + \lambda I}{1+\epsilon/3} \preceq \widetilde{K} + \lambda I \preceq \frac{K + \lambda I}{1-\epsilon/3}.\]
		To prove the above we note that since $K$ and $\widetilde{K}$ are PSD matrices, it suffices to prove $\left\| \widetilde{K} - K \right\| \le \frac{\epsilon \lambda}{4}$. To prove this bound note that the Taylor series expansion of function $k_{{\tt ntk}}(\beta)$ defined in \eqref{eq:def-ntk} is the following,
		\[ k_{{\tt ntk}}(\beta) \equiv \frac{1}{\pi} + \beta + \frac{1}{\pi} \sum_{\ell=0}^\infty \frac{(2\ell+3) \cdot (2\ell)!}{2^{2\ell} (\ell!)^2 \cdot (2\ell +1)(2\ell+2)} \cdot \beta^{2\ell + 2}. \]
		Therefore, we can write
		\small
		\begin{align*}
			\left\| \widetilde{K} - K \right\|^2 &\le \left\| \widetilde{K} - K \right\|_F^2\\
			&= \sum_{i,j \in [n]} \left| \widetilde{K}_{i,j} - K_{i,j} \right|^2\\
			&= \sum_{i,j \in [n]} \left| \frac{1}{\pi} + \frac{\langle x_i , x_j \rangle}{\|x_i\|\|x_j\|} + \frac{1}{\pi} \sum_{\ell=0}^{q} \frac{(2\ell+3) \cdot (2\ell)!}{2^{2\ell} (\ell!)^2  (2\ell +1)(2\ell+2)}  \left( \frac{\langle x_i , x_j \rangle}{\|x_i\|\|x_j\|} \right)^{2\ell + 2} -  k_{{\tt ntk}}\left( \frac{\langle x_i , x_j \rangle}{\|x_i\|\|x_j\|} \right) \right|^2 \cdot \|x_i\|_2^2\|x_j\|_2^2 \\
			&= \sum_{i,j \in [n]} \left| \frac{1}{\pi} \sum_{\ell=q+1}^{\infty} \frac{(2\ell+3) \cdot (2\ell)!}{2^{2\ell} (\ell!)^2  (2\ell +1)(2\ell+2)}  \left( \frac{\langle x_i , x_j \rangle}{\|x_i\|\|x_j\|} \right)^{2\ell + 2} \right|^2 \cdot \|x_i\|_2^2\|x_j\|_2^2 \\
			&\le \sum_{i,j \in [n]} \left| \frac{1}{\pi} \sum_{\ell=q+1}^{\infty} \frac{(2\ell+3) \cdot (2\ell)!}{2^{2\ell} (\ell!)^2  (2\ell +1)(2\ell+2)} \right|^2 \cdot r^2 \\
			&= \frac{n^2 r^2}{\pi^2} \cdot \left| \sum_{\ell=q+1}^{\infty} \frac{(2\ell+3) \cdot (2\ell)!}{2^{2\ell} (\ell!)^2  (2\ell +1)(2\ell+2)} \right|^2 \\
			&\le \frac{n^2 r^2}{\pi^2} \cdot \left| \sum_{\ell=q+1}^{\infty} \frac{1}{2\ell^{3/2}} \right|^2 \\
			&\le \frac{n^2 r^2}{4\pi^2 q} \le \frac{\epsilon^2\lambda^2}{16}.
		\end{align*}
		\normalsize
		This completes the proof and shows that,
		\[ \frac{{K} + \lambda I}{1+\epsilon} \preceq \widetilde{\Phi}^\top \Pi^\top \Pi \widetilde{\Phi} + \lambda I \preceq \frac{{K} + \lambda I}{1-\epsilon}.\]

	\end{proofof}


\end{document}